%% file: example_paper.tex
\documentclass{article}

\usepackage{microtype}
\usepackage{graphicx}
\usepackage{subcaption}
\usepackage{booktabs} % for professional tables

\usepackage{amsfonts}
\usepackage{amssymb}
\usepackage{bm}
\usepackage{amsmath}
\usepackage{amsthm}
\usepackage{stmaryrd}
\usepackage{color}
\usepackage{xcolor}
\usepackage[subnum]{cases}
\usepackage{rotating}
\usepackage{enumitem}
\usepackage{accents}
\usepackage[title]{appendix}
\usepackage{mathtools}
\usepackage{caption}
\usepackage{url}
\usepackage{rotating}
\usepackage{framed}
\usepackage{lscape}
\usepackage{multirow}
\usepackage{latexsym}
\usepackage{mathrsfs}
\usepackage[new]{old-arrows}
\usepackage{array}
\usepackage{makecell}
\usepackage{float}
\usepackage{tabularray}% for long table
\usepackage{tabularx}
\usepackage{tablefootnote}
\usepackage{tikz}
\usepackage[all]{xy}
\usepackage{tikz-cd}
\usepackage[usestackEOL]{stackengine}
\usepackage{pdflscape}
\usepackage{autobreak}
\usepackage{comment}
\usepackage{booktabs}
\usepackage{graphicx}
\usepackage{subcaption}
\theoremstyle{plain}
\newtheorem{theorem}{Theorem}[section]

\newtheorem{lemma}[theorem]{Lemma}

\theoremstyle{definition}

\newtheorem{remark}[theorem]{Remark}

\usepackage{algorithmic}
\usepackage{algorithm}

\usepackage{adjustbox}
\usepackage{booktabs}
\usepackage{siunitx}
\usepackage{listings}
\usepackage{colortbl}
\usepackage{enumitem}

\usepackage[accepted]{icml2026}

\usepackage{amsmath}
\usepackage{amssymb}
\usepackage{mathtools}
\usepackage{amsthm}

\usepackage[textsize=tiny]{todonotes}

\usepackage{titletoc}

\newcommand\DoToC{%
  \startcontents
  \printcontents{}{1}{\textbf{Table of Contents}\vskip3pt\hrule\vskip5pt}
  \vskip3pt\hrule\vskip5pt
}

\usepackage{etoc}
\usepackage{hyperref}
\usepackage[capitalize,noabbrev]{cleveref}
\definecolor{refcolor}{HTML}{000099}

\hypersetup{ %
pdftitle={},
pdfauthor={},
pdfsubject={},
pdfkeywords={},
pdfborder=0 0 0,
pdfpagemode=UseNone,
colorlinks=true,
linkcolor=refcolor,
citecolor=refcolor,
filecolor=refcolor,
urlcolor=refcolor,    
pdfview=FitH}

\icmltitlerunning{Conservation Laws for Modern Neural Architectures}

\begin{document}

\twocolumn[
  \icmltitle{Conservation Laws for Modern Neural Architectures}

  % It is OKAY to include author information, even for blind submissions: the
  % style file will automatically remove it for you unless you've provided
  % the [accepted] option to the icml2026 package.

  % List of affiliations: The first argument should be a (short) identifier you
  % will use later to specify author affiliations Academic affiliations
  % should list Department, University, City, Region, Country Industry
  % affiliations should list Company, City, Region, Country

  % You can specify symbols, otherwise they are numbered in order. Ideally, you
  % should not use this facility. Affiliations will be numbered in order of
  % appearance and this is the preferred way.
  \icmlsetsymbol{equal}{*}

  \begin{icmlauthorlist}
    \icmlauthor{Viet-Hoang Tran}{equal,nus}
    \icmlauthor{Vinh Khanh Bui}{equal,cair}
    \icmlauthor{Tan Lai Ngoc}{idpr}
    \icmlauthor{Nam Nguyen}{hust}
    \icmlauthor{Tuan Dam}{hust}
    \icmlauthor{Tan M. Nguyen}{nus}
    %\icmlauthor{}{sch}
    %\icmlauthor{}{sch}
  \end{icmlauthorlist}

  \icmlaffiliation{nus}{National University of Singapore}
  \icmlaffiliation{cair}{Center for AI Research, VinUniversity}
  \icmlaffiliation{idpr}{Independent Researcher}
  \icmlaffiliation{hust}{Hanoi University of Science and Technology, Hanoi, Vietnam}
    % \icmlcorrespondingauthor{Firstname1 Lastname1}{first1.last1@xxx.edu}
  \icmlcorrespondingauthor{Viet-Hoang Tran}{hoang.tranviet@u.nus.edu}

  % You may provide any keywords that you find helpful for describing your
  % paper; these are used to populate the "keywords" metadata in the PDF but
  % will not be shown in the document
  \icmlkeywords{Machine Learning, ICML}

  \vskip 0.3in
]

% this must go after the closing bracket ] following \twocolumn[ ...

% This command actually creates the footnote in the first column listing the
% affiliations and the copyright notice. The command takes one argument, which
% is text to display at the start of the footnote. The \icmlEqualContribution
% command is standard text for equal contribution. Remove it (just {}) if you
% do not need this facility.

% Use ONE of the following lines. DO NOT remove the command.
% If you have no special notice, KEEP empty braces:
% \printAffiliationsAndNotice{}  % no special notice (required even if empty)
% Or, if applicable, use the standard equal contribution text:
\printAffiliationsAndNotice{\icmlEqualContribution}

%---------------------------------------------------------------------------------------------------------------------------------------------------------------------

\begin{abstract}
Understanding gradient descent dynamics is key to explaining the success of over-parameterized models, where implicit bias manifests through conservation laws in gradient flow. While such laws are well understood for linear and ReLU networks, they remain largely unexplored for modern architectures. This work develops a unified framework to characterize conservation laws for contemporary models, including feedforward networks with GELU, SiLU, and SwiGLU activations, multihead attention with sinusoidal and rotary positional encodings, and Mixture-of-Experts architectures under diverse gating designs. Our theoretical findings are supported by experiments that validate the predicted invariants.
\end{abstract}

%---------------------------------------------------------------------------------------------------------------------------------------------------------------------

\input{sections/introduction}

\input{sections/prelim}

\input{sections/main_conservation_laws_MLP}

\input{sections/main_conservation_laws_MHA}

\input{sections/main_conservation_laws_MoE}

\input{sections/main_experimental_results}

\input{sections/main_conclusion}

%---------------------------------------------------------------------------------------------------------------------------------------------------------------------

\clearpage

\section*{Acknowledgements}
This research / project is supported by the National Research Foundation Singapore under the AI Singapore Programme (AISG Award No: AISG2-TC-2023-012-SGIL). This research / project is supported by the Ministry of Education, Singapore, under the Academic Research Fund Tier 1 (FY2023) (A-8002040-00-00, A-8002039-00-00). This research / project is also supported by the
NUS Presidential Young Professorship Award (A-0009807-01-00), the NUS Artificial Intelligence Institute–Seed Funding (A-8003062-00-00), and the Cross Faculty Grant 2025, CFG25 - 012 (A-8004460-00-00). 

Tuan Dam is funded by Vietnam National Foundation for Science and Technology Development (NAFOSTED) under grant number 102.01-2025.47.
\section*{Impact Statement}

This paper presents work whose goal is to advance the field of machine learning. There are many potential societal consequences of our work, none of which we feel must be specifically highlighted here.

\bibliography{example_paper}
\bibliographystyle{icml2026}

%%%%%%%%%%%%%%%%%%%%%%%%%%%%%%%%%%%%%%%%%%%%%%%%%%%%%%%%%%%%%%%%%%%%%%%%%%%%%%%
%%%%%%%%%%%%%%%%%%%%%%%%%%%%%%%%%%%%%%%%%%%%%%%%%%%%%%%%%%%%%%%%%%%%%%%%%%%%%%%
% APPENDIX
%%%%%%%%%%%%%%%%%%%%%%%%%%%%%%%%%%%%%%%%%%%%%%%%%%%%%%%%%%%%%%%%%%%%%%%%%%%%%%%
%%%%%%%%%%%%%%%%%%%%%%%%%%%%%%%%%%%%%%%%%%%%%%%%%%%%%%%%%%%%%%%%%%%%%%%%%%%%%%%

\clearpage

\appendix

\onecolumn

\input{sections/table_of_notation}
\begin{center}
{\bf \Large{Appendix of ``Conservation Laws for Modern Neural Architectures''}}
\end{center}

\DoToC

\input{sections/appendix_theory_MLP}

\input{sections/appendix_theory_MHA}

\input{sections/appendix_theory_MoE}
\input{sections/appendix_hyperparams}
\input{sections/appendix_experimental_results}
% \input{sections/exp_draft}

%%%%%%%%%%%%%%%%%%%%%%%%%%%%%%%%%%%%%%%%%%%%%%%%%%%%%%%%%%%%%%%%%%%%%%%%%%%%%%%
%%%%%%%%%%%%%%%%%%%%%%%%%%%%%%%%%%%%%%%%%%%%%%%%%%%%%%%%%%%%%%%%%%%%%%%%%%%%%%%

\end{document}

%% file: sections/introduction.tex
\section{Introduction}

Conservation laws--quantities invariant along optimization trajectories--provide a principled lens on neural network training dynamics. By exposing geometric constraints inherent to both the architecture and the learning algorithm, these invariants shed light on the implicit bias of training, revealing properties that persist from initialization to convergence~\citep{saxe2013exact,bah2022learning,tarmoun2021understanding,min2021explicit}. They also play a central role in theoretical studies of convergence, stability, and optimization \citep{du2018algorithmic,arora2018convergence,chizat2020implicit,ji2018gradient}, and have inspired optimization methods that regulate these invariants to accelerate training \citep{saul2023weight,stock2018equinormalization}.

\textbf{Characterizing Conservation Laws.} While many conserved quantities are known~\citep{kunin2021neural,abbe2023transformers,zhang2025training}, rigorous \emph{completeness} results--showing that these exhaust all possible conservation laws--are still rare. \citet{marcotte2023abide} provided a framework to fully characterize smooth conservation laws in shallow linear or ReLU networks under additional assumptions on the loss and data. \citet{marcotte2024keep} extended this approach to non-Euclidean gradient flows (e.g., ICNN, NMF) and momentum dynamics, revealing new invariants and establishing completeness. More recently, \citet{marcotte2025transformative} extended this framework to ResNets~\citep{he2016deep} and Transformers~\citep{vaswani2017attention}. However, their analysis of attention was limited to the single-head setting. Although several conserved quantities were identified for multi-head attention, the authors emphasized that a complete characterization in the multi-head case remains an open problem. This suggests that conservation laws for more sophisticated neural architectures are still far from being fully understood.

\textbf{Feedforward Networks.} Modern feedforward networks have moved well beyond classical activations such as sigmoid, tanh, and even ReLU. Although ReLU \citep{nair2010rectified} was long favored for alleviating vanishing gradients, its non-smoothness and dead-neuron behavior have driven the adoption of smoother alternatives. GELU \citep{hendrycks2016gaussian} became standard in Transformers such as BERT \citep{devlin2019bert}, while SiLU/Swish \citep{ramachandran2017searching,elfwing2018sigmoid} often outperforms ReLU in large models. More recently, gated activations have emerged as the dominant choice: extending GLUs \citep{dauphin2017language}, SwiGLU blocks \citep{shazeer2020glu} significantly improve Transformer scaling and are now widely used in modern LLMs such as LLaMA \citep{touvron2023llama}.

\textbf{Attention Mechanism and Positional Encoding.}
The attention mechanism was originally proposed to allow selective focus on relevant input portions in sequence-to-sequence architectures~\citep{luong2015effective}, before \citet{vaswani2017attention} positioned multi-head attention (MHA) as the cornerstone of Transformers, enabling models to capture diverse interaction patterns through parallel attention subspaces. Given that attention operations are inherently permutation-equivariant, positional encodings (PE) are necessary to inject sequential order information~\citep{vaswani2017attention}. The original Transformer employed sinusoidal positional encodings~\citep{vaswani2017attention}, computing position-dependent representations via trigonometric functions at different frequencies, which became widely adopted in seminal architectures such as Transformer-XL~\citep{dai2019transformer}, and DETR~\citep{zhu2020deformable}. A notable recent advancement is \emph{Rotary Positional Embedding} (RoPE)~\citep{su2024roformer}, whose widespread adoption across leading contemporary models~\citep{touvron2023llama,chowdhery2023palm,nijkamp2022codegen,liu2024deepseek,guo2025deepseek,agarwal2025gpt,bai2025qwen2,yang2025qwen3} highlights its strong scalability and robustness in practice.

\textbf{Mixture-of-Experts.}
Mixture-of-Experts (MoE) architectures were originally introduced to enable
ensemble learning through specialized subnetworks~\citep{jacobs1991adaptive,jordan1994hierarchical}.
Early Dense MoE models combine all experts via a gating network, but their
computational cost becomes prohibitive at scale. This motivated Sparse MoE
(SMoE)~\citep{shazeer2017outrageously}, which activates only a small subset of
experts per token via Top-$k$ routing, greatly reducing inference cost while
maintaining model capacity. 
The gating mechanism, traditionally implemented using softmax functions to produce normalized expert weights~\citep{shazeer2017outrageously,fedus2022switch}, has been complemented by alternative designs including sigmoid-based gating~\citep{lewis2021base,dai2022stablemoe} that offers greater routing flexibility and load balancing properties. Recent work has further refined these approaches through normalized sigmoid gating~\citep{liu2024deepseekv3}. MoE architectures power numerous state-of-the-art LLMs~\citep{fedus2022switch,hui2024qwen2,jiang2024mixtral,agarwal2025gpt,liu2024deepseekv3,yang2025qwen3}, validating their effectiveness as a scalable paradigm for modern neural architectures.

\textbf{Contribution.}
Motivated by the problem of characterizing conservation laws in 
training dynamics, this paper develops a unified theoretical framework and
establishes complete characterizations of conservation laws for modern
architectures widely used in deep learning. Our results are supported by rigorous
proofs and cover key building blocks of contemporary models. The paper is
organized as follows:
\begin{enumerate}[leftmargin=14pt, topsep=1pt]
    \item Section~\ref{sec:previous_work} reviews the conservation-law
characterization framework developed in
\citet{marcotte2023abide,marcotte2025transformative}. We also discuss the
rationale behind their assumptions and explain how these conditions facilitate
a complete characterization of conserved quantities.
    \item Section~\ref{sec:charac_pre} develops our perspective on the problem of characterizing
conservation laws and outline a general solution strategy for obtaining a
complete description of conserved quantities for a given model. This
discussion leads to the conclusion that identifying conservation laws can be
formulated as solving an associated system of partial differential equations.
We also include toy examples to build intuition before guiding the
reader toward the more technical developments in later sections.
    \item Section~\ref{sec:4main} presents our main results on the complete
characterization of conservation laws.
We begin with feedforward networks equipped with GeLU, SiLU, and SwiGLU
activations in Section~\ref{sec:sub4.1}. Next, in Section~\ref{sec:sub4.2}, we
establish our characterization for multihead attention, resolving the open
problem posed by \citet{marcotte2025transformative}, and further extend the
analysis to settings incorporating positional encodings. Finally, in
Section~\ref{sec:sub4.3}, we study Mixture-of-Experts architectures under various
gating designs, including dense and sparse regimes as well as sigmoid-based
gating mechanisms.
    \item Section~\ref{main:sec:Discussion} provides a high-level sketch of the main ideas underlying the proofs of our results, highlighting the key techniques, limitations, and broader implications.
\item Section~\ref{sec:experiments} reports experiments validating our theoretical findings on conservation laws across all considered architectures, using both full-batch and SGD training on small and large scale multimodal datasets.

\end{enumerate}

Theoretical proofs and additional experimental details are provided in the Appendix.

%% file: sections/prelim.tex
\section{Previous Work on Conservation Laws}
\label{sec:previous_work}

We adopt the problem setting of \citet{marcotte2023abide,marcotte2025transformative}. 
For a differentiable map $F \colon \mathbb{R}^m \to \mathbb{R}^n$, and for a vector-valued input component $A \in \mathbb{R}^{n'}$, we denote by 
$\nabla_A F \in \mathbb{R}^{n \times n'}$ the gradient of $F$ with respect to $A$. 
If $a \in \mathbb{R}$ is a scalar variable of $F$, we similarly write 
$\partial_a F = \nabla_a F \in \mathbb{R}^n$ for the corresponding partial derivative.

\textbf{Formal Definition of Conservation Laws.} 
Consider a model $f(\cdot;\theta)\colon \mathcal{X}\to \mathcal{Y}$ parameterized by 
$\theta \in \Theta = \mathbb{R}^D$, where $D$ denotes the total number of parameters. 
In this work, we primarily focus on regression settings in which 
$\mathcal{X}=\mathbb{R}^{d_{\text{in}}}$ and $\mathcal{Y}=\mathbb{R}^{d_{\text{out}}}$.  Given a dataset $\mathcal{D}=\{(x_i,y_i)\}_{i\in[N]} \subset \mathcal{X}\times\mathcal{Y}$ 
and a loss function $\ell\colon \mathcal{Y}\times\mathcal{Y}\to \mathbb{R}_{\ge 0}$, 
training the model amounts to minimizing the cost 
\begin{align*}
    L_\mathcal{D}(\theta) = 1/N \cdot \textstyle\sum_{i=1}^N \ell\big(f(x_i;\theta),y_i\big),
\end{align*}
with respect to $\theta \in \Theta$. In practice, the training dynamics are often modeled by the Euclidean gradient flow
\begin{align} \label{main:gradient_update}
    \dot{\theta}(t) = -\nabla L_\mathcal{D}\big(\theta(t)\big), \quad \theta(0) \in \Theta.
\end{align} 
A function $h \colon \Theta \to \mathbb{R}$ is called a \emph{conservation law} for $f$ if it remains invariant along training trajectories. More precisely, for every solution $\theta(\cdot)$ of the ordinary differential equation \eqref{main:gradient_update} with any initialization $\theta(0)\in\Theta$, one has
\begin{align*}
        h(\theta(t)) = h(\theta(0)), \quad \text{for all } t\ge 0.
\end{align*}
\begin{remark}
In practice, training often includes weight decay. However, \citet{marcotte2025transformative} showed that conserved functions correspond with and without weight decay. Thus, we omit weight decay and focus on the flow ODE \eqref{main:gradient_update}.

\end{remark}

\textbf{Intractability.} 
While the definition of a conservation law $h$ is conceptually clear, characterizing \emph{all} such functions is highly challenging. The main difficulty is that the training objective depends on an unknown dataset $\mathcal{D}$, making it impossible to formulate $L_\mathcal{D}$ in a universal way and rendering the problem intractable in full generality. To address this issue, \citet{marcotte2023abide} proposed a weaker yet more tractable notion of conservation laws. Specifically, they define $h$ to be a conservation law for $f$ if $h(\theta(t)) = h(\theta(0))$ for every solution $\theta(\cdot)$ of the ODE \eqref{main:gradient_update}, for any initialization, and for \emph{any dataset} $\mathcal{D}\subset \mathcal{X}\times\mathcal{Y}$. They further restrict attention to functions $h\in \mathcal{C}^1(\Theta,\mathbb{R})$, which leads to the following orthogonality characterization for $\mathcal{C}^1$ conservation laws. Assume that $\ell(z,y)$ is $\mathcal{C}^2$-differentiable in $z$ for each $y\in\mathcal{Y}$. Then $h$ is a conservation law for $f$ if and only if $\nabla_\theta h(\theta) \perp \mathcal{W}^{f,\ell}_\theta$ for all $ \theta\in\Theta$, where
\begin{align*}
    \mathcal{W}^{f,\ell}_\theta 
    \coloneqq 
    \underset{(x,y)\in \mathcal{X}_\theta \times \mathcal{Y}}{\textnormal{span}}
    \Big\{
        \nabla_\theta f(x;\theta)\cdot 
        \nabla_z \ell\big(f(x;\theta),y\big)
    \Big\}
    \subseteq \mathbb{R}^D.
\end{align*}
Note that $\nabla_\theta f(x;\theta)\in \mathbb{R}^{D\times d_{\text{out}}}$ and 
$\nabla_z \ell(z,y)\in \mathbb{R}^{d_{\text{out}}}$, so their product lies in $\mathbb{R}^D$.  $\mathcal{X}_\theta$ denotes the set of $x\in \mathbb{R}^{d_{\text{in}}}$ such that 
$f(x;\cdot)$ is $\mathcal{C}^2$-differentiable in a neighborhood of $\theta$.

Following \citet{marcotte2025transformative}, we impose a structural assumption on the loss $\ell$.
For each $z\in \mathbb{R}^{d_{\text{out}}}$, define
\begin{align*}
    \mathcal{V}_\ell(z)
    \coloneqq 
    \operatorname{span}_{y}\,
    \nabla_z \ell(z,y)
    \subseteq 
    \mathbb{R}^{d_{\text{out}}}.
\end{align*}
They assume that $\mathcal{V}_\ell(z)$ is independent of $z$, so that one may simply write
$\mathcal{V}_\ell(z)=\mathcal{V}_\ell$ for all $z$. Under this assumption, it follows that for every
$\theta\in\Theta$, one has
\begin{align*}
    \mathcal{W}^{f,\ell}_\theta
    =
    \underset{x \in \mathcal{X}_\theta,\; v \in \mathcal{V}_\ell}{\textnormal{span}}
    \left\{
        \nabla_\theta f(x;\theta)\cdot v
    \right\}
    \subseteq 
    \mathbb{R}^D.
\end{align*}
\begin{remark}\label{remark_1}
Although assumptions such as $\mathcal{C}^1/\mathcal{C}^2$ regularity and the independence of $\mathcal{V}_\ell(z)$ are mild in practice, requiring conservation laws to hold for \textit{any dataset} may appear overly  restrictive.  However, allowing dataset dependence introduces the challenge of imposing meaningful constraints without rendering the problem ill-posed or trivial. Dataset-independence is therefore best viewed as a principled design choice, isolating the implicit bias induced solely by the architecture and optimization dynamics.
\end{remark}

\section{Characterization of Conservation Laws}
\label{sec:charac_pre}

Under the setting in Section~\ref{sec:previous_work}, \citet{marcotte2023abide,marcotte2025transformative} pursued their analysis through a Lie-theoretic framework. However, in most of their analysis, $\mathcal{V}_\ell$ is taken to be $\mathbb{R}^{d_{\text{out}}}$, as this assumption holds for a broad class of commonly used regression losses. Throughout the remainder of this paper, we adopt the assumptions introduced in Section~\ref{sec:previous_work}, and in particular we also assume that $\mathcal{V}_\ell = \mathbb{R}^{d_{\text{out}}}$. However, our approach differs in spirit. Rather than relying on Lie-theoretic machinery, we return to first principles and ask:

\begin{centering}
\textit{What does it actually mean to characterize \\ ~~~~~~~~~~~~~~~all conservation laws for a given model?}    
\end{centering}

One way to interpret this question is as follows. Characterizing all conservation laws amounts to identifying all $\mathcal{C}^1$ functions $h \colon \Theta \to \mathbb{R}$ such that
\begin{align*}
    \nabla_\theta h(\theta) \perp \underset{x \in \mathcal{X}_\theta,~ v \in \mathbb{R}^{d_\text{out}}}{\textnormal{span}}
\left\{ \nabla_\theta f(x;\theta) \cdot v \right\}.
\end{align*}
Equivalently, for every $\theta \in \Theta$, $x \in \mathcal{X}_\theta$, and $v \in \mathbb{R}^{d_\text{out}}$, 
\begin{align*}
    \left \langle  \nabla_\theta h(\theta) , \nabla_\theta f(x;\theta) \cdot v \right \rangle = 0,
\end{align*}
where $\langle \cdot,\cdot \rangle$ denotes the dot product. Writing $f = (f_1,\ldots,f_{d_\text{out}})$, the condition above becomes: For all 
$i \in [d_\text{out}]$, $\theta \in \Theta$, and $x \in \mathcal{X}_\theta$, one has
\begin{align} \label{main:eq_8}
    \left\langle \nabla_\theta h(\theta), \nabla_\theta f_i(x;\theta) \right\rangle = 0.
\end{align}
Since $h$ is defined solely on the parameter space $\Theta$, whereas \cref{main:eq_8} depends explicitly on the input $x$, characterizing all such functions $h$ can be viewed as reducing \cref{main:eq_8} to constraints on $h$ that are \emph{independent of $x$}. These resulting constraints typically take the form of a partial differential equation (PDE) for $h$, whose solvability depends on the structural complexity of the underlying architecture.

In general, such PDE systems need not admit closed-form or tractable solutions. Therefore, we regard the characterization problem as successful if \cref{main:eq_8} can be simplified into $x$-independent conditions that fully determine $h$. As we will show in the subsequent sections, for all architectures considered in this paper, this reduction leads to a particularly clean and explicit description of the conservation laws.

\textbf{A Toy Example.}  Consider the model $f(x;a,b) = abx$, where $x \in \mathbb{R}$ and $(a,b) \in \mathbb{R}^2$ are the parameters. 
A conservation law $h \colon \mathbb{R}^2 \to \mathbb{R}$ for this model must satisfy \cref{main:eq_8}, which in this case becomes
\begin{align*}
    0 &= \big \langle (\partial_a h(a,b) , \partial_b h(a,b)), (bx, ax) \big \rangle \notag \\& \hspace{50pt} = \partial_a h(a,b) \cdot bx + \partial_b h(a,b) \cdot ax,
\end{align*}
Cancelling $x$ yields the $x$-independent constraint
\begin{align*}
    0 = \partial_a h(a,b) \cdot b + \partial_b h(a,b) \cdot a, ~\text{for all $(a,b) \in \mathbb{R}^2$.}
\end{align*}
To solve this first-order PDE, consider the characteristic curve $\gamma(t)=(a(t),b(t))$ satisfying $\dot a=b$ and $\dot b=a$. By the chain rule, we obtain
\begin{align*}
    \frac{d}{dt}h(\gamma(t))
= \partial_a h\cdot\dot a + \partial_b h\cdot \dot b
= \partial_a h\cdot b + \partial_b h\cdot a
=0.
\end{align*}
Therefore, $h(\gamma(t))$ remains constant along every characteristic curve $\gamma$. Moreover, along such a curve, we have
\begin{align*}
    \frac{d}{dt}(a^2-b^2)
=2a\dot a-2b\dot b
=2ab-2ba
=0.
\end{align*}
Thus, $a^2-b^2$ is an invariant of the characteristic flow. It follows that $h$ must be constant on each connected component of the level sets $\{a^2-b^2=c\}$ for $c \in \mathbb{R}^2$. 
\begin{remark}
For readers who may wonder, this constancy condition does not imply that $h$ can
necessarily be expressed as a function of $a^2-b^2$, even when $h$ is
$\mathcal{C}^1$. The reason is that the level set $\{a^2-b^2=c\}$ is generally
disconnected. Figure~\ref{fig:level_sets_disconnected} illustrates this
phenomenon.
\end{remark}
Nevertheless, this example illustrates that identifying such a characteristic quantity--here $a^2-b^2$--from the resulting $x$-independent constraints is often sufficient to capture the essential structure of $h$, without requiring a full analysis of finer details (such as constancy on individual connected components). In this sense, $a^2-b^2$ can be viewed as a \emph{characteristic invariant}, associated with the model $f$.

The solution strategy developed in this section--namely, reducing the conservation law condition to model-input-independent constraints on $h$ and identifying the resulting characteristic invariants--will serve as the core framework for our analysis in the subsequent sections.

\begin{figure}
    \includegraphics[width=0.91\linewidth]{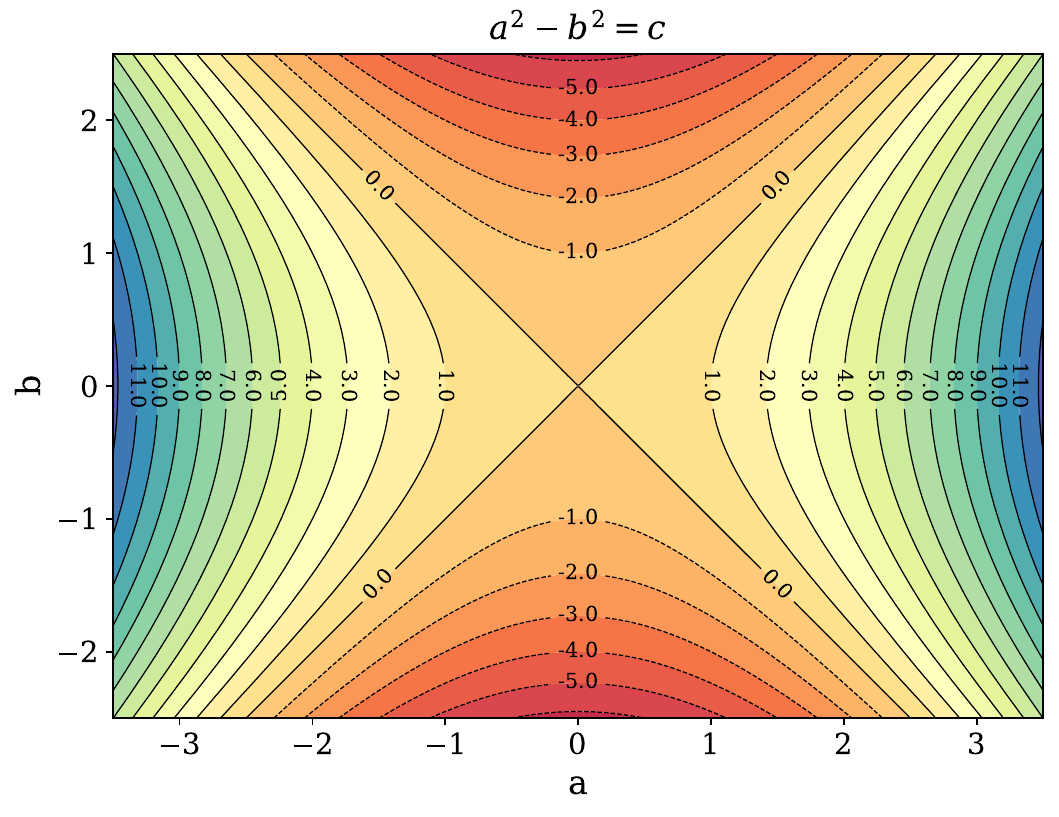}
    \vspace{-7pt}
    \caption{The disconnectedness of the level set $\{a^2-b^2=c\}$.}
    \label{fig:level_sets_disconnected}
    \vspace{-13pt}
\end{figure}

%% file: sections/main_conservation_laws_MLP.tex
\section{Conservation Laws Characterization  for Modern Neural Architectures}
\label{sec:4main}
This section presents our main results on the complete characterization of conservation laws for several modern neural architectures. 
We begin with feedforward networks (FFNs), focusing on widely used activation functions such as GeLU, SiLU, and SwiGLU. 
We then investigate the attention mechanism, emphasizing the multihead setting and the influence of positional encodings. 
Finally, we study Mixture-of-Experts (MoE) models, covering both classical softmax gating and more recent alternatives, as well as both dense and sparse gating regimes.

\subsection{Conservation Laws for Feedforward Networks}
\label{sec:sub4.1}
Since most FFN components used in practice employ only a single hidden layer, we restrict our analysis to this setting.

\textbf{Feedforward Networks with \textnormal{GELU} and \textnormal{SiLU} Activations.} 
An FFN with GELU activation is a map 
$f \colon \mathbb{R}^d \to \mathbb{R}^d$ parameterized by weight matrices 
$A \in \mathbb{R}^{d \times d_1}$ and $B \in \mathbb{R}^{d_1 \times d}$, defined by
\begin{align}\label{main:eq_1}
    f(x;A,B) = A \cdot \textnormal{GELU}(Bx).
\end{align}
Here, the GELU activation is given by $\textnormal{GELU}(z) = z\,\Phi(z)$, where $\Phi$ denotes the standard normal CDF,
\begin{align*}
    \Phi(z) = \frac{1}{\sqrt{2\pi}} \int_{-\infty}^{z} e^{-t^{2}/2}\,dt.
\end{align*}

Similarly, an FFN with SiLU activation is defined in the same form as \cref{main:eq_1}, with GELU replaced by the SiLU activation 
$\textnormal{SiLU}(z) = z\,\sigma(z)$, where
\begin{align*}
    \sigma(z) = \frac{1}{1+e^{-z}} ~\text{ is the sigmoid function.}
\end{align*}
The parameter spaces of these networks are given by
\begin{align*}
    \Theta_{\text{GELU}} = \Theta_{\text{SiLU}} 
    \coloneqq \mathbb{R}^{d\times d_1} \times \mathbb{R}^{d_1 \times d}.
\end{align*}
We now present our characterization of conservation laws for these architectures. The proof of the following Theorem~\ref{main:theorem_gelu_silu_mlps} is provided in Appendices~\ref{appendix:theory_gelu_ffn} and~\ref{appendix:theory_silu_ffn}.

\begin{theorem}[Conservation laws for FFNs with \textup{GELU} or \textup{SiLU}]
\label{main:theorem_gelu_silu_mlps}
All conservation laws for FFNs with \textup{GELU} or \textup{SiLU} activations are constant functions.
\end{theorem}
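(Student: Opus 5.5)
We will show that $\nabla_\theta h(\theta)=0$ for every $\theta$. Since $\Theta$ is connected, this forces $h$ to be constant. Note that GELU and SiLU are real-analytic, so $\mathcal{X}_\theta=\mathbb{R}^d$ for every $\theta$.

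\textbf{Writing out the constraint.} Write $\rho$ for the activation, $b_j^\top$ for the $j$-th row of $B$, and $a_{ij}$ for the entries of $A$. Then $f_i(x)=\sum_j a_{ij}\rho(b_j^\top x)$, and
\begin{align*}
\partial_{a_{kj}} f_i=\delta_{ik}\,\rho(b_j^\top x),\qquad \nabla_{b_j} f_i=a_{ij}\,\rho'(b_j^\top x)\,x .
\end{align*}
Set $G=\nabla_A h$ and $H=\nabla_B h$, with rows $H_j$. Condition \cref{main:eq_8} then reads: for every $i$ and every $x\in\mathbb{R}^d$,
\begin{align*}
\sum_{j=1}^{d_1}\Big( G_{ij}\,\rho(b_j^\top x)+a_{ij}\,\rho'(b_j^\top x)\,H_j^\top x\Big)=0 .
\end{align*}

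\textbf{Reducing to one dimension.} Because the gradients are continuous, it suffices to prove $G=H=0$ on a dense open set of $\theta$. Take the generic set where the rows $b_j$ are nonzero and pairwise non-parallel (so $b_j\neq\pm b_k$ for $j\neq k$), and moreover pairwise linearly independent. Restrict the identity to a line $x=sw+u$. On this line each term becomes $\rho(\alpha_j s+\beta_j)$ or $s\,\rho'(\alpha_j s+\beta_j)$, up to affine factors, where $\alpha_j=b_j^\top w$.

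\textbf{Linear independence of the resulting functions.} The key lemma is that the family
\begin{align*}
\{\rho(\alpha_j s+\beta_j),\ \rho'(\alpha_j s+\beta_j),\ s\,\rho'(\alpha_j s+\beta_j)\}
\end{align*}
is linearly independent when the pairs $(\alpha_j,\beta_j)$ are distinct and satisfy $\alpha_j\neq\pm\alpha_k$. We will prove this by analytic continuation to complex $s$.
\begin{itemize}
\item For SiLU, $\sigma$ has poles at $z=i\pi(2m+1)$. Hence $\rho$ and $\rho'$ have poles, of orders $1$ and $2$, located at $s=(i\pi(2m+1)-\beta_j)/\alpha_j$.
\item Choosing $w,u$ generically makes these pole sets disjoint across different $j$.
\item Taking the principal part at a pole of index $j$ isolates neuron $j$. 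The order-$2$ coefficient gives one linear relation, and the order-$1$ coefficient gives another. Using infinitely many poles $m$, the $s$-dependence of these relations forces $G_{ij}=0$ and $a_{ij}H_j^\top x=0$ for all $x$.
\item For GELU, $\Phi$ is entire, so there are no poles. Instead we use the asymptotic behaviour along complex rays. In the sector where $\mathrm{Re}(z^2)<0$, $\rho(z)\sim z\,\Phi(z)$ grows like $e^{-z^2/2}$ times a polynomial. Different $\alpha_j^2$ (true since $\alpha_j\neq\pm\alpha_k$ generically) give distinct Gaussian growth rates $e^{-\alpha_j^2 s^2/2}$. Peeling off the dominant one at a time, and comparing polynomial prefactors inside each, yields independence.
\end{itemize}
Alternatively, in the real domain one can use $\rho(z)-z\mathbf{1}_{z>0}$ decaying like $e^{-z^2/2}$ (respectively $e^{-|z|}$ for SiLU). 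This splits off the piecewise-linear part, and the exponential corrections are then separated by their rates.

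\textbf{Conclusion.} From the lemma we get $G_{ij}=0$ for all $i,j$. We also get $a_{ij}H_j=0$. On the generic set every column of $A$ is nonzero, so some $a_{ij}\neq0$ and hence $H_j=0$. Density of the generic set and continuity of $\nabla h$ then give $\nabla h\equiv0$ everywhere.

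The main obstacle is the independence lemma, especially for GELU, where there are no singularities to exploit. The sign symmetry $\alpha\mapsto-\alpha$ must also be handled. GELU satisfies $\rho(z)-\rho(-z)=z$, so $\rho(\alpha s)$ and $\rho(-\alpha s)$ differ only by a linear term. This is why we restrict to $b_j\neq\pm b_k$, and why we check that the residual linear terms, which arise from $\rho(z)\approx z$ as $z\to+\infty$, are absorbed: they contribute $\sum_j(G_{ij}b_j+a_{ij}H_j)^\top x\,\mathbf 1_{b_j^\top x>0}$. Separating the regions where different neurons are active should make these contributions vanish individually as well.
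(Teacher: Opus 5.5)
Your argument is correct in outline, but it takes a different route from the paper.

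\textbf{The paper's route.} The paper restricts only to coordinate lines through the origin, $x=te_k$. The identity then becomes $\sum_j G_{ij}\rho(B_{jk}t)+a_{ij}H_{jk}\,t\rho'(B_{jk}t)=0$. It proves independence of $\{\rho(\alpha_j t),\,t\rho'(\alpha_j t)\}_j$, for $\alpha_j^2$ distinct and nonzero, purely \emph{locally at $t=0$}:
\begin{itemize}
\item All even Taylor coefficients of order at least $2$ of GELU and of SiLU are nonzero (Bernoulli numbers for SiLU, inside the disc of radius $\pi$).
\item Matching the coefficient of $t^{2m}$ gives $\sum_j(u_j+mw_j)\gamma_j^m=0$ for all $m\ge1$, with $\gamma_j=\alpha_j^2$.
\item The generating function $\sum_j u_j\gamma_jz/(1-\gamma_jz)+w_j\gamma_jz/(1-\gamma_jz)^2$ has distinct poles $1/\gamma_j$, which forces every coefficient to vanish.
\end{itemize}
This handles both activations with one elementary argument. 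It needs only the weak genericity ``$B_{jk}^2$ distinct and nonzero within each column'', which is available for every $d$.

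\textbf{Your route.} You instead use the \emph{global} analytic structure. For SiLU these are the poles of $\sigma$ at $i\pi(2m+1)$. For GELU it is the sector asymptotics $\Phi(z)\sim-\phi(z)/z$, with distinct Gaussian rates $e^{\alpha_j^2r^2/2}$ along $s=ir$. This is the same pole/growth-order mechanism the paper later uses for MHA and MoE. It would also survive for activations whose Taylor coefficients vanish in an unfavourable pattern. The price is a separate analysis per activation and stronger genericity assumptions.

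\textbf{Small repairs.}
\begin{itemize}
\item The offset $u$ buys nothing. Lines through the origin already isolate each $H_{jk}$, and they avoid the extra $\rho'$ family.
\item Your lemma needs $\alpha_j\neq0$ as a hypothesis, since otherwise $\rho(\beta_j)$ and $\rho'(\beta_j)$ are both constants.
\item Your generic set of pairwise linearly independent rows is empty when $d=1<d_1$. In that case take $u=0$. For SiLU the pole sets $\{i\pi(2m+1)/\alpha_j\}$ may then overlap. However, the pole $\pm i\pi/|\alpha_j|$ of the neuron with largest $|\alpha_j|$ is shared with no other neuron. That single pole already kills both of its coefficients, so you can peel neurons off in decreasing order of $|\alpha_j|$. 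Alternatively, just use the Taylor argument.
\end{itemize}
Once the lemma is proved this way, the worry in your last paragraph about residual piecewise-linear terms is moot.
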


\textbf{Feedforward Networks with SwiGLU Activation.}
An FFN with SwiGLU activation is a map
$f \colon \mathbb{R}^d \to \mathbb{R}^d$ parameterized by
$A \in \mathbb{R}^{d \times d_1}$ and $B, C \in \mathbb{R}^{d_1 \times d}$, given by
\begin{align} \label{main:eq_2}
    f(x;A,B,C)
    = A \cdot \big(\textup{SiLU}(Bx) \odot (Cx)\big),
\end{align}
where $\odot$ denotes the Hadamard product.
The associated parameter space of this architecture is
\begin{align*} 
    \Theta_{\text{SwiGLU}}
    \coloneqq
    \mathbb{R}^{d\times d_1}
    \times \mathbb{R}^{d_1 \times d}
    \times \mathbb{R}^{d_1 \times d}.
\end{align*}
The following result provides a complete characterization of conservation laws for
SwiGLU feedforward networks. Unlike the GeLU and SiLU cases, the SwiGLU architecture
exhibits a distinct structural form, in which the multiplicative interaction between
the parameter matrices $A$ and $C$ gives rise to nontrivial conservation laws.
\begin{theorem}[Conservation laws for FFNs with SwiGLU]\label{main:theorem_swiglu_mlps}
    Let $h\colon\Theta_{\textup{SwiGLU}} \to \mathbb{R}$ be a conservation law for FFNs with \textup{SwiGLU}  defined as in \cref{main:eq_2}. Then, for all $i \in [d_1]$,
    \begin{align*}
        \nabla_{B} h = 0, \text{ and } ~ ~ \nabla_{A_{:,i}}h \cdot C_{i,\colon} + A_{:,i} \cdot \nabla_{C_{i,\colon}}h = 0.
    \end{align*}
     Consequently, $h$ is constant on each connected component of the level sets
determined by the invariants $\|A_{:,i}\|^2 - \|C_{i,\colon}\|^2, i \in [d_1]$.
\end{theorem}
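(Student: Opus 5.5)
We start from the orthogonality condition \cref{main:eq_8}: for each output coordinate $k\in[d]$ and all $x\in\mathbb{R}^d$ (SwiGLU is smooth, so $\mathcal{X}_\theta=\mathbb{R}^d$),
\begin{align*}
\sum_{i=1}^{d_1}\Big[\partial_{A_{k,i}}h\,\textup{SiLU}(B_{i,:}x)(C_{i,:}x) + A_{k,i}\,\textup{SiLU}'(B_{i,:}x)(C_{i,:}x)\,\langle \nabla_{B_{i,:}}h,x\rangle + A_{k,i}\,\textup{SiLU}(B_{i,:}x)\,\langle\nabla_{C_{i,:}}h,x\rangle\Big]=0 .
\end{align*}
Since the gradient of $h$ does not depend on $x$, we treat this as an identity in $x$. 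We then extract $x$-independent constraints by a functional-independence argument.

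We first work on the dense open set of parameters where the rows $B_{i,:}$ are nonzero and pairwise non-parallel, and all $A_{:,i}$ and $C_{i,:}$ are nonzero. Then we extend by continuity of $\nabla h$. Restrict $x$ to a line $x=x_0+s u$ with $u$ generic, so that $z_i=B_{i,:}x$ are affine functions of $s$ with distinct slopes. Each summand becomes a product of a polynomial in $s$ of degree at most $2$ with $\textup{SiLU}(z_i)$ or $\textup{SiLU}'(z_i)$. We use the asymptotics $\sigma(z)=1-e^{-z}+O(e^{-2z})$ as $z\to+\infty$ and its analogue as $z\to-\infty$. The exponentials $e^{-\beta_i s}$ with distinct rates $\beta_i$ are linearly independent over polynomials, so the $e^{-z_i}$-correction terms of each neuron must vanish separately. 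This works because the derivative $\sigma'$ also produces terms like $z e^{-z}$.

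From these correction terms we obtain, for each $i$,
\begin{align*}
\partial_{A_{k,i}}h\,(C_{i,:}x)\,z_i + A_{k,i}\big[\langle\nabla_{B_{i,:}}h,x\rangle (C_{i,:}x)(\textup{polynomial in } z_i) + z_i\langle\nabla_{C_{i,:}}h,x\rangle\big]=0 .
\end{align*}
Separating the different powers of $z_i$ that come from $\textup{SiLU}'(z)=\sigma(z)+z\sigma'(z)$, we expect the $z\sigma'(z)$ piece to isolate
\[
A_{k,i}\,\langle\nabla_{B_{i,:}}h,x\rangle\,(C_{i,:}x)=0 .
\]
Choosing $k$ with $A_{k,i}\neq 0$ and $x$ with $C_{i,:}x\neq0$ gives $\nabla_{B_{i,:}}h=0$. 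With the $B$-terms removed, the remaining identity reads
\[
\textup{SiLU}(z_i)\big[\partial_{A_{k,i}}h\,(C_{i,:}x) + A_{k,i}\langle\nabla_{C_{i,:}}h,x\rangle\big]=0
\]
for every neuron separately. Hence, for all $x$,
\[
\partial_{A_{k,i}}h\, C_{i,:} + A_{k,i}\nabla_{C_{i,:}}h = 0 ,
\]
as a row vector in $\mathbb{R}^{1\times d}$. Stacking over $k$ gives the stated matrix identity $\nabla_{A_{:,i}}h\cdot C_{i,:}+A_{:,i}\cdot\nabla_{C_{i,:}}h=0$. Both identities then extend to all of $\Theta_{\textup{SwiGLU}}$ by density and continuity.

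For the final claim we argue exactly as in the toy example. The constraints say $\nabla h$ is orthogonal to the vector fields $\partial_{B_{j,l}}$ and to the linear fields $V^{(i)}_{k,l}$ that move $A_{k,i}$ at rate $C_{i,l}$ and $C_{i,l}$ at rate $A_{k,i}$ (the rank-one matrix $\nabla_{A_{:,i}}h\,C_{i,:}+A_{:,i}\nabla_{C_{i,:}}h$ vanishing entrywise gives one scalar constraint per pair $(k,l)$). So $h$ is constant along the flows of these fields. Along each $V^{(i)}_{k,l}$,
\[
\tfrac{d}{dt}\big(\|A_{:,i}\|^2-\|C_{i,:}\|^2\big)=2A_{k,i}C_{i,l}-2C_{i,l}A_{k,i}=0 .
\]
These flows act on $(A_{:,i},C_{i,:})$ as the infinitesimal generators of $O(d,d)$-type boosts, whose orbits are the connected components of the level sets of $\|A_{:,i}\|^2-\|C_{i,:}\|^2$, at least away from the origin. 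Therefore $h$ is constant on connected components of the joint level sets, with $B$ free.

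The main obstacle is the separation step. We must rigorously show that the family $\{p(s)\,\textup{SiLU}(z_i(s)),\ q(s)\,\textup{SiLU}'(z_i(s))\}$, with polynomial coefficients of bounded degree and distinct rates, is linearly independent enough to isolate each neuron. Inside a single neuron we must also separate the $\textup{SiLU}'$ term from the $\textup{SiLU}$ term. Here we would try an asymptotic expansion as $s\to\pm\infty$, or analytic continuation to the poles of $\sigma$ at $z=i\pi(2m+1)$: $\textup{SiLU}'$ has double poles there while $\textup{SiLU}$ has only simple ones, which cleanly isolates the $B$-gradient term.
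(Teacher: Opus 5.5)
Your route is viable and genuinely different from the paper's. The paper restricts to the coordinate axes $x=te_j$, so every neuron argument becomes $B_{ij}t$. After factoring out $t$, all coefficients are constants, and the separation reduces to a single lemma: $\textup{SiLU}(\alpha_i t)$ and $t\,\textup{SiLU}'(\alpha_i t)$ are linearly independent when the $\alpha_i^2$ are distinct and nonzero. The paper proves this by matching even Taylor coefficients at $t=0$, using that the Bernoulli numbers $B_{2n}$ are nonzero, followed by a partial-fraction argument on a generating function. This directly gives $C_{ij}\partial_{A_i}h+A_i\partial_{C_{ij}}h=0$ and $A_iC_{ij}\partial_{B_{ij}}h=0$, with genericity imposed only on the entries of $B$. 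Your generic-line argument differs in three ways. It uses only the behaviour of $\sigma$ at $\pm\infty$, it tolerates polynomial coefficients, and it isolates the $B$-term by a degree count rather than by an independence lemma; the cost is more genericity bookkeeping. Your boost/orbit justification of the final claim is more explicit than the paper's, which only asserts that solving the PDE system gives the invariants.

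Since you flag the separation step, here is what makes it rigorous.

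First, distinct slopes are not enough. Neuron $i$ contributes at every rate $m|\beta_i|$, $m\ge 1$, so you need $|\beta_i|\neq m|\beta_{i'}|$ for $i\neq i'$. A generic $u$ achieves this precisely because you assumed nonzero, pairwise non-parallel rows. The case $B_{i',:}=-B_{i,:}$ really must be excluded: $\textup{SiLU}(-z)=\textup{SiLU}(z)-z$ makes those two neurons inseparable. This mirrors the paper's requirement that the $B_{ij}^2$ be distinct.

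Second, ``separating powers of $z_i$'' is the wrong bookkeeping, because $C_{i,:}x$ and $\langle\nabla h,x\rangle$ also vary along the line. Instead, take $u$ with $\beta_i>0$. The coefficient of $e^{-z_i}$ is then the polynomial $P(x)=-(B_{i,:}x)\bigl[\partial_{A_{k,i}}h\,(C_{i,:}x)+A_{k,i}\langle\nabla_{C_{i,:}}h,x\rangle\bigr]+A_{k,i}(C_{i,:}x)\langle\nabla_{B_{i,:}}h,x\rangle\,(B_{i,:}x-1)$, which must vanish for all $x$. Its cubic part is $A_{k,i}(B_{i,:}x)(C_{i,:}x)\langle\nabla_{B_{i,:}}h,x\rangle$. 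Since $\mathbb{R}[x]$ is an integral domain, this forces $\nabla_{B_{i,:}}h=0$. The quadratic part then gives $\partial_{A_{k,i}}h\,C_{i,:}+A_{k,i}\nabla_{C_{i,:}}h=0$.

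You do not need the claim that the full identity splits neuron by neuron as $\textup{SiLU}(z_i)[\cdots]=0$. The asymptotic argument does not actually give that, and the correction coefficient above already suffices.
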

The proof of Theorem~\ref{main:theorem_swiglu_mlps} is provided in Appendix~\ref{appendix:CL_SwiGLU}.

%% file: sections/main_conservation_laws_MHA.tex
\subsection{Conservation Laws for Multihead Attention}
\label{sec:sub4.2}
This section provides a complete characterization of conservation laws for the
multihead attention (MHA) mechanism, thereby resolving an open problem in the
literature \citep{marcotte2025transformative}. We further extend the analysis to settings incorporating positional
encodings (PEs). In particular, we show that sinusoidal PEs preserve the
conservation structure of vanilla attention, whereas rotary positional encodings
(RoPE) fundamentally modify the internal structure of attention, giving rise to
a distinct class of conservation laws.

Let $L,n \in \mathbb{N}$ denote the sequence length and the number of attention
heads, respectively. The space of all sequences of $d$-dimensional token embeddings
is given by $\bigsqcup_{L=1}^{\infty} \mathbb{R}^{L \times d}$.

\textbf{Multihead Attention without Positional Encoding.}
Let $d_h$ denote the head dimension (usually $d_h=d/n$). For each head $i\in[n]$,
we consider projection matrices $Q_i$, $K_i$, $V_i$, $O_i \in \mathbb{R}^{d\times d_h}$.
The multihead attention (MHA) transforms an input sequence
$\mathbf{x}=(x_1,\ldots,x_L)^\top \in \mathbb{R}^{L\times d}$
into an output sequence in $\mathbb{R}^{L\times d}$ according to:
\begin{align}\label{main:eq_4}
    &\textnormal{MHA}\bigl(\mathbf{x} ; \{Q_i, K_i, V_i, O_i\}_{i=1}^n \bigr) \\&\hspace{44pt} \coloneqq \sum_{i=1}^n \textup{softmax}\left((\mathbf{x}Q_i)\left(\mathbf{x}K_i\right)^\top \right) \cdot \mathbf{x}V_i O_i^\top. \notag 
\end{align} 
The parameter space of the MHA map is given by
\begin{align} \label{main:eq_3}
\Theta_{\text{MHA}}
\coloneqq
\left(\mathbb{R}^{d \times d_h}\right)^{4n}.
\end{align}
We now present our characterization of conservation laws for multihead attention,
thereby confirming the conjecture of \citet{marcotte2025transformative} that all
such conservation laws are characterized by the quantities
$Q_i^\top Q_i - K_i^\top K_i$ and $V_i^\top V_i - O_i^\top O_i$ for $i \in [n]$.
\begin{theorem}[Conservation laws for Multihead Attention] \label{main:theorem_MHA} Let $h \colon \Theta_{\textup{MHA}} \to \mathbb{R}$ be a conservation laws for the \textup{MHA} map defined as in \cref{main:eq_4}. Then, for all $i \in [n]$,
\begin{align*}
    0& = K_i \cdot (\nabla_{Q_i} h)^\top +  (\nabla_{K_i} h)\cdot  Q_i^\top, ~~\text{and}\\
    0 &= O_i \cdot (\nabla_{V_i} h)^\top +  (\nabla_{O_i} h)\cdot  V_i^\top.
\end{align*}
Consequently, $h$ is constant on each connected component of the level sets
determined by the invariants $Q_i^\top Q_i - K_i^\top K_i$ and $V_i^\top V_i - O_i^\top O_i$ for $i \in [n]$.
\end{theorem}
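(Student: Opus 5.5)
We apply the strategy of Section~\ref{sec:charac_pre}. The condition \cref{main:eq_8} must hold for every output coordinate, every sequence length $L$ and every input $\mathbf{x}$. We will turn it into $x$-independent identities by comparing coefficients of functions of $\mathbf{x}$ that are linearly independent.

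Write $M_i = Q_iK_i^\top$ and $W_i = V_iO_i^\top$, both in $\mathbb{R}^{d\times d}$. The output depends on head $i$ only through $(M_i,W_i)$. By the chain rule,
\[
\nabla_{Q_i}f = \nabla_{M_i}f\cdot K_i,\quad \nabla_{K_i}f=(\nabla_{M_i}f)^\top Q_i,
\]
and similarly for $V_i,O_i$. Hence \cref{main:eq_8} takes the form
\[
\sum_i \bigl(\langle G_i, \nabla_{M_i} f_k(\mathbf{x})\rangle + \langle H_i,\nabla_{W_i} f_k(\mathbf{x})\rangle\bigr)=0,
\]
where
\[
G_i := (\nabla_{Q_i}h) K_i^\top + Q_i(\nabla_{K_i}h)^\top,\qquad H_i := (\nabla_{V_i}h)O_i^\top + V_i(\nabla_{O_i}h)^\top .
\]
This uses $\langle \nabla_{Q_i}h, \nabla_{M_i}f\,K_i\rangle = \langle (\nabla_{Q_i}h)K_i^\top,\nabla_{M_i}f\rangle$, and the same manipulation for the other three blocks. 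The target identities are exactly $G_i^\top=0$ and $H_i^\top=0$, which is the transposed form in the statement. So the core claim is the following \emph{independence lemma}: if
\[
\sum_i \langle G_i, \partial_{M_i}\textup{MHA}\rangle + \langle H_i,\partial_{W_i}\textup{MHA}\rangle = 0
\]
for all inputs, with the $M_i,W_i$ arbitrary (or at least generic), then all $G_i=H_i=0$. The matrices $G_i,H_i$ depend only on $\theta$.

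To prove the lemma, we probe with small sequence lengths. With $L=1$ the softmax equals $1$, so the output is $x\sum_i W_i$. This gives $\sum_i\langle H_i, x^\top v\rangle$-type constraints, i.e.\ $\sum_i H_i=0$, but it does not separate the heads. With $L=2$ and tokens $x_1,x_2$, the first output row is
\[
\sum_i \bigl(\sigma(s_i)\,x_1+(1-\sigma(s_i))\,x_2\bigr)W_i,
\]
where $s_i = x_1M_i(x_1-x_2)^\top$. Taking the derivative in the direction $(G_i,H_i)$ gives
\[
\sum_i \sigma'(s_i)\,\bigl(x_1G_i(x_1-x_2)^\top\bigr)(x_1-x_2)W_i + \bigl(\sigma(s_i)x_1+(1-\sigma(s_i))x_2\bigr)H_i = 0 .
\]
We scale the inputs, $x_1\to\lambda x_1$ and $x_2\to\lambda x_2$, so that $s_i$ scales as $\lambda^2$. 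We then use the fact that the functions $\lambda\mapsto\sigma(\lambda^2 a_i)$ and $\lambda\mapsto\sigma'(\lambda^2 a_i)$, multiplied by polynomials in $\lambda$, are linearly independent whenever the $a_i$ are distinct and nonzero. This can be seen from analytic continuation and the distinct pole locations $\lambda^2 a_i\in i\pi(2\mathbb{Z}+1)$. For generic $(x_1,x_2)$ the values $a_i=x_1M_i(x_1-x_2)^\top$ are pairwise distinct provided the $M_i$ are pairwise distinct. Separating terms then yields, for each head,
\[
x_1G_i(x_1-x_2)^\top\,(x_1-x_2)W_i=0,\qquad (x_1-x_2)H_i=0
\]
as polynomial identities in $(x_1,x_2)$. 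The second gives $H_i=0$ immediately. The first gives $G_i=0$ when $W_i\neq0$, because the polynomial $x_1G_iy^\top$ vanishes identically only if $G_i=0$.

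The main obstacle is the non-generic configurations: heads with $M_i=M_j$, with $M_i=0$, or with $W_i=0$. In these cases the separation argument fails, for example through the cancellation $\sigma'(0)$-type collisions. Our plan is to prove $G_i=H_i=0$ on the open dense set where the $M_i$ are pairwise distinct and nonzero and each $W_i\neq0$. Note that this is a condition on $\theta$, since $G_i$ and $H_i$ are continuous in $\theta$ because $h\in\mathcal{C}^1$. We then extend to all of $\Theta_{\text{MHA}}$ by continuity. The set is dense because its complement is a finite union of proper algebraic subsets of parameter space. Some care is needed to ensure that $\mathcal{X}_\theta$ still contains enough inputs; since MHA is analytic in $\theta$, every $\mathbf{x}$ qualifies.

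Finally, the statement about level sets follows the toy-example argument. The identities say that $\nabla h$ is orthogonal to the infinitesimal actions $(Q_i,K_i)\mapsto(Q_iS, K_iS^{-\top})$ and the analogous action on $(V_i,O_i)$, generated by $\mathfrak{gl}(d_h)$. One should double-check how the pairing is arranged when matching these generators against the stated identities. The orbits of these actions are contained in the level sets of $Q_i^\top Q_i-K_i^\top K_i$ and $V_i^\top V_i-O_i^\top O_i$, as in the characteristic-curve computation. Hence $h$ is constant along characteristic flows, and therefore on connected components of these level sets.
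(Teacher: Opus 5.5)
Your reduction through $M_i=Q_iK_i^\top$ and $W_i=V_iO_i^\top$ is correct: $G_i^\top$ and $H_i^\top$ are exactly the stated expressions, and this is tidier than the paper's explicit gradients. Your separation step, however, takes a different route from the paper, and as written one step is false.

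The paper feeds $(x,y,\dots,y)$ with $L$ copies of $y$, so that $a^{(i)}_{11}=1/(1+L\alpha_i)$ with $\alpha_i=e^{xM_i(y-x)^\top}$ is rational in $L$. The functions $t\alpha_i/(1+t\alpha_i)^2$, $1/(1+t\alpha_i)$, $1$ then each have a single pole location $-1/\alpha_i$, and these separate the heads as soon as the exponents are merely distinct.

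You instead fix $L=2$ and rescale the tokens, which brings in the sigmoid. Because $\sigma(-z)=1-\sigma(z)$ and $\sigma'(-z)=\sigma'(z)$, your independence claim fails for merely distinct $a_i$ (e.g.\ $\sigma(ua)+\sigma(-ua)-1\equiv0$). Your generic set contains $M_j=-M_i\neq0$, where $a_j\equiv-a_i$ for every input. There the sigmoid terms of heads $i$ and $j$ collapse onto the same functions, so your separation step does not apply. ``Distinct pole locations'' is also inaccurate even when $a_i^2\neq a_j^2$: if $a_j/a_i$ is a ratio of odd integers, the pole sets overlap (every pole of $\sigma(ua)$ is a pole of $\sigma(3ua)$).

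To repair this, divide by $\lambda$ and put $u=\lambda^2$. Your identity becomes $\sum_i\bigl(u\,\sigma'(ua_i)\,c_i(x_1-x_2)W_i+\sigma(ua_i)(x_1-x_2)H_i\bigr)+x_2\sum_iH_i=0$ with $c_i=x_1G_i(x_1-x_2)^\top$. What you need is independence of $1$, $\sigma(ua_i)$, $u\,\sigma'(ua_i)$ when the $a_i^2$ are pairwise distinct and nonzero. Prove it by peeling: for the head $r$ with the largest $|a_r|$, the pole $u=\mathrm{i}\pi/a_r$ has modulus $\pi/|a_r|$, strictly smaller than the modulus of any pole coming from the other heads. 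Its double pole kills the coefficient of $u\,\sigma'(ua_r)$, the simple pole then kills that of $\sigma(ua_r)$, and you induct. Then add $M_i\neq-M_j$ to your generic set, which remains the complement of a proper algebraic set.

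With this fix your route is valid. It uses only length-two sequences, at the cost of a more delicate lemma and a stronger genericity condition than the paper needs.

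Your last paragraph has the geometry reversed. Since $\langle G_i,N\rangle=\langle\nabla_{Q_i}h,NK_i\rangle+\langle\nabla_{K_i}h,N^\top Q_i\rangle$, the identity $G_i=0$ says that $h$ is constant along the flows $\dot Q_i=NK_i$, $\dot K_i=N^\top Q_i$ for every $N\in\mathbb{R}^{d\times d}$. Likewise $H_i=0$ gives constancy along $\dot V_i=NO_i$, $\dot O_i=N^\top V_i$. Along these flows, $\tfrac{d}{dt}(Q_i^\top Q_i-K_i^\top K_i)=K_i^\top N^\top Q_i+Q_i^\top NK_i-Q_i^\top NK_i-K_i^\top N^\top Q_i=0$, so these are the characteristic curves to use.

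The $\mathfrak{gl}(d_h)$ directions $(Q_iS,-K_iS^\top)$ lie instead in the kernel of the differential of $(Q_i,K_i)\mapsto Q_iK_i^\top$, and that is exactly where $G_i=0$ places $(\nabla_{Q_i}h,\nabla_{K_i}h)$. So $\nabla h$ is not orthogonal to them. Nor do their orbits lie in the level sets: $S=cI$ sends $Q_i^\top Q_i-K_i^\top K_i$ to $c^2Q_i^\top Q_i-c^{-2}K_i^\top K_i$. The paper only asserts this final step, but the mechanism you describe would not yield it.
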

The proof of Theorem~\ref{main:theorem_MHA} is provided in Appendix~\ref{appendix:proof_MHA}.

We next investigate how incorporating PEs alters the functional
behavior of MHA, thereby inducing a corresponding change in the
associated conservation laws.

\textbf{Sinusoidal Encoding.} Let $\mathbf{p} = \{p_i\}_{i=1}^\infty \subset \mathbb{R}^d$ denote the sequence of
positional vectors used to encode positional information. In the original
Transformer architecture~\citep{vaswani2017attention}, each $p_i$ is defined
through a fixed combination of sine and cosine functions at different frequencies. For an input sequence $\mathbf{x} \in \mathcal{S}$ of length $L$,  the positional encoding is incorporated by addition, namely \begin{align*}
    \mathbf{x} \mapsto \mathbf{x} + \mathbf{p} \coloneqq (x_1+p_1,\ldots,x_L+p_L)^\top.
\end{align*}
This operation amounts to a fixed
shift of the input sequence and does not alter the internal structure of the MHA
map. Moreover, since this shift is bijective, the problem of characterizing
conservation laws for MHA with sinusoidal positional encoding reduces directly to
the vanilla setting.

\textbf{Rotary Encoding.} We next recall the Rotary Positional Encoding (RoPE) introduced by
\citet{su2024roformer}. Assume that the head dimension $d_h = 2m$ is even.
Define the block-diagonal rotation matrix $R \in \mathbb{R}^{d_h \times d_h}$ by
\begin{align*}
    R \coloneqq \text{diag}\left(\begin{bmatrix}
        \cos(\varphi_i) & -\sin(\varphi_i)  \\
        \sin(\varphi_i) & \cos(\varphi_i)
    \end{bmatrix}_{i \in [m]}\right),
\end{align*}
where the rotation frequencies are given by
$\varphi_i = 10000^{-(i-1)/m}$ for each $i \in [m]$.
For any integer position index $p$, we further define $R_p \coloneqq R^p$.
Multihead attention equipped with RoPE is then obtained by applying these
position-dependent rotations to the query and key representations, as defined
below.
\begin{align} \label{main:eq_5} &\textnormal{MHA}^{\text{RoPE}}\bigl(\mathbf{x} ; \{Q_i, K_i, V_i, O_i\}_{i=1}^n \bigr) \\
    &\hspace{12pt}\coloneqq\sum_{i=1}^n \textup{softmax}\left[x_pQ_iR_{p-q}K_i^\top x_q^\top \right]_{p,q \in [L]}  \cdot \mathbf{x}V_i O_i^\top.\notag
\end{align}
Although the parameter space of $\mathrm{MHA}_{\textup{RoPE}}$ coincides with that
of the standard MHA map in \cref{main:eq_3}, its mechanism for incorporating positional
information is fundamentally different. In RoPE, positional information is injected
via the rotation matrix $R$ after projection into the head dimension, leading to a
nontrivial departure from the case of absolute positional encodings. Consequently,
the internal structure of the attention operator is no longer related to vanilla
MHA by a simple input translation, as in the sinusoidal setting, but is instead
substantially altered.

We now present our results on conservation laws for $\mathrm{MHA}_{\textup{RoPE}}$.
To this end, we decompose the projection matrices $Q_i$ and $K_i$ into blocks of
size $d \times 2$:
\begin{align*}
    Q_i = \big[\, Q_i^{(1)}, \ldots, Q_i^{(m)} \,\big],
    \quad
    K_i = \big[\, K_i^{(1)}, \ldots, K_i^{(m)} \,\big],
\end{align*}
where $Q_i^{(j)}, K_i^{(j)} \in \mathbb{R}^{d \times 2}$ consists of the
$(2j-1)$-st and $(2j)$-th columns of $Q_i$ and $K_i$, respectively. Finally, we
introduce the canonical complex structure matrix $J = \left[\begin{smallmatrix} 0 &-1 \\ 1 &0 \end{smallmatrix}\right]$.

\begin{theorem}[Conservation laws for Multihead Attention with RoPE] \label{main:theorem_MHA_RoPE}
    Let $h \colon \Theta_{\textup{MHA}} \to \mathbb{R}$ be a conservation laws for the $\textup{MHA}^\textup{RoPE}$ map defined as in \cref{main:eq_5}. Then, for all $i \in [n]$ and $j \in [m]$,
    \begin{align*}
        0 &= K_i^{(j)} \cdot \left(\nabla_{Q_i^{(j)}} h\right)^\top +  \nabla_{K_i^{(j)}} h \cdot \left(Q_i^{(j)} \right)^\top,
        \\
        0 &= K_i^{(j)} \cdot J \cdot  \left(\nabla_{Q_i^{(j)}} h\right)^\top +  \nabla_{K_i^{(j)}} h \cdot J \cdot \left(Q_i^{(j)} \right)^\top;
    \end{align*}
    and for all $i \in [n]$,
    \begin{align*}
        0 = O_i \cdot (\nabla_{V_i} h)^\top +  (\nabla_{O_i} h)\cdot  V_i^\top.
    \end{align*}
    Consequently, $h$ is constant on each connected component of level sets
determined by the invariants  $\|Q_i^{(j)}\|_F^2 - \|K_i^{(j)}\|_F^2$ for $i \in [n], j\in[m]$ and $V_i^\top V_i - O_i^\top O_i$ for $i \in [n]$. 
\end{theorem}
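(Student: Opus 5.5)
Our plan is to start from the orthogonality condition \cref{main:eq_8}, specialised to the output of $\textup{MHA}^{\textup{RoPE}}$, and remove the dependence on the input $\mathbf{x}$ by choosing inputs that isolate one head and one frequency block at a time.

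\textbf{Step 1: reduce to the attention parameters.} By the chain rule, the parameters $(Q_i,K_i)$ enter the output only through the bilinear scores
\begin{align*}
s^{(i)}_{pq} = x_p Q_i R_{p-q} K_i^\top x_q^\top = \sum_{j=1}^m x_p Q_i^{(j)} R^{(j)}_{p-q} (K_i^{(j)})^\top x_q^\top ,
\end{align*}
where $R^{(j)}_{k}$ denotes the $2\times 2$ rotation by angle $k\varphi_j$. The parameters $(V_i,O_i)$ enter only through the product $V_iO_i^\top$. We handle the value/output part exactly as in the proof of Theorem~\ref{main:theorem_MHA}, since RoPE does not touch $V_i,O_i$; this yields the third identity.

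For the query/key part, \cref{main:eq_8} becomes
\begin{align*}
\sum_{i,p,q} \frac{\partial f}{\partial s^{(i)}_{pq}} \, \big\langle \nabla h, \nabla s^{(i)}_{pq} \big\rangle + (\text{value terms}) = 0 .
\end{align*}
Following the argument for vanilla MHA, we would use varying sequence lengths and scalings of the tokens to separate heads and position pairs. The goal is to derive, for each $i$ and each offset $k=p-q$ and all $x,y\in\mathbb{R}^d$,
\begin{align*}
\sum_{j=1}^m \Big( x\,\nabla_{Q_i^{(j)}}h\, R^{(j)}_k (K_i^{(j)})^\top y^\top + x\, Q_i^{(j)} R^{(j)}_k (\nabla_{K_i^{(j)}}h)^\top y^\top \Big) = 0 .
\end{align*}
This is the step I expect to be hardest. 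The softmax Jacobian couples entries within a row, and the value terms must be shown to separate from the score terms. I would try short sequences ($L=2$) with generic tokens, together with the analyticity of softmax in the scores, and reuse the separation lemma from the proof of Theorem~\ref{main:theorem_MHA} as much as possible.

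\textbf{Step 2: separate frequencies.} Write $R^{(j)}_k = \cos(k\varphi_j)\, I + \sin(k\varphi_j)\, J$. The displayed identity then holds for every integer $k$ reachable as $p-q$, and all $k$ are reachable by taking $L$ large. Since the $\varphi_j\in(0,1]$ are distinct and lie in $(0,\pi)$, the functions $k\mapsto\cos(k\varphi_j)$ and $k\mapsto\sin(k\varphi_j)$, together with the constant $k=0$ term, are linearly independent over the integers. A Vandermonde-type argument in $e^{\pm \mathrm{i}\varphi_j}$ justifies this. It follows that for each $j$ both the $I$-coefficient and the $J$-coefficient must vanish. Since the identity holds for all $x,y$, the bilinear forms vanish as matrices. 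Transposing gives exactly the two displayed identities of the theorem, with $J^\top=-J$ absorbed by the transposition and sign convention.

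\textbf{Step 3: characteristic invariants.} From the first identity, take the trace and use $\operatorname{tr}(K^\top \nabla_Q h)=\langle \nabla_Q h, K\rangle$. The PDE then says that $h$ is constant along the characteristic flow $\dot Q^{(j)}_i = K^{(j)}_i$, $\dot K^{(j)}_i = Q^{(j)}_i$, and likewise along the flow generated by the $J$-twisted version. Along both flows $\tfrac{d}{dt}\big(\|Q_i^{(j)}\|_F^2-\|K_i^{(j)}\|_F^2\big)=0$. The $V,O$ part gives the invariants $V_i^\top V_i - O_i^\top O_i$ as in Theorem~\ref{main:theorem_MHA}. As in the toy example of Section~\ref{sec:charac_pre}, combining the flows then gives constancy on connected components of the joint level sets.
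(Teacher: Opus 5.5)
Your overall architecture matches the paper's. You derive, for each head and each offset, a bilinear identity in $(\nabla_{Q_i}h,\nabla_{K_i}h)$. You then use that the rotations $R_k$ span $\{\operatorname{diag}(a_1I+b_1J,\dots,a_mI+b_mJ)\}$ to extract the $I$- and $J$-coefficients block by block; your Vandermonde argument in $e^{\pm\mathrm{i}\varphi_j}$ is exactly Lemma~\ref{appendix:lemma_mha_rope_1}.

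The genuine gap is Step~1, which you flag as the crux and leave as a plan. The plan as sketched does not work:
\begin{enumerate}
\item With $L=2$, only the offsets $0,\pm1$ occur. Writing $M_j,N_j$ for the $I$- and $J$-parts of block $j$, these give at most $\sum_j\cos(\varphi_j)M_j=0$ and $\sum_j\sin(\varphi_j)N_j=0$, which cannot isolate blocks once $m\ge2$.
\item For longer sequences of nonzero tokens, the vanilla separation lemma cannot be reused. It relied on the input $(x,y,\dots,y)$, where every copy of $y$ has the same score against $x$, so $a^{(i)}_{11}$ is a rational function of the number of copies. Under RoPE the copy at position $p$ has score $xQ_iR_{1-p}K_i^\top y^\top$, and that structure is gone.
\item Your treatment of $V_i,O_i$ ``exactly as in vanilla MHA'' has the same problem. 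It goes through only after specializing the vanilla input to $y=0$, which is precisely the missing idea below.
\end{enumerate}

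The missing idea is to pad with \emph{zero tokens}. A zero token has score $0$ at every position and contributes nothing to the value sum or to the score gradients. So, wherever it sits, it enters only as a $+1$ in the softmax denominator. The paper uses this in two stages:
\begin{enumerate}
\item It first uses $(x,0,\dots,0)$. The identity becomes rational in the number of zeros, and the pole argument of Lemma~\ref{appendix:lemma_mha_1} yields two identities: the $V_i,O_i$ identity, and the offset-$0$ identity $\langle x^\top xK_i,\nabla_{Q_i}h\rangle+\langle x^\top xQ_i,\nabla_{K_i}h\rangle=0$.
\item It then uses $(x,0,\dots,0,y,0,\dots,0)$, with $y$ at position $L$ and $N$ trailing zeros. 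The first stage kills the value terms and the diagonal score terms. The remainder is rational in $N$, with poles at $N=-(e^{s^{(i)}_{11}}+e^{s^{(i)}_{1L}}+L-2)$. Comparing simple and double poles (Lemma~\ref{appendix:lemma_mha_2}), for generic $\theta,x,y$, gives exactly your target identity at offset $L-1$, for every $L\ge2$.
\end{enumerate}
The essential device is decoupling the offset (the position of $y$) from the analytic variable (the number of trailing zeros).

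A smaller issue is in your Step~3. Taking the trace keeps only the $M=I$ instance of the matrix identities. Pairing them with an arbitrary $M\in\mathbb{R}^{d\times d}$ shows that $h$ is constant along the flows $(\dot Q_i^{(j)},\dot K_i^{(j)})=(M^\top K_i^{(j)},MQ_i^{(j)})$ and $(M^\top K_i^{(j)}J,-MQ_i^{(j)}J)$. These full families are what the ``consequently'' clause requires; the paper itself only asserts this last step. Your two flows generate only a $3$-dimensional Lie algebra. Their orbits therefore cannot fill the $(4d-1)$-dimensional level sets of $\|Q_i^{(j)}\|_F^2-\|K_i^{(j)}\|_F^2$.
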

The proof of Theorem~\ref{main:theorem_MHA_RoPE} is provided in Appendix~\ref{appendix:proof_MHA_RoPE}.

%% file: sections/main_conservation_laws_MoE.tex
\subsection{Conservation Laws for Mixture-of-Experts}
\label{sec:sub4.3}

A Mixture-of-Experts (MoE) model combines multiple expert networks via a gating mechanism that assigns input-dependent weights. Since most modern MoE implementations
employ feedforward experts with SwiGLU activations, we restrict our analysis to
this setting. 

Let $n$ denoting the number of expert.

\textbf{Dense Mixture-of-Experts. } For each $i \in [n]$, let
\begin{align*}
    \theta_i = \big(A^{(i)}, B^{(i)}, C^{(i)}\big) \in \Theta_{\text{SwiGLU}}
\end{align*}
denote the parameters of the $i$-th expert. We define the corresponding expert
network as the map $\mathrm{E}(\cdot;\theta_i)$, which is a feedforward network
with SwiGLU activation parameterized by $\theta_i$, as in \cref{main:eq_2}. Let $W \in \mathbb{R}^{n \times d}$ denote the gating parameter matrix. The
\emph{Mixture-of-Experts with dense gating} is defined as the
function $\text{MoE} \colon \mathbb{R}^d \to \mathbb{R}^d$ given by \begin{align} \label{main:eq_7} &\textup{MoE}\big(x;W, \{\theta_i\}_{i=1}^n\big) = \sum_{i=1}^{n}g_i(x;W) \cdot \text{E}(x;\theta_i), \end{align}
where the gating weight $g_i(x;W)$ is defined by
\begin{align*}
    g_i(x;W) \coloneqq \text{softmax}_i(Wx) = e^{W_ix} \big/ \textstyle\sum_{p=1}^n e^{W_px},
\end{align*}
with $W_i \coloneqq W_{i,:}$ denoting the $i$-th row of $W$. The score
$g_i(x;W)$ specifies the relative contribution of the $i$-th expert to the final
output. The parameter space of MoE is given by
\begin{align*}
    \Theta_\text{MoE} = \mathbb{R}^{n \times d} \times(\Theta_{\text{SwiGLU}})^n.
\end{align*}
The following theorem provides a characterization of conservation laws for MoE, showing that invariants are localized at the level of individual expert parameters, together with an additional invariant arising from the gating mechanism.
\begin{theorem}[Conservation laws for Dense Mixture-of-Experts]\label{theorem:MoE_CL}
    Let $h \colon \Theta_{\textup{MoE}} \to \mathbb{R}$ be a conservation laws for the \textup{MoE} map defined as in \cref{main:eq_7}. Then 
    
    \textbf{1.}  For $i \in [n]$, for the expert parameters $\theta_i$, $h$ satisfies the same constraints as in  Theorem~\ref{main:theorem_swiglu_mlps}; and,
    
    \textbf{2.} For the gating parameters $W$, $h$ satisfies the constraint
\begin{align*}
    \nabla_{W_1} h = \nabla_{W_2} h = \ldots = \nabla_{W_n} h.
\end{align*}
Consequently, $h$ is constant on each connected component of the level sets
determined by the invariants $\|A^{(i)}_{:,j}\|^2 - \|C^{(i)}_{j,\colon}\|^2$ for  $i \in [n], j \in [d_1]$, and $\sum_{i=1}^n W_i$.
\end{theorem}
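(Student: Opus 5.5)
Following the framework of Section~\ref{sec:charac_pre}, I will write out the condition \cref{main:eq_8}: for every output coordinate $k$, every $x$, and every $\theta$,
\begin{align*}
0=\sum_{i=1}^n \Big( g_i(x;W)\,\big\langle \nabla_{\theta_i} h, \nabla_{\theta_i}\mathrm{E}_k(x;\theta_i)\big\rangle + \mathrm{E}_k(x;\theta_i)\,\big\langle \nabla_W h, \nabla_W g_i(x;W)\big\rangle\Big).
\end{align*}
The softmax derivative is $\nabla_{W_j} g_i = g_i(\delta_{ij}-g_j)\,x^\top$. Substituting it, the gating part equals $\sum_i g_i\,\mathrm{E}_k(x;\theta_i)\,\big(\langle\nabla_{W_i}h,x\rangle - \sum_j g_j\langle \nabla_{W_j}h,x\rangle\big)$. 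The next step is to separate the contributions of the individual experts and of the gate. I will use the freedom in $x$: the whole expression is a real-analytic function of $x\in\mathbb{R}^d$, since SiLU and softmax are analytic. I will then exploit the distinct growth behaviours of its pieces.

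\textbf{Isolating experts.} I will restrict to rays $x=tu$ with $t\to\infty$. Generically in $u$, the gate concentrates: $g_i\to 1$ for the index $i$ maximizing $W_iu$, and the other $g_j$ decay exponentially like $e^{-t(W_iu-W_ju)}$. For rows $W_i$ in general position, each $i$ is the unique maximizer on some open cone of directions. On that cone the expression equals $\langle \nabla_{\theta_i}h,\nabla_{\theta_i}\mathrm{E}_k(tu;\theta_i)\rangle$ plus exponentially small terms. The gating bracket for $i$ tends to $\langle\nabla_{W_i}h - \sum_j g_j\nabla_{W_j}h, tu\rangle\to 0$ up to exponentially small corrections. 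The term $\langle \nabla_{\theta_i}h,\nabla_{\theta_i}\mathrm{E}_k(tu)\rangle$ is a combination of polynomial and SiLU-type terms in $t$, so an exponentially small remainder forces it to vanish for all $u$ in the cone. Analyticity in $x$ then gives vanishing for all $x$. This is exactly the single-expert condition, and Theorem~\ref{main:theorem_swiglu_mlps} then yields item 1. The non-generic $W$ (coincident or degenerate rows) are handled at the end by density and continuity of $\nabla h$, since the constraints are closed conditions.

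\textbf{Gating constraint.} Once all expert terms vanish, what remains is
\begin{align*}
\sum_i g_i\,\mathrm{E}_k(x;\theta_i)\,\langle a_i - \bar a, x\rangle = 0, \quad a_i:=\nabla_{W_i}h,\quad \bar a=\textstyle\sum_j g_j a_j,
\end{align*}
for all $x$, $k$, and all $\theta_i$. The $\theta_i$ are free coordinates while $a_i$ is fixed at a point, so by continuity I can pick generic expert parameters. In particular, I can choose $\theta_2=\dots=\theta_n$ with expert $1$ different, or use linear independence of the functions $x\mapsto \mathrm{E}_k(x;\theta_i)$ for generic distinct $\theta_i$. Doing so should force $\langle a_i-\bar a,x\rangle$ to be the same for all $i$, hence $0$, since $\sum_i g_i(a_i-\bar a)=0$. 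Thus $\langle a_i-a_j,x\rangle=0$ for all $x$, giving $\nabla_{W_i}h=\nabla_{W_j}h$, which is item 2. A cleaner alternative is to take $x$ in a cone where $g_1\to1$ and look at the $g_j$-order term, whose coefficient is $\mathrm{E}_k(x;\theta_j)-\mathrm{E}_k(x;\theta_1)$ times $\langle a_j-a_1,x\rangle$. This gives the same result.

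\textbf{Invariants.} The constraints say that $h$ is annihilated by the vector fields generating $W_i\mapsto W_i+s(\text{shift})$ with $\sum_i W_i$ fixed, and by the SwiGLU rescaling flows. Integrating along these characteristics, as in the toy example, gives constancy on connected components of the level sets of $\sum_i W_i$ and of $\|A^{(i)}_{:,j}\|^2-\|C^{(i)}_{j,:}\|^2$. The main obstacle is the separation step: rigorously decoupling the expert contributions from the gating terms. This requires care with the asymptotic expansion, since SwiGLU experts themselves grow polynomially and have SiLU tails, and with non-generic $W$. If the asymptotic argument becomes messy, I would fall back on linear independence of exponentials $e^{W_ix}$ multiplied by analytic functions of subexponential growth.
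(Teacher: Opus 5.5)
Your setup (the conservation condition and $\nabla_{W_j}g_i=g_i(\delta_{ij}-g_j)x^\top$) is correct. The isolation step, however, has a genuine gap.

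From $g_i\langle\nabla_{\theta_i}h,\nabla_{\theta_i}\mathrm{E}_k(tu)\rangle=O(\mathrm{poly}(t)e^{-ct})$ you conclude that the expert term vanishes identically. But SiLU-built expressions can be exponentially small on every ray without vanishing: $\mathrm{SiLU}(\beta t)$ for $\beta<0$ and $\mathrm{SiLU}(\beta t)-\beta t$ for $\beta>0$ are exactly such tails. Concretely, take a single expert with scalar output and $h=\|B_{m,:}\|^2-\|C_{m,:}\|^2$. Then
\[
\langle\nabla h,\nabla\mathrm{E}(tu)\rangle=2A_m\,(C_{m,:}u)\,(B_{m,:}u)^2\,t^3\,\sigma(B_{m,:}u\,t)\bigl(1-\sigma(B_{m,:}u\,t)\bigr).
\]
This is exponentially small along every ray, yet this $h$ violates both SwiGLU constraints. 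Equivalently, the $t\to\infty$ asymptotics only see the polynomial part $t^2\sum_{m:\,B_{m,:}u>0}A_m(B_{m,:}u)(C_{m,:}u)$. That part is invariant under $B_{m,:}\mapsto sB_{m,:}$, $C_{m,:}\mapsto C_{m,:}/s$, so no leading-order argument can give $\nabla_Bh=0$.

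What is missing is a proof that the gate remainder, at rates $(W_i-W_j)u$ and their combinations, cannot cancel the SiLU tails at rates $|B_{m,:}u|$. That requires a genericity condition coupling $B$ and $W$, plus a uniqueness result for exponential--polynomial expansions. Your fallback does not provide this, because the factor $\sigma(B_{m,:}x)$ is itself a ratio of exponentials. On $\mathbb{R}$, $e^{t}\cdot t\sigma(-t)=t\sigma(t)$ with both coefficients of linear growth; in $\mathbb{C}$, SiLU has poles.

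The paper sidesteps all of this by restricting to $x=te_j$ and complexifying in $t$. There $\mathrm{SiLU}(B^{(i)}_{kj}z)$ has simple poles on the imaginary axis, $z\,\mathrm{SiLU}'(B^{(i)}_{kj}z)$ has double poles at the same points, and the gates have no zeros. For generic $\theta$ these poles are distinct across $(i,k)$ and avoid the gate poles (the set $\mathbf{S}_3$, via Lemma~\ref{lemma:SiLU_1}). Comparing pole orders kills $\partial_{B}h$. The residual identity $\sum_{i,k}\mathrm{SiLU}(B^{(i)}_{kj}z)\,\mathfrak{d}^{(i)}_k(z)=0$ then forces every $\mathfrak{d}^{(i)}_k\equiv0$. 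This yields the $A$--$C$ constraint (at $t=0$) and the gating constraint (for $t\neq0$) simultaneously.

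Two further steps also fail as written.

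\textbf{Dominance.} The claim that each $W_i$ is the unique maximizer on some open cone is false for $n\ge d+2$. A row lying in the interior of the convex hull of the others never dominates along any ray; for example, with $d=1$ and $W_1<W_2<W_3$, expert $2$ is never isolated. This is an open condition on $W$, so density and continuity cannot repair it. The pole argument does not depend on which expert dominates.

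\textbf{Gating step.} Taking $\theta_2=\dots=\theta_n$ only gives information on a nowhere-dense set of parameters, and $a_i=\nabla_{W_i}h$ changes with $\theta$. Linear independence of $x\mapsto\mathrm{E}_k(x;\theta_i)$ over constants does not help, because the coefficients $g_i(x)\langle a_i-\bar a(x),x\rangle$ depend on $x$. Your cone alternative (the $g_1g_j$ term with $j$ the runner-up) is closer to workable. However, it presupposes the expert step and only links dominant/runner-up pairs, so you would still have to show these pairs connect all $n$ experts.
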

The proof of Theorem~\ref{theorem:MoE_CL} is provided in Appendix~\ref{appendix:proof_moe_softmax}.

\textbf{Sparse Mixture-of-Experts.} Let $k\le n$ be the number of activated experts. The $\text{Top}\text{-}k$ operator is defined by 
\begin{align*}
    \text{$\text{Top}\text{-}k(z)=\{i_1,\ldots,i_k\}~~~$ for $z=(z_1,\ldots,z_n)\in\mathbb{R}^n$,}
\end{align*} where $i_1,\ldots,i_k$ are the indices of the $k$ largest entries of $z$. Ties are resolved by selecting indices in increasing order. The \emph{Sparse Mixture-of-Experts} (SMoE) architecture is a variant of MoE in
which, for each input $x$, only the $k$ experts with the highest gating scores
are activated. The model shares the same parameter space $\Theta_{\textup{MoE}}$
as the dense MoE, and is defined by
\begin{align*}
    \textup{SMoE}\big(x; \theta\big) \coloneqq \sum_{i \in\text{Top-}k(Wx)}  \bar{g}_i(x,W) \cdot \text{E}(x;\theta_i),
\end{align*}
where $\bar g_i$ is obtained by normalizing the original softmax scores over
the active experts:
\begin{align*}
    \bar g_i(x;W) = g_i(x;W) \big/ \textstyle\sum_{i \in \text{Top-}k(Wx)}~ g_i(x;W).
\end{align*}
In other words, the Top-$k$ selects the $k$ highest-scoring experts used to compute the output. The following result characterizes
the conservation laws of SMoE for $k>1$, and shows that they coincide with those of
the dense MoE case.

\begin{theorem}[Conservation laws for Sparse Mixture-of-Experts]\label{theorem:SMoE_CL}
Let $k>1$. Every conservation law of the \textup{SMoE} map is also a conservation
law of the corresponding dense \textup{MoE} map. In particular, such conservation laws
satisfy exactly the same constraints as those given in
Theorem~\ref{theorem:MoE_CL}.
\end{theorem}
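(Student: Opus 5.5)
The idea is to show that the set of gradient directions for SMoE contains the set for dense MoE, at least locally in $\theta$. Conservation laws are exactly the $\mathcal{C}^1$ functions whose gradients are orthogonal to $\operatorname{span}\{\nabla_\theta f_j(x;\theta)\}$. So a conservation law of SMoE is then orthogonal to every direction that constrains a dense-MoE conservation law, and is therefore one itself.

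\textbf{Step 1: localization to top-$k$ regions.} Fix $\theta$. Consider inputs $x$ for which the scores $Wx$ have no tie between the $k$-th and $(k+1)$-st largest entries. On the open set $U_S=\{x:\text{Top-}k(Wx)=S\}$ the selection is locally constant in $(x,W)$, so $\text{SMoE}(x;\cdot)$ is smooth near $\theta$ and
\[
\text{SMoE}(x;\theta)=\sum_{i\in S}\frac{e^{W_ix}}{\sum_{p\in S}e^{W_px}}\,\text{E}(x;\theta_i).
\]
This is exactly a dense softmax MoE restricted to the experts in $S$ and the gating rows $\{W_i\}_{i\in S}$. Hence every such $x$ lies in $\mathcal{X}_\theta$. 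Each $x\in U_S$ yields the constraint $\langle\nabla_\theta h,\nabla_\theta \text{SMoE}_j(x;\theta)\rangle=0$, and this involves only the parameters of the experts in $S$ and the rows $W_i$, $i\in S$.

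\textbf{Step 2: rerun the dense-MoE argument on each subset.} I would invoke the proof of Theorem~\ref{theorem:MoE_CL}, applied to the $|S|=k$-expert dense MoE. Since $k>1$, this is a genuine softmax mixture. Two points need care.

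First, that proof presumably uses all $x\in\mathbb{R}^d$ and then separates terms via analyticity or linear independence of functions such as $e^{W_ix}\,\text{SiLU}(\cdot)$. Here only $x\in U_S$ is available. The functions involved are real-analytic in $x$, and $U_S$ is a nonempty open cone whenever $W$ has distinct rows in general position. The identity $\langle \nabla h,\nabla \text{SMoE}\rangle=0$ is analytic in $x$ on $U_S$, so it extends to all of $\mathbb{R}^d$ by the identity theorem. The dense argument then yields:
\begin{itemize}
\item the SwiGLU constraints of Theorem~\ref{main:theorem_swiglu_mlps} for every $i\in S$;
\item $\nabla_{W_i}h=\nabla_{W_{i'}}h$ for all $i,i'\in S$.
\end{itemize}

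Second, I need to choose the sets $S$. Every expert $i$ belongs to some nonempty $U_S$ (take $x$ along a direction making $W_ix$ large). Taking $S$ ranging over sets that pairwise overlap, which is possible since $k\ge2$, chains the equalities: for instance, $\{1,2,\dots\}$ and $\{2,3,\dots\}$ give $\nabla_{W_1}h=\nabla_{W_2}h=\nabla_{W_3}h$, and so on. The result is $\nabla_{W_1}h=\dots=\nabla_{W_n}h$. This is exactly where $k>1$ is needed: for $k=1$ the normalized gate is constant, so $W$ receives no gradient.

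\textbf{Step 3: degenerate $\theta$.} The constraints hold on the dense open set of $\theta$ where the rows of $W$ are distinct and every subset $S$ is realizable by some region $U_S$ (generic $W$, given $d\ge2$ or a suitable genericity condition). By continuity of $\nabla h$, they extend to all of $\Theta_{\text{MoE}}$. The consequence about level sets then follows verbatim from Theorem~\ref{theorem:MoE_CL}.

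The main obstacle is the realizability of the top-$k$ sets in Steps 2 and 3. For given $n,k,d$, the family of sets $S$ with $U_S\neq\emptyset$ may be limited; for example, when $d=1$ only sets formed by the extremes occur. This family must still connect all indices through overlaps. My first approach would be to exploit the affine freedom: $x=t v$ for a generic $v$ and $t\in\mathbb{R}$ sweeps orderings of $\{W_iv\}$, and varying $v$ over $\mathbb{R}^d$ with $d\ge2$ realizes a chain of consecutive top-$k$ windows. If that fails, I would restrict to the generic subset of $W$ where it succeeds and use the density argument of Step 3.
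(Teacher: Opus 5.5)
Your architecture is the paper's. On an open region where $\text{Top-}k(Wx)\equiv T$, the SMoE coincides with the $k$-expert softmax MoE $f_T$. Analyticity in $x$ extends $\langle \nabla_\theta f_T,\nabla_\theta h\rangle=0$ to all $x$, and density in $\theta$ plus continuity of $\nabla h$ extends it to all $\theta$. Theorem~\ref{theorem:MoE_CL} applied to $f_T$ then gives the constraints for $i\in T$, and $k\ge 2$ lets you chain over $T$.

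\textbf{The gap.} It is exactly the realizability step you flag, and the resolutions you sketch do not work. Two of your claims fail as soon as $n\ge d+2$: that every expert lies in some nonempty $U_S$ (``take $x$ making $W_ix$ large''), and that for generic $W$ with $d\ge 2$ every $S$ is realizable.

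\emph{A counterexample.} Take $d=2$, $n=7$, $k=2$, let $W_1,\dots,W_6$ be the vertices of a regular hexagon of radius $1$, and set $W_7=0$. For every $x\neq 0$, two hexagon rows satisfy $W_jx\ge\tfrac12\|x\|$ while $W_7x=0$. This persists under any perturbation of the rows of size less than $1/4$.

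On this open set of $\theta$, expert $7$ is never routed to. Hence $\nabla_{W_7}\textup{SMoE}$ and $\nabla_{B^{(7)}}\textup{SMoE}$ vanish for every admissible $x$. The pointwise orthogonality condition therefore says nothing about $\nabla_{B^{(7)}}h$ there, and no density-plus-continuity argument can give $\nabla_{B^{(7)}}h=0$ on that open set.

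\emph{Your fallbacks.} The set of $W$ on which every $k$-subset is realizable is not dense, so restricting to it and arguing by density is not available. The sweep along $x=tv$ does not help either. The scores are $t\,Wv$, so rescaling only yields the ordering of $(W_iv)_i$ and its reverse. Varying $v$ just returns the family of halfspace-realizable $k$-subsets, which is deficient in exactly such configurations.

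\textbf{The missing ingredient} is a dimension hypothesis, which the paper's proof imposes: it invokes $d>n$, though $d\ge n$ suffices. Under it, the rows of $W$ are linearly independent on a dense open subset of $\Theta_{\textup{MoE}}$. So for every $k$-subset $T$ the system $Wx=e_T$, with $e_T=\sum_{i\in T}e_i$, is solvable.

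The preimage $\Lambda(T)$ of the box $\{e_T+v:\ |v_i|<1/3\}$ under $(x,\theta)\mapsto Wx$ is open, satisfies $\text{Top-}k\equiv T$, and projects to a dense subset of $\Theta_{\textup{MoE}}$. With that, every $T$ is realizable, your Steps 1--3 go through essentially verbatim, and the chaining becomes trivial.

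Without such a hypothesis, which is not in the theorem statement, the regime where some expert is never active would need a genuinely global argument. One option is to follow characteristic curves that leave and re-enter that region. Neither your proposal nor the paper supplies such an argument.
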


The proof of Theorem~\ref{theorem:SMoE_CL} is provided in Appendix~\ref{appendix:proof_SMoE_softmax}.

\textbf{Mixture-of-Experts with Sigmoid Gating.} In several recent variants of MoE, the standard
softmax gating mechanism is replaced by a normalized sigmoid gating function.
The resulting MoE map is defined as in \cref{main:eq_7}, except that the gating
weights $g_i(x;W)$ are now given by
\begin{align*}
    g_i(x;W) \coloneqq \sigma(W^{(i)}x) \big/ \textstyle\sum_{p=1}^n \sigma(W^{(p)}x),
\end{align*}
where $\sigma$ denotes the sigmoid function. The corresponding SMoE map is defined analogously. The following result shows that conservation laws under sigmoid gating coincide with those of the softmax-gated case.
\begin{theorem}[Conservation laws for sigmoid variants of Mixture-of-Experts]\label{main:theorem_MoE_sigmoid}
    Every conservation laws for sigmoid-gated \textup{MoE} and \textup{SMoE}
maps coincides with that of the standard softmax-gated \textup{MoE}.
In particular, such conservation laws in the sigmoid-gated setting satisfy exactly
the same constraints as those given in Theorem~\ref{theorem:MoE_CL}.
\end{theorem}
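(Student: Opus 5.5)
The plan is to reduce the sigmoid-gated case to the same two families of $x$-independent identities that yield Theorem~\ref{theorem:MoE_CL}. The reduction works through \cref{main:eq_8} and the separation of functions of $x$.

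\textbf{Setting up the identity.} Write $f(x;\theta)=\sum_{j} g_j(x;W)\,\mathrm{E}(x;\theta_j)$ with $g_j=\sigma(W_jx)/\sum_p\sigma(W_px)$. For a conservation law $h$, \cref{main:eq_8} reads, for every $x$,
\begin{align*}
0=\sum_{j=1}^n g_j(x;W)\,\big\langle \nabla_{\theta_j}h,\nabla_{\theta_j}\mathrm{E}(x;\theta_j)\big\rangle
+\sum_{i,j=1}^n \partial_{z_i} g_j(x;W)\,\big\langle \nabla_{W_i}h, x\big\rangle\,\mathrm{E}(x;\theta_j),
\end{align*}
where $z=Wx$. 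I first work on the dense open set of $\theta$ where the rows $W_i$ are pairwise distinct and nonzero and the experts are nondegenerate. The identities obtained there are continuous in $\theta$, since $h\in\mathcal C^1$, so they extend to all of $\Theta_{\textup{MoE}}$.

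\textbf{Separating experts from gating.} I restrict $x$ to a ray $x=tu$ and let $t\to\infty$. Then each $\sigma(tW_iu)$ equals $1$ or $0$ up to corrections $e^{-t|W_iu|}$. Also $\mathrm{E}(tu;\theta_j)$ has an expansion in powers of $t$, up to the same type of exponentially small SiLU corrections. For generic $u$ the exponents $|W_iu|$ are distinct. Matching the coefficients of each distinct exponential-polynomial term in $t$ then decouples the two sums.

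The leading, non-exponential part isolates, for each $j$, the term $\langle\nabla_{\theta_j}h,\nabla_{\theta_j}\mathrm{E}(tu;\theta_j)\rangle$ weighted by the limiting gate. This is exactly the SwiGLU identity, so Theorem~\ref{main:theorem_swiglu_mlps} applied to that expert gives part 1: $\nabla_{B^{(j)}}h=0$ and the $A^{(j)}$–$C^{(j)}$ relations.

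Once the expert terms are removed, what remains is $\sum_{i,j}\partial_{z_i}g_j\,\langle\nabla_{W_i}h,x\rangle\,\mathrm{E}(x;\theta_j)=0$. Since the expert outputs are linearly independent as functions of $x$ at generic $\theta$, this forces $\sum_i \partial_{z_i}g_j(x)\langle\nabla_{W_i}h,x\rangle=0$ for every $j$ and every $x$.

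\textbf{The gating constraint.} This is the step I expect to be the main obstacle. From $\sum_i \partial_{z_i}g_j(x)\langle\nabla_{W_i}h,x\rangle=0$ one must extract exactly $\nabla_{W_1}h=\dots=\nabla_{W_n}h$, as stated in Theorem~\ref{theorem:MoE_CL}. The Jacobian $\partial_z g$ of normalized sigmoid gating has the same structural feature as the softmax one: its rows sum to zero, because $\sum_j g_j\equiv1$. So the vector $(\langle\nabla_{W_i}h,x\rangle)_i$ is annihilated by a matrix whose kernel contains the all-ones direction.

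I would compute $\partial_{z_i}g_j$ explicitly as $\sigma'(z_i)(\delta_{ij}-g_j)/\sum_p\sigma(z_p)$. I would then use the exponential separation along rays once more, now keyed on the distinct factors $\sigma'(tW_iu)$, to show that on generic $x$ the kernel is spanned by the all-ones direction. This gives $\langle\nabla_{W_i}h-\nabla_{W_{i'}}h,x\rangle=0$ for all $x$, and hence part 2. The consequence about level sets then follows word for word as in Theorem~\ref{theorem:MoE_CL}.

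\textbf{The sparse case.} For the sigmoid SMoE with $k>1$, I follow the argument of Theorem~\ref{theorem:SMoE_CL}. On the open cones of $x$ where $\text{Top-}k(Wx)$ is locally constant, the model is a sigmoid-gated dense MoE over $k\ge2$ active experts. Running the argument above on each such cone, and covering every pair of indices by some cone, yields the same constraints.
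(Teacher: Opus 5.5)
\textbf{The gating step fails.} From $\sum_j g_j\equiv 1$ you get $\sum_j\partial_{z_i}g_j=0$, so $\mathbf{1}$ is a \emph{left} null vector of the Jacobian $(\partial_{z_i}g_j)_{j,i}$. Your constraint $\sum_i\partial_{z_i}g_j\,v_i=0$ for all $j$, with $v_i=\langle\nabla_{W_i}h,x\rangle$, asks for $v$ in the \emph{right} kernel. The vector $\mathbf{1}$ lies there for all $z$ exactly when $g$ is invariant under $z\mapsto z+c\mathbf{1}$. Softmax has this invariance, and its Jacobian $\mathrm{diag}(g)-gg^\top$ is symmetric, so the distinction never shows up there. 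Normalized sigmoid does not have it. With your own formula and $S=\sum_p\sigma(z_p)$,
\begin{align*}
\sum_i\partial_{z_i}g_j\,v_i=\frac{\sigma(z_j)}{S}\Big((1-\sigma(z_j))\,v_j-\frac{1}{S}\sum_i\sigma'(z_i)\,v_i\Big).
\end{align*}
So the condition is that $(1-\sigma(W_jx))\langle\nabla_{W_j}h,x\rangle$ be independent of $j$. The kernel is spanned by $(1+e^{W_jx})_j$, not by $\mathbf{1}$.

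Take a ray $x=tu$ with the $W_ju$ pairwise distinct, and expand $1-\sigma(s)=\tfrac12-\tfrac{s}{4}+O(s^3)$. This forces the numbers $c_j=\langle\nabla_{W_j}h,u\rangle$ to be equal, then forces the $c_jW_ju$ to be equal, hence $c_j=0$. So your reduction actually yields $\nabla_Wh=0$. That still implies $\nabla_{W_1}h=\cdots=\nabla_{W_n}h$, but the level-set consequence does not transfer ``word for word'': $\sum_iW_i$ is \emph{not} conserved under sigmoid gating. The derivative of $f$ along $W\mapsto W+\mathbf{1}v^\top$ is $(v^\top x)\sum_i\partial_{z_i}f$, and for $n=2$ one finds $\partial_{z_1}f+\partial_{z_2}f=g_1g_2(\mathrm{E}_1-\mathrm{E}_2)(\sigma(z_2)-\sigma(z_1))\neq0$.

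The paper's route ends the same way once it is carried through. Its identity puts the factor $1-\sigma(tW^{(i)}_j)$ in front of $\partial_{W^{(i)}_j}h$, and Step~3 of the softmax proof then gives $(1-\sigma(tW^{(i)}_j))\,\partial_{W^{(i)}_j}h=\sum_{i'}g_{i'}(1-\sigma(tW^{(i')}_j))\,\partial_{W^{(i')}_j}h$ for all $t$. This forces $\partial_{W^{(i)}_j}h=0$. What is provable is therefore the inclusion: every sigmoid-gated law satisfies the constraints of Theorem~\ref{theorem:MoE_CL}. The two classes do not coincide.

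\textbf{Further steps that need repair.} First, the $t\to\infty$ leading, non-exponential part is not the SwiGLU identity. There $\mathrm{SiLU}(t\beta)\approx t\beta$ for $\beta>0$ and is exponentially small for $\beta<0$. So you only see the identity of a ReLU-gated GLU, summed over all experts with $W_ju>0$. That identity is satisfied by, e.g., $(A^{(j)}_k)^2-\|B^{(j)}_{k,:}\|^2$, so it cannot give $\nabla_Bh=0$. The SwiGLU information sits in the coefficients of $e^{-t\beta_k}$, where $\beta_k=|B^{(j)}_{k,:}u|$. For instance, when $B^{(j)}_{k,:}u>0$ the $t^3e^{-t\beta_k}$ coefficient is $A^{(j)}_k\beta_k\gamma_k\langle u,\nabla_{B^{(j)}_{k,:}}h\rangle$ with $\gamma_k=C^{(j)}_{k,:}u$. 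These terms must be separated from the gating exponents and across experts by a genericity argument. The paper does this more cleanly by comparing the double poles of $t\,\mathrm{SiLU}'(B^{(i)}_{kj}t)$ with the at most simple poles of every other term.

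Second, linear independence of the $\mathrm{E}_j$ over $\mathbb{R}$ does not let you cancel the $x$-dependent coefficients $\sum_i\partial_{z_i}g_j\langle\nabla_{W_i}h,x\rangle$. The paper instead expands $\mathrm{E}_i(te_j)$ into SiLU terms. It then kills each gating-only coefficient $\mathfrak{d}^{(i)}_k$ by isolating simple SiLU poles, which avoid the gating poles for generic $B$.

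Third, in the sparse case you need enough $k$-subsets to occur as $\text{Top-}k(Wx)$. The paper gets all of them from the assumption $d>n$.
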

The proof of Theorem~\ref{main:theorem_MoE_sigmoid} is provided in Appendix~\ref{appendix:proof_moe_sigmoid}.

%---------------------------------------

\section{Overview of the Proof Strategy}
\label{main:sec:Discussion}
\begin{figure*}[t]
    \centering
    \begin{subfigure}{0.262\textwidth}
        \centering
        \includegraphics[width=1\textwidth]{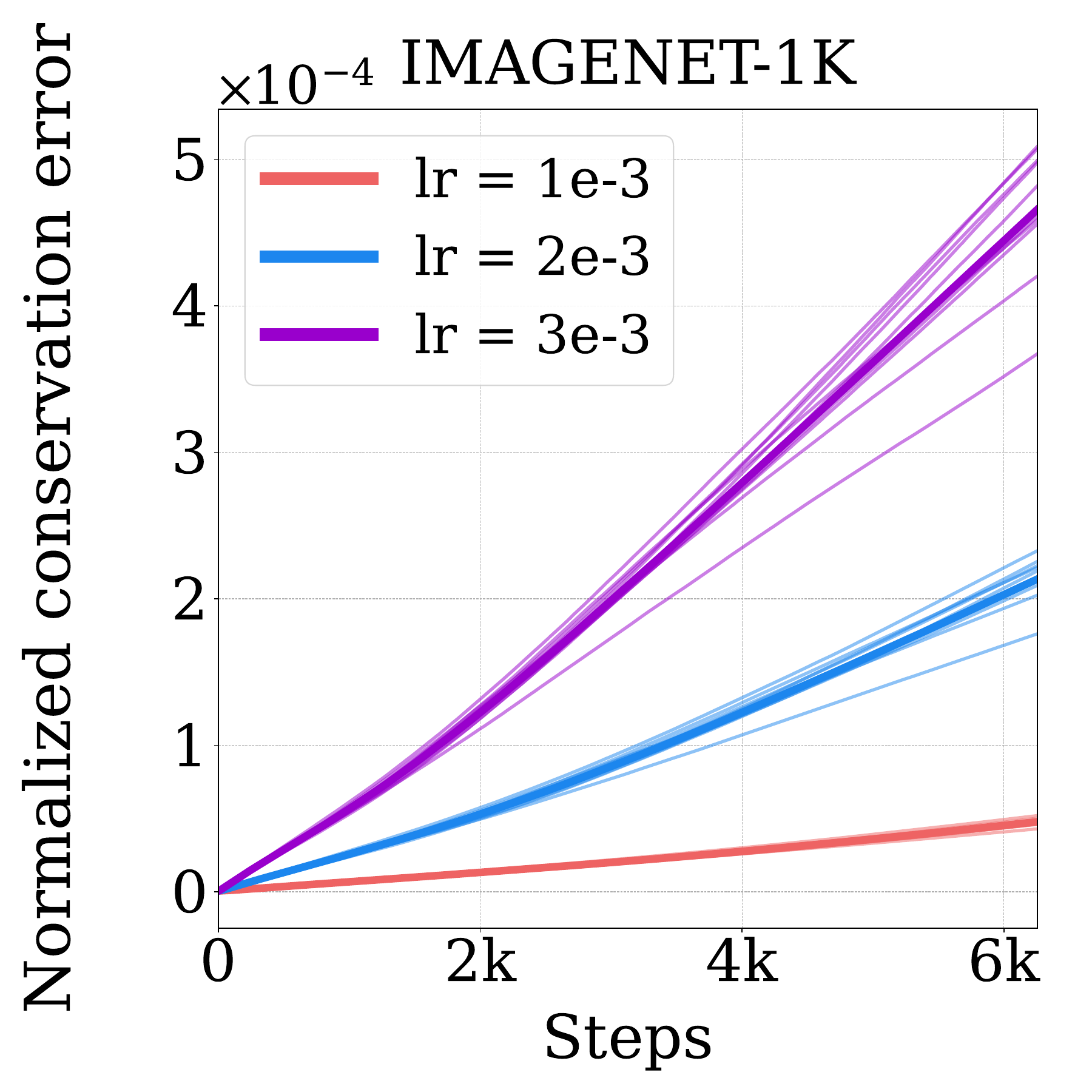} 
        \caption{Multi-head Attention}
        \label{fig:exp_imnet_mha}
    \end{subfigure}
    \begin{subfigure}{0.24\textwidth}
        \centering
        \includegraphics[width=1\textwidth]{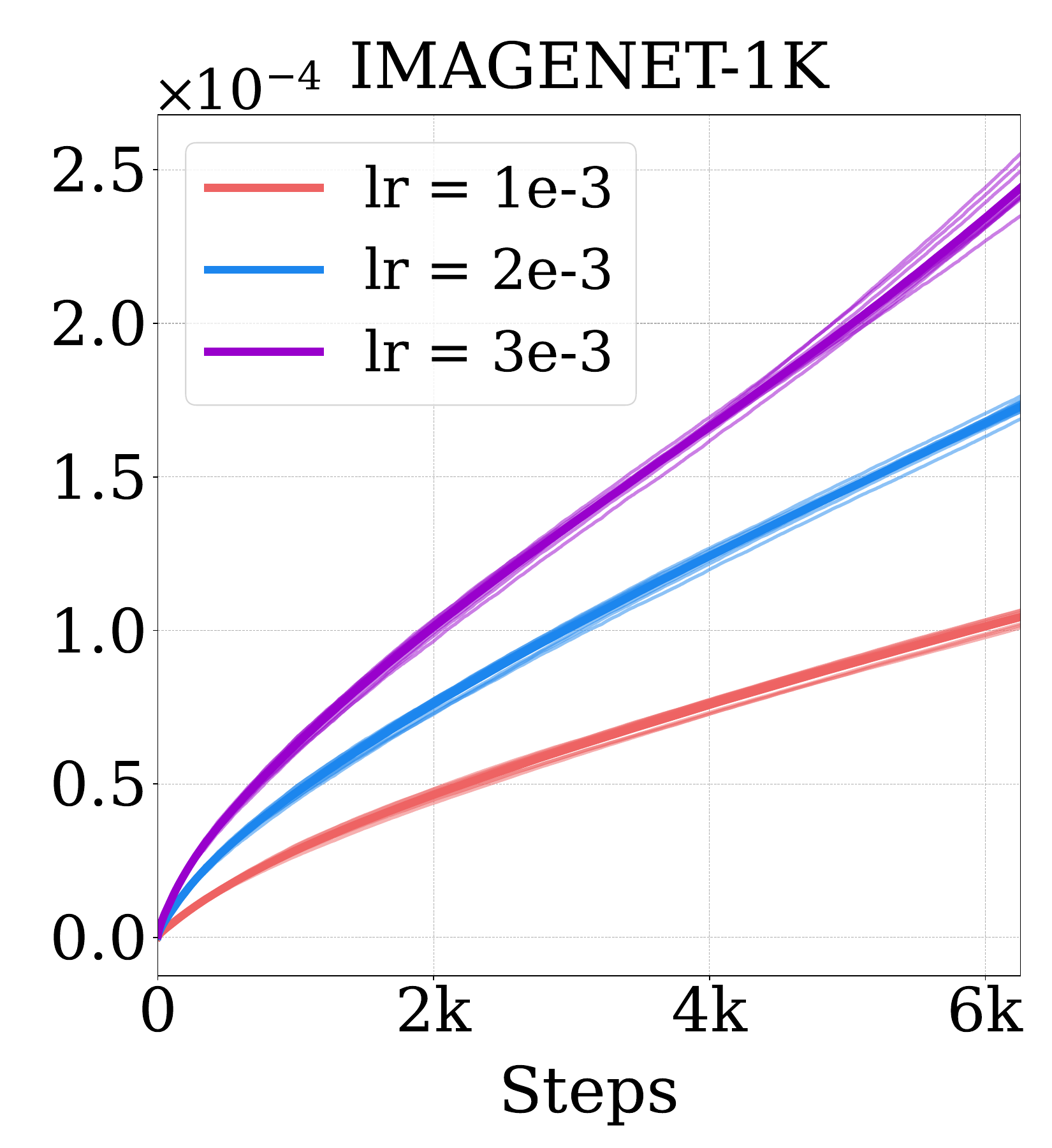} 
        \caption{\ \ SiwGLU FFNs}
        \label{fig:exp_imnet_ffn}
    \end{subfigure}
    \begin{subfigure}{0.24\textwidth}
        \centering
        \includegraphics[width=1\textwidth]{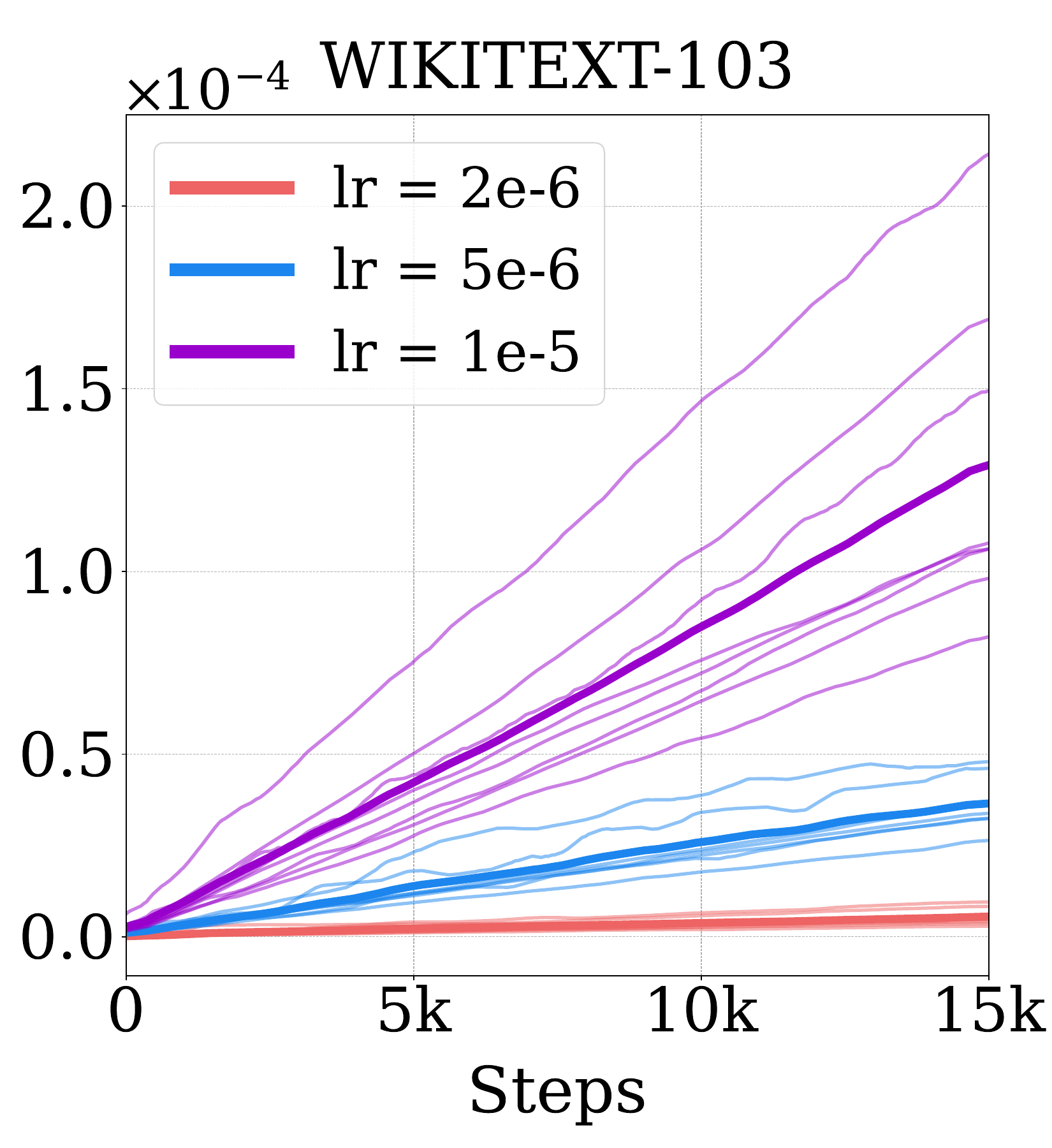} 
        \caption{RoPE}
        \label{fig:exp_wt103_mha}
    \end{subfigure}
    \begin{subfigure}{0.24\textwidth}
        \centering
        \includegraphics[width=1\textwidth]{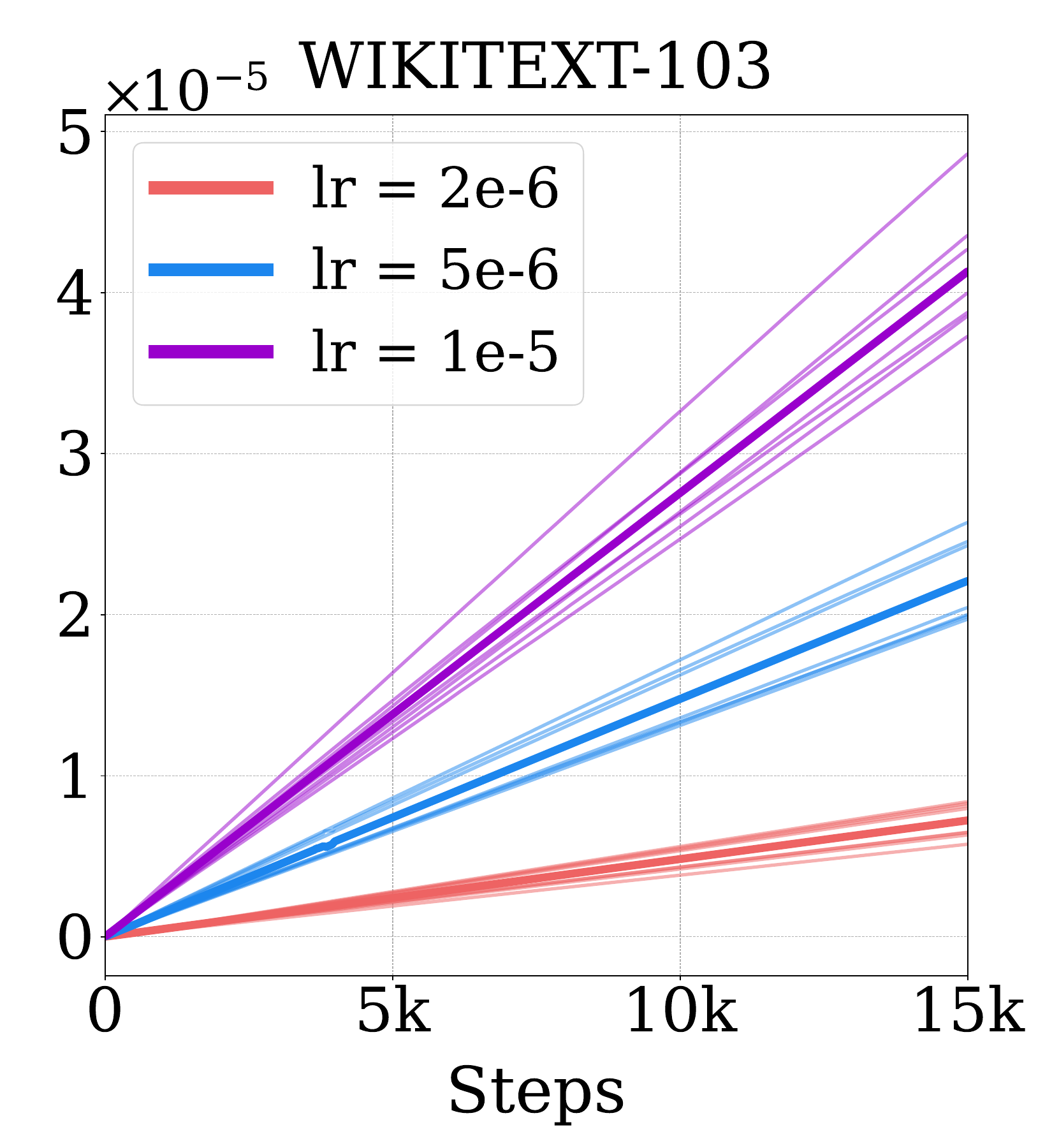} 
        \caption{Softmax Gating (MoE)}
        \label{fig:exp_wt103_moe}
    \end{subfigure}
    \vspace{-10pt}
    \caption{Conservation error scales with learning rate. (a-b) MHA and SwiGLU FFN conservation tracking on ImageNet-1K across three learning rates. (c-d) RoPE and MoE gating conservation on Wikitext-103. }    \label{fig:exp_sgd}
    \vspace{-11pt}
\end{figure*}
We now provide a high-level sketch of the main ideas underlying the proofs of the
characterization results in Section~\ref{sec:4main}. As discussed in
Section~\ref{sec:charac_pre}, our strategy is to reduce the conservation-law
condition to model-input-independent constraints on $h$, and then to identify
the resulting characteristic invariants. This perspective is reflected in each
theorem in Section~\ref{sec:4main}, where the condition yields a
system of PDE-type constraints on $h$, together with the corresponding invariants. Note that our proofs require no additional assumptions beyond those
stated in Sections~\ref{sec:previous_work} and~\ref{sec:charac_pre}.

\textbf{Proof Sketch.} The conservation-law condition is stated in \cref{main:eq_8}, which we recall
here for simplicity in the case $d_{\mathrm{in}} = d_{\mathrm{out}} = d$. For all
$i \in [d]$, $\theta \in \Theta$, and $x \in \mathcal{X}_\theta$, 
\begin{align*}
    \big\langle \nabla_\theta h(\theta), \nabla_\theta f_i(x;\theta) \big\rangle = 0.
\end{align*}
For fixed $\theta \in \Theta$ and $i \in [d]$, define
\begin{align*}
    F_i(\cdot;\theta)
    \coloneqq
    \big\langle \nabla_\theta h(\theta), \nabla_\theta f_i(\cdot;\theta) \big\rangle .
\end{align*}
Whenever the operations appearing in $f_i(\cdot;\theta)$ admit complex-analytic
extensions, we may view $F_i(\cdot;\theta)$ as a complex-valued function on
$\mathbb{C}^d$. In the cases considered in this paper (FFNs, MHA, and dense
MoE), $f_i(\cdot;\theta)$ is meromorphic, and hence so is $F_i$. We then restrict $F_i$ to $\mathbb{C}$,
typically by considering inputs of the form $x = t e_j$, where $t$ ranges over an
open subset of $\mathbb{R}$ such that $t e_j \in \mathcal{X}_\theta$. By the
identity theorem from complex analysis, a meromorphic function that vanishes on a
set with an accumulation point must vanish identically. This implies that
$F_i(\cdot;\theta)$ is identically zero along these subspaces, and hence yields
input-independent constraints on $\nabla_\theta h$.

The SMoE case requires additional care, since the Top-$k$ routing introduces
discontinuities at the boundaries where the active expert set changes. However,
a refined argument shows that conservation laws for SMoE reduce to those of the
corresponding dense MoE model.

Once the identically-zero property is established, the PDE constraints on $h$
are obtained by analyzing the pole structure of the meromorphic components of
$F_i(\cdot;\theta)$. A key tool is the following principle: if a linear
combination of meromorphic functions vanishes identically, then comparing pole
orders forces the coefficients associated with dominant poles to vanish. This
argument repeatedly yields explicit constraints on the gradients of $h$.

\begin{remark}
It is worth emphasizing that the characterization result for
$\text{MHA}^\text{RoPE}$ is highly nontrivial and substantially more
challenging than in the standard MHA setting. The main difficulty stems from the
fact that the interaction term $Q_i R_{p-q} K_i^\top$ in the attention scores
depends on token positions, whereas in vanilla MHA the corresponding
quantity is position-independent. This dependence makes the pole
analysis described above more delicate.
\end{remark}

\begin{remark}
The additional condition $k>1$ in Theorem~\ref{theorem:SMoE_CL} is used to connect
the regions associated with different active expert sets. This connectivity, in
turn, forces the gating gradients $\nabla_{W_i} h$ to coincide across all experts.
\end{remark}

While the proposed framework is broadly applicable, a limitation is that it does
not straightforwardly extend to certain additional components of deep models. For
instance, stacking multiple layers may introduce functions such as $\sqrt{x}$
(through layer normalization) or $e^{1/x}$ (through compositions of
attention layers). Such functions do not admit meromorphic extensions to the
entire complex plane, as they involve branch points or essential singularities.
Addressing these architectures would require a more refined analysis of
singular behavior, for example by extending $F_i$ only to appropriate regions of
$\mathbb{C}^d$ rather than the whole space. We leave this direction as a
promising avenue for future work.

%% file: sections/main_experimental_results.tex
\vspace{-4pt}
\section{Empirical Evidence of Conservation Laws}\label{sec:experiments}

\vspace{-2pt}

We empirically examine the degree to which conservation laws that hold in the continuous-time gradient flow setting persist under discrete-time optimization. Our experiments are motivated by Proposition~5.1 of \citet{marcotte2025transformative}, which characterizes the accumulation of conservation error under stochastic gradient descent (SGD). In particular, for a quantity \(h(\theta)\) that is conserved along the gradient flow, and assuming bounded Hessians ($C_h$) and gradients in expectation ($C_L$), this result establishes that
\begin{equation}
\mathbb{E}\bigl|h(\theta_k)-h(\theta_0)\bigr|
\;\le\;
\frac{C_h C_L}{2}\sum_{i=0}^{k-1}\tau_i^2 .
\label{eq:conservation_bound}
\end{equation}
As a result, when a constant step size \(\tau_k=\tau\) is used, the conservation error grows linearly with the number of iterations,
$
|h(\theta_k)-h(\theta_0)| = \mathcal{O}(\tau^2 k)$.
In contrast, when employing a decaying step size $\tau_k=\tau_0/(k+1)$, which ensures convergence of SGD, the deviation from conservation remains uniformly bounded,
$|h(\theta_k)-h(\theta_0)| = \mathcal{O}(\tau_0^2).$

% \textbf{Datasets and Models.}
% We evaluate our predictions across language modeling and computer vision. For language modeling, we use Qwen-3 \citep{yang2025qwen3} architectures trained on WikiText-103 \citep{merity2016pointer} and Penn Treebank (PTB) \citep{marcus1993building}, with MHA using RoPE. We consider both Dense MoE and SMoE variants with softmax or normalized sigmoid gating. For vision tasks, we use Vision Transformer (ViT) \citep{dosovitskiy2020image} models trained on CIFAR-10 \citep{krizhevsky2009learning} and ImageNet-1K \citep{deng2009imagenet}, employing MHA with absolute positional encoding and FFN blocks with SwiGLU activations. Complete architectural specifications, training configurations, and optimization hyperparameters for all experiments are provided in Appendix~\ref{appendix:hyperparams}.

\textbf{Datasets and Models.} We validate our theoretical finding across language modeling and computer vision domains. For language modeling, we employ Qwen-3 architectures \citep{yang2025qwen3} on WikiText-103 \citep{merity2016pointer} and Penn Treebank \citep{marcus1993building}, incorporating RoPE and both Dense MoE and SMoE variants with softmax and normalized sigmoid gating. For computer vision, we utilize Vision Transformers (ViT) \citep{dosovitskiy2020image} on CIFAR-10 \citep{krizhevsky2009learning} and ImageNet-1K \citep{deng2009imagenet}, featuring absolute PEs and SwiGLU activations. Experimental details are detailed in Appendix~\ref{appendix:hyperparams}.

\textbf{Validation Protocol and Metrics.}
We systematically vary optimization step size and random initialization stochasticity. For each configuration, we train 10 models with independent random seeds, monitoring training loss and conservation errors as functions of iteration $k$. For a network block with $N$ conservation laws $\{h_i\}_{i=1}^N$, we compute the block-level conservation error by averaging the relative deviations:
\begin{equation}
\epsilon_{\text{block}}(k) = \frac{1}{N} \sum_{i=1}^{N} \frac{\left| h_i(\theta_k) - h_i(\theta_0) \right|_2}{\left| h_i(\theta_0) \right|_2}
\end{equation}
We track these metrics across key architectural components: FFN, MHA, MHA with RoPE, and MoE gating.

\textbf{Non-conserved Quantities.}
% We also track non-conserved quantities within each block for qualitative comparison. Direct quantitative comparison is problematic: non-conserved quantities lack theoretical bounds, and any bounded linear combination of conservation laws would itself be conserved. Nevertheless, we log representative non-conserved quantities to demonstrate their contrasting behavior--small parameter changes can induce large deviations, unlike the bounded evolution exhibited by conservation laws. Detailed specifications of the non-conserved quantities tracked for each architectural component are provided in Appendix~\ref{appendix:experimental_results}.
To provide a qualitative baseline, we also monitor representative non-conserved quantities. While lacking theoretical bounds for direct comparison, non-conserved quantities exhibit large deviations under small parameter changes, unlike the strictly bounded evolution of conservation laws. Detailed specifications of the non-conserved quantities tracked for each architectural component are provided in Appendix~\ref{appendix:experimental_results}.

\textbf{Learning Rate Scheduler.} 
% We focus primarily on linear learning rate schedulers to observe the evolution of conservation law deviations. Schedulers that decay the learning rate to zero would trivially bound conservation errors by a constant, as the quantities plateau when parameter updates diminish. The linear scheduler enables us to observe the long-term behavior of conservation laws under sustained optimization dynamics, providing a more informative test of our theoretical predictions.
We employ a linear learning rate scheduler to assess the long-term evolution of conservation laws under sustained optimization dynamics. Unlike schedulers that decay to zero, which yield trivial error bounds as parameter updates vanish, a linear schedule maintains active learning. This approach prevents the artificial saturation of conservation deviations, thereby providing a more rigorous validation of our theoretical predictions.

% We evaluate conservation law behavior under two training regimes that span from idealized to realistic settings. First, to approximate continuous-time gradient flow, we conduct full-batch gradient descent experiments where the batch size equals the entire training set. This regime minimizes stochasticity and provides the most direct comparison to theoretical predictions from our continuous-time analysis. Due to computational constraints, we use smaller-scale datasets: CIFAR-10 for vision tasks and Penn Treebank (PTB) for language modeling, evaluating dense MoE variants of Qwen-3 with softmax and normalized sigmoid gating on PTB.

% Second, to assess conservation law robustness in realistic large-scale scenarios, we employ mini-batch SGD on ImageNet-1K and WikiText-103. These larger datasets necessitate stochastic optimization and test whether theoretical predictions from Equation~\eqref{eq:conservation_bound} hold under practical training conditions with sampling noise. For WikiText-103, we examine sparse MoE architectures with both softmax and normalized sigmoid gating mechanisms.

\textbf{Results.}  We empirically validate conservation laws under two experimental regimes. First, to approximate continuous-time gradient flow with minimal stochasticity, we employ full-batch gradient descent on small-scale datasets (CIFAR-10 and PTB). Second, we utilize mini-batch SGD on large-scale datasets (ImageNet-1K and WikiText-103), thereby testing the bounds of Equation~\eqref{eq:conservation_bound} under practical training conditions. Our analysis assesses conservation laws across MHA, MHA with RoPE, SwiGLU FFNs, and MoE Gating (covering both softmax and normalized sigmoid mechanisms) within dense and spare MoE architectures. Figures~\ref{fig:exp_sgd} and~\ref{fig:exp_full_batch} demonstrate that conservation laws maintain their bounded behavior regardless of training regimes, architectures, and datasets. In both settings, we observe consistent $O(\tau^2 k)$ scaling for multi-head attention (a), SwiGLU feedforward networks (b), RoPE (c), and MoE softmax gating (d), with higher learning rates inducing proportionally larger bounds as predicted by Equation~\eqref{eq:conservation_bound}. This consistency confirms that conservation laws are fundamental properties of the optimization geometry rather than artifacts of specific network designs or training procedures. Normalized sigmoid gating results are detailed in Appendix~\ref{appendix:experimental_results}.

%% file: sections/main_conclusion.tex
\vspace{-7pt}
\section{Conclusion}
\vspace{-3pt}
\label{main:section{Conclusion}}
This paper develops a theoretical framework and establishes complete
characterizations of conservation laws for modern neural architectures
widely used in deep learning, including FFNs with GeLU, SiLU, and
SwiGLU activations, MHA with positional encodings, and
MoE under diverse gating designs. We believe these results
advance the theoretical understanding of conserved quantities in 
training dynamics. A limitation of our approach is that its reliance on complex-analytic techniques
requires further refinement to handle additional mechanisms such as layer
normalization or deep layer compositions, as discussed in
Section~\ref{main:sec:Discussion}. Moreover, the data-independence assumption,
adopted from prior work  \citep{marcotte2025transformative}, may be seen
as restrictive; however, it is best viewed as a principled design choice that
isolates the implicit bias induced solely by the architecture and optimization
dynamics (see Remark~\ref{remark_1}). We leave these directions as promising
avenues for future research.

%% file: sections/appendix_theory_MLP.tex
\section{Theoretical Proofs of Conservation Laws for Feedforward Networks with Modern Activations}

As noted in Section~\ref{sec:charac_pre}, it suffices to consider feedforward
networks with real-valued (scalar) outputs.

\subsection{Conservation Laws for Feedforward Networks with GELU Activation -- Theorem~\ref{main:theorem_gelu_silu_mlps} (GELU part)}
\label{appendix:theory_gelu_ffn}

\textbf{Feedforward Networks with GELU Activation.} Recall that, an FFN with GELU  is a function $f \colon \mathbb{R}^d \to \mathbb{R}$ of the form
\begin{align}
    f(x;A,B) = A \cdot \text{GELU}(Bx),
\end{align}
where the GELU activation is defined by:
\begin{align}
    \textnormal{GELU}(z) = z \Phi(z), \quad \text{ where } \Phi(z) = \frac{1}{\sqrt{2\pi}} \int_{-\infty}^{z} e^{-t^{2}/2} dt ~ \text{ is the standard normal CDF}.
\end{align}
\textbf{Parameter Space.} The function $f$ is parameterized by:
\begin{align}
    \theta = (A,B) \in \Theta_{\text{GELU}} \coloneqq \mathbb{R}^{1\times d_1} \times \mathbb{R}^{d_1 \times d}.
\end{align}
We now present the proof of the conservation laws for feedforward networks with GELU activation, which constitutes the first part of Theorem~\ref{main:theorem_gelu_silu_mlps}.
\begin{proof}
    By definition, the $f$ is $\mathcal{C}^1$ with respect to $A, B$. The gradients of $f$ with respect to $A, B$ are given by:
    \begin{align}
    \nabla_A f = \text{GELU}(Bx)^{\top}, \qquad \nabla_B f= (A^{\top}\odot \text{GELU}'(Bx))x^{\top}.
\end{align}
Let $h \colon \Theta_{\text{GELU}} \to \mathbb{R}$ be a conservation law for $f$. We have, for all $(A,B) \in \Theta_{\text{GELU}}$ and $x \in \mathbb{R}^d$,
\begin{align}
    0 = \langle\nabla f, \nabla h \rangle =  \langle \nabla_A f,\nabla_A h\rangle
+
\langle \nabla_B f,\nabla_B h\rangle.
\end{align}
Substituting the gradients of $f$, we obtain
\begin{align}\label{appendix:ffn:eq_1}
        0 =   \Big\langle \text{GELU}(Bx)^{\top},\nabla_A h\Big\rangle
+
\Big\langle \left(A^{\top}\odot \text{GELU}'(Bx)\right)x^{\top},\nabla_B h\Big\rangle.
\end{align}
For $i \in [d_1]$, let $b_i \in \mathbb{R}^d$ be the $i$-th row of $B$.
Then \cref{appendix:ffn:eq_1} is equivalent to:
\begin{align}\label{appendix:ffn:eq_2}
    \sum_{i=1}^{d_1} \text{GELU}(b_i x) \cdot \partial_{A_i} h
+
\sum_{i=1}^{d_1}\sum_{j=1}^{d}  A_i\text{GELU}'(b_ix)x_j \cdot \partial_{B_{ij}} h
= 0,
\end{align}
for all $x\in\mathbb{R}^d$. For each $j\in [d]$, set $x=t e_j$ where $t \in \mathbb{R}$ and $e_j \in \mathbb{R}^{d\times 1}$ is the $j$-th basis vector.
Then $b_ix = B_{ij}t$, and \cref{appendix:ffn:eq_2} becomes
\begin{align} \label{eq:GELU_MLP_1}
    \sum_{i=1}^{d_1} \text{GELU}(B_{ij}t)\cdot \partial_{A_i} h
+
\sum_{i=1}^{d_1} A_it\text{GELU}'(B_{ij}t)\cdot \partial_{B_{ij}} h
= 0,
\end{align}
for all $t\in\mathbb{R}$. Thus, for every $j \in [d]$, we obtain an analytic identity in one real variable $t$.
We have the following lemma.
\begin{lemma}\label{lemma:GELU_MLP_1}
    Let $\alpha_1,\dots,\alpha_h \in \mathbb{R}$ such that $\alpha_1^2,\dots,\alpha_h^2$ are pairwise distinct and nonzero. The $2h$ functions
\begin{align}
    t \mapsto \textnormal{GELU}(\alpha_i t), \qquad
t \mapsto t\textnormal{GELU}'(\alpha_i t), \qquad \text{ for } i \in[h],
\end{align}
are linearly independent.
\end{lemma}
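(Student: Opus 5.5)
We would prove the lemma by reducing the linear relation to an identity between Gaussian exponentials with distinct decay rates, which are easy to separate. Throughout we write $\varphi = \Phi'$ for the standard normal density, so that $\varphi(z) = e^{-z^2/2}/\sqrt{2\pi}$, $\textnormal{GELU}(z) = z\Phi(z)$ and $\textnormal{GELU}'(z) = \Phi(z) + z\varphi(z)$.

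\textbf{Step 1: simplify the relation.} Suppose that for some real coefficients $c_1,\dots,c_h,d_1,\dots,d_h$ we have
\begin{align*}
\sum_{i=1}^h c_i\,\textnormal{GELU}(\alpha_i t) + d_i\, t\,\textnormal{GELU}'(\alpha_i t) = 0 \quad \text{for all } t \in \mathbb{R}.
\end{align*}
Substituting the formulas above, every term carries a factor $t$. Dividing by $t$ for $t \neq 0$ gives
\begin{align*}
\sum_{i=1}^h (c_i\alpha_i + d_i)\,\Phi(\alpha_i t) + d_i\alpha_i\, t\,\varphi(\alpha_i t) = 0 \quad \text{for all } t \neq 0,
\end{align*}
and by continuity this also holds at $t = 0$.

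\textbf{Step 2: differentiate once to remove $\Phi$.} The non-elementary function $\Phi$ disappears after one derivative in $t$, using $\frac{d}{dt}\Phi(\alpha t) = \alpha\varphi(\alpha t)$ and $\frac{d}{dt}\big[t\varphi(\alpha t)\big] = (1 - \alpha^2 t^2)\varphi(\alpha t)$. The result is
\begin{align*}
\sum_{i=1}^h \big(p_i + q_i t^2\big)\, e^{-\alpha_i^2 t^2/2} = 0, \qquad p_i = \alpha_i(c_i\alpha_i + 2d_i), \quad q_i = -d_i\alpha_i^3,
\end{align*}
up to the harmless common factor $1/\sqrt{2\pi}$.

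\textbf{Step 3 (main step): separate the exponentials.} Set $s = t^2 \ge 0$. The identity becomes $\sum_i (p_i + q_i s)e^{-\lambda_i s} = 0$ on $[0,\infty)$, where $\lambda_i = \alpha_i^2/2$ are pairwise distinct by hypothesis. We argue by contradiction on the smallest rate. Suppose some pair $(p_i,q_i)$ is nonzero, and among such indices choose the one with the smallest $\lambda_i$, say $\lambda_{i_0}$. Multiply the identity by $e^{\lambda_{i_0}s}$. Every other term with a nonzero coefficient pair has $\lambda_i > \lambda_{i_0}$, so after multiplication it tends to $0$ as $s \to \infty$, even with the linear factor $s$. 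Hence $p_{i_0} + q_{i_0}s \to 0$. This forces $q_{i_0} = 0$ and then $p_{i_0} = 0$, a contradiction. Alternatively, one can invoke the standard linear independence of exponential polynomials, extended analytically from $[0,\infty)$ to all of $\mathbb{R}$. Either way, $p_i = q_i = 0$ for all $i$.

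\textbf{Step 4: recover the original coefficients.} Since $\alpha_i \neq 0$, the equation $q_i = -d_i\alpha_i^3 = 0$ gives $d_i = 0$. Then $p_i = c_i\alpha_i^2 = 0$ gives $c_i = 0$. So all $2h$ coefficients vanish, which is the claimed linear independence.

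The only genuine content is Step 3, and it is exactly where the hypotheses enter. The exponentials must have pairwise distinct rates $\alpha_i^2$ so they can be told apart, and the $\alpha_i$ must be nonzero so that $d_i$ and $c_i$ can be recovered from $q_i$ and $p_i$. The sign of each $\alpha_i$ never matters, because after differentiation only the even function $\varphi$ appears.
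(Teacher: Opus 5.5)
Your proof is correct, but it takes a different route from the paper's.

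The paper works with Taylor coefficients at $0$. It expands $\textnormal{GELU}$ as a power series and uses that every even-order coefficient $\beta_{2m}$ ($m\ge 1$) is nonzero. Reading off the $t^{2m}$ coefficients gives the identities $\sum_i (u_i + m w_i)\gamma_i^m = 0$ for all $m\ge 1$, where $\gamma_i=\alpha_i^2$ and $w_i=2v_i/\alpha_i$. It then packages these into the rational generating function $\sum_i u_i\frac{\gamma_i z}{1-\gamma_i z} + w_i\frac{\gamma_i z}{(1-\gamma_i z)^2}\equiv 0$. Its distinct poles at $1/\gamma_i$ force first $w_i=0$ (from the double pole) and then $u_i=0$ (from the simple pole).

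You instead use the closed form $\Phi'=\varphi$. After dividing by $t$ and differentiating once, the relation becomes an exponential polynomial $\sum_i (p_i+q_i s)e^{-\alpha_i^2 s/2}$ in $s=t^2$. You then separate the distinct Gaussian decay rates by a dominant-term argument as $s\to\infty$. Your order of elimination, first $q_i$ (the $s e^{-\lambda s}$ terms) and then $p_i$, mirrors the paper's double-pole-then-simple-pole hierarchy.

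Your version is more elementary: it is purely real-variable, with no power series or pole analysis. It also makes each hypothesis visible. Distinct $\alpha_i^2$ gives distinct decay rates, and $\alpha_i\neq 0$ makes $(c_i,d_i)\mapsto(p_i,q_i)$ invertible.

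The paper's argument needs only that the activation is analytic near $0$ with all even Taylor coefficients nonzero. That is why it transfers almost verbatim to SiLU in Lemma~\ref{lemma:SiLU_MLP_1}, via nonzero Bernoulli numbers on the disk of radius $\pi$. Your derivative trick depends on $\Phi'$ being Gaussian. For SiLU it would need a less clean asymptotic expansion of $\sigma'$ and $\sigma''$ in powers of $e^{-|\alpha_i| t}$.
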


\begin{proof}
Since $\textnormal{GELU}$ is analytic, we express $\sigma$ and $\sigma'$ by their power series, as follows:
\begin{align}
\textnormal{GELU}(z)
=
\dfrac{1}{2}z
+
\dfrac{1}{\sqrt{2\pi}}
\sum_{n=0}^{\infty}
\dfrac{(-1)^n}{2^n n!(2n+1)} z^{2n+2}, \label{appendix:ffn:eq_3}\\
\textnormal{GELU}'(z)
=
\dfrac{1}{2}
+
\dfrac{1}{\sqrt{2\pi}}
\sum_{n=0}^{\infty}
\dfrac{(-1)^n(2n+2)}{2^n n!(2n+1)} z^{2n+1}.
\end{align}
For simplicity, set the coefficients of these two series as follows:
\begin{align}
    \textnormal{GELU}(z) = \sum_{n = 0}^\infty \beta_n z^n,
\qquad
\textnormal{GELU}'(z) =  \sum_{n = 0}^\infty (n+1)\beta_{n+1} z^n.
\end{align}
From \cref{appendix:ffn:eq_3}, we have $\beta_1 \neq 0$, $\gamma_0 \neq 0$, and $\beta_{2m} \neq 0$, $\gamma_{2m-1} \neq 0$ for all $m \ge 1$. The rest are equal to $0$. 
Now, assume that there exists real numbers $u_i$ and $v_i$ for $i \in [h]$ such that
\begin{align}
    \sum_{i=1}^{d_1} u_i \textnormal{GELU}(\alpha_i t)
+
\sum_{i=1}^{d_1} v_i \, t\textnormal{GELU}'(\alpha_i t)
= 0.
\end{align}
This is an analytic function in $t$ that is identical to $0$. Thus, for all $m \ge 0$, the  coefficient of $t^m$ must vanish. In particular, for  $m > 0$, the coefficient of $t^{2m}$ is equal to $0$, i.e. 
\begin{align}
    \beta_{2m} \sum_{i=1}^{h} u_i \alpha_i^{2m}
+
\beta_{2m} 2m \sum_{i=1}^{h} v_i \alpha_i^{2m-1}
= 0.
\end{align}
Since $\beta_{2m} \neq 0$, by setting $w_i = 2v_i /\alpha_i$ and $\gamma_i = \alpha_i^2$, we have
\begin{align}\label{appendix:ffn:eq_4}
    \sum_{i=1}^{h} (u_i + mw_i)\gamma_i^m = 0.
\end{align}
Consider the following power series:
\begin{align}
    s(z) = \sum_{m=1}^\infty \left(\sum_{i=1}^{h} (u_i + mw_i)\gamma_i^m \right) z^m.
\end{align}
By \cref{appendix:ffn:eq_4}, we have $s$ is identical to  $0$. Moreover, 
\begin{align} \label{appendix:ffn:eq_5}
    0 = s(z) &= \sum_{m=1}^\infty \left(\sum_{i=1}^{h} (u_i + mw_i)\gamma_i^m \right) z^m \notag \\&= \sum_{i=1}^{h}u_i\sum_{m=1}^\infty (\gamma_iz)^m + \sum_{i=1}^{h}w_i\sum_{m=1}^\infty m(\gamma_iz)^m = \sum_{i=1}^{h}u_i \dfrac{\gamma_iz}{1-\gamma_iz} + \sum_{i=1}^{h}w_i \dfrac{\gamma_iz}{(1-\gamma_iz)^2}.
\end{align}
Note that $1/\gamma_i$ for $i \in [h]$ are $h$ distinct poles of the last expression of $s$ in \cref{appendix:ffn:eq_5}. Since $s$ is identically zero, its Laurent expansion at every pole must have vanishing principal part. Therefore both coefficients in the above expression must be zero, which implies $u_i = w_i =0$. This leads to $u_i = v_i = 0$. We conclude that the $2h$ functions
\begin{align}
    t \mapsto \textnormal{GELU}(\alpha_i t), \qquad
t \mapsto t\textnormal{GELU}'(\alpha_i t), \qquad \text{ for } i \in[h],
\end{align}
are linearly independent.
\end{proof}
Back to the problem. Define the following set
\begin{align}
    \mathbf{S}_1 \coloneqq \{\theta = (A,B) ~ \colon ~  B_{ij}^2 \text{ is nonzero and pairwise distinct for all } i \in[d_1] \text{ and } j \in[d]\} \subset \Theta_\text{GELU}. 
\end{align}
From Equation~\eqref{eq:GELU_MLP_1}, by applying Lemma~\ref{lemma:GELU_MLP_1}, we have:
\begin{align}\label{eq:GELU_MLP_2}
    \nabla_{A_i}h = A_i \cdot \nabla_{B_{ij}}h = 0,
\end{align}
for all $i \in [d_1]$, $j \in [d]$, and $\theta \in \mathbf{S}_1$. Define the following set
\begin{align}
    \mathbf{S}_2 \coloneqq \{\theta = (A,B) ~ \colon ~  A_i \text{ is nonzero for all } i \in[d_1]\}\subset \Theta_\text{GELU}. 
\end{align}
From Equation~\eqref{eq:GELU_MLP_2}, we have $\partial_{A_i}h = \partial_{B_{ij}}h = 0$ for all $i \in [d_1]$, $j \in [d]$ and $\theta \in \mathbf{S_1} \cap \mathbf{S}_2$. Since $h$ is $\mathcal{C}^1$, and the set $\mathbf{S_1} \cap \mathbf{S}_2$ is dense in $\Theta_{\text{GELU}}$, we conclude that $\nabla_{A}h$ and  $\nabla_{B}h$ are $0$ on $\Theta_{\text{GELU}}$, which means $h$ is a constant function.
\end{proof}

\subsection{Conservation Laws for Feedforward Networks with SiLU Activation -- Theorem~\ref{main:theorem_gelu_silu_mlps} (SiLU part)}
\label{appendix:theory_silu_ffn}

\textbf{Feedforward Networks with SiLU Activation.} Recall that, an FFN with SiLU  is a function $f \colon \mathbb{R}^d \to \mathbb{R}$ of the form
\begin{align}
    f(x;A,B) = A \cdot \text{SiLU}(Bx),
\end{align}
where the SiLU activation is defined by:
\begin{align}
    \textnormal{SiLU}(z) = z \sigma(z), \quad \text{ where } \sigma(z) = \dfrac{1}{1+e^{-z}} ~ \text{ is the sigmoid function}.
\end{align}
\textbf{Parameter Space.} The function $f$ is parameterized by:
\begin{align}
    \theta = (A,B) \in \Theta_{\text{SiLU}} \coloneqq \mathbb{R}^{1\times d_1} \times \mathbb{R}^{d_1 \times d}.
\end{align}
We now present the proof of the conservation laws for  feedforward networks with SiLU, which constitutes the second part of Theorem~\ref{main:theorem_gelu_silu_mlps}.
\begin{proof}
Owing to the close similarity between SiLU and GeLU feedforward networks, the
proof follows exactly the same steps. In particular, if
$h \colon \Theta_{\text{SiLU}} \to \mathbb{R}$ is a conservation law for $f$, then
the same argument yields
    \begin{align} \label{eq:SiLU}
    \sum_{i=1}^{d_1} \text{SiLU}(B_{ij}t)\cdot \partial_{A_i} h
+
\sum_{i=1}^{d_1} A_it\text{SiLU}'(B_{ij}t)\cdot \partial_{B_{ij}} h
= 0,
\end{align}
for all $j \in [d]$ and $t \in \mathbb{R}$. Moreover, an analogue of
Lemma~\ref{lemma:GELU_MLP_1} holds for the SiLU activation.
\begin{lemma}\label{lemma:SiLU_MLP_1}
    Let $\alpha_1,\dots,\alpha_h \in \mathbb{R}$ such that $\alpha_1^2,\dots,\alpha_h^2$ are pairwise distinct and nonzero. The $2h$ functions
\begin{align}
    t \mapsto \textnormal{SiLU}(\alpha_i t), \qquad
t \mapsto t\textnormal{SiLU}'(\alpha_i t), \qquad \text{ for } i \in[h],
\end{align}
are linearly independent.
\end{lemma}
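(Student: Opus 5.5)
Mirroring the proof of Lemma~\ref{lemma:GELU_MLP_1}, we expand everything in Taylor series around $t=0$ and reduce linear independence to a statement about rational functions.

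\textbf{Step 1: parity structure of the series.} Write $\textnormal{SiLU}(z)=z\sigma(z)$ and use $\sigma(z)=\tfrac12+\tfrac12\tanh(z/2)$. This gives
\[
\textnormal{SiLU}(z)=\tfrac{z}{2}+\tfrac{z}{2}\tanh(z/2).
\]
The function $\tanh(z/2)$ is odd and analytic on the disc $|z|<\pi$. Its Taylor coefficients are $\tanh(w)=\sum_{n\ge1}\frac{2^{2n}(2^{2n}-1)B_{2n}}{(2n)!}w^{2n-1}$, and every one of them is nonzero because the Bernoulli numbers $B_{2n}$ never vanish. Hence $\textnormal{SiLU}(z)=\sum_n\beta_n z^n$ with $\beta_1=\tfrac12$, $\beta_{2m}\neq0$ for all $m\ge1$, and all other odd coefficients equal to zero. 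This is exactly the pattern used in the GELU case, only now with a finite radius of convergence. Differentiating gives $t\,\textnormal{SiLU}'(\alpha t)=\sum_n n\beta_n\alpha^{n-1}t^n$.

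\textbf{Step 2: coefficient extraction.} Suppose $\sum_i u_i\textnormal{SiLU}(\alpha_i t)+\sum_i v_i\,t\,\textnormal{SiLU}'(\alpha_i t)=0$ for all real $t$. Take $|t|<\pi/\max_i|\alpha_i|$, so that every series converges. The coefficient of $t^{2m}$ must vanish for every $m\ge1$:
\[
\beta_{2m}\Big(\sum_i u_i\alpha_i^{2m}+2m\sum_i v_i\alpha_i^{2m-1}\Big)=0 .
\]
Divide by $\beta_{2m}\neq0$ and set $\gamma_i=\alpha_i^2$, $w_i=2v_i/\alpha_i$. This yields $\sum_i(u_i+mw_i)\gamma_i^m=0$ for all $m\ge1$.

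\textbf{Step 3: pole argument.} Form the generating function
\[
s(z)=\sum_i u_i\frac{\gamma_i z}{1-\gamma_i z}+\sum_i w_i\frac{\gamma_i z}{(1-\gamma_i z)^2}.
\]
By Step 2 it vanishes on a neighbourhood of $0$, so as a rational function it vanishes identically. The points $1/\gamma_i$ are distinct poles. At $1/\gamma_i$, the coefficient of the double pole is proportional to $w_i$ and the simple-pole part gives $u_i$ after $w_i=0$. Both must vanish, so $u_i=w_i=0$ and therefore $v_i=0$.

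\textbf{Main obstacle.} The only genuinely new ingredient compared with the GELU case is Step 1. There we must verify that the even-degree coefficients $\beta_{2m}$ of SiLU are all nonzero, and this depends on the non-vanishing of the Bernoulli numbers $B_{2n}$. We must also handle the finite radius of convergence; this is resolved by restricting $t$ to a small interval around $0$, on which the power-series identity is legitimate.
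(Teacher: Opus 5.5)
Your proposal is correct and follows essentially the same route as the paper, whose proof records the expansion $\textnormal{SiLU}(z)=\tfrac{z}{2}+\sum_{n\ge1}\frac{(2^{2n}-1)B_{2n}}{(2n)!}z^{2n}$ on $|z|<\pi$ and then reuses the GELU coefficient-extraction and pole argument. Your derivation of that expansion from $\sigma(z)=\tfrac12+\tfrac12\tanh(z/2)$, and your explicit restriction to $|t|<\pi/\max_i|\alpha_i|$ to handle the finite radius of convergence, fill in details the paper leaves implicit.
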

\begin{proof}
As in Lemma~\ref{lemma:GELU_MLP_1}, the proof of Lemma~\ref{lemma:SiLU_MLP_1}
follows from analyzing the power series expansion of the SiLU activation. A key
distinction, however, is that unlike GeLU, the SiLU function is not entire.
 However, it is holomorphic in the disk $\{z \in \mathbb{C} \colon |z| < \pi \}$, and within this disk we have
\begin{align}
\text{SiLU}(z)
&= \dfrac{1}{2}z
+ \sum_{n=1}^{\infty}
\dfrac{(2^{2n}-1)B_{2n}}{(2n)!} z^{2n}.
\end{align}
Here, $B_{2n}$ denotes the $2n$-th Bernoulli numbers, which are well-known to be nonzero for all $n \ge 1$. 
\end{proof}
The remainder of the proof follows exactly as in the GeLU case; see
Appendix~\ref{appendix:theory_gelu_ffn}.
\end{proof}

\subsection{Conservation Laws for Feedforward Networks with SwiGLU Activation -- Theorem~\ref{main:theorem_swiglu_mlps}}
\label{appendix:CL_SwiGLU}

\textbf{Feedforward Networks with SwiGLU Activation.} Recall that, an FFN with SwiGLU is a function $f \colon \mathbb{R}^d \to \mathbb{R}$ of the form
\begin{align}
    f(x;A,B,C) = A \cdot \big(\textup{SiLU}(Bx) \odot (Cx)\big),
\end{align}
where $\odot$ denotes the Hadamard product. 

\textbf{Parameter Space.} The function FFN $f$ is parameterized by:
\begin{align}
    \theta = (A,B,C) \in \Theta_{\text{SwiGLU}} \coloneqq \mathbb{R}^{1\times d_1} \times \mathbb{R}^{d_1 \times d}\times \mathbb{R}^{d_1 \times d}.
\end{align}
We now present the proof of the conservation laws for feedforward networks with SwiGLU, as stated in Theorem~\ref{main:theorem_swiglu_mlps}.
\begin{proof}
    The SwiGLU FFN $f$ is $\mathcal{C}^1$ with respect to $A, B,C $. The gradients of $f$ with respect to $A, B, C$ are given by:
    \begin{align}
   \nabla_A f
=
\big( \textup{SiLU}(Bx)\odot (Cx) \big)^\top, ~\quad 
\nabla_B f
=
\big( A^\top \odot \big( (Cx)\odot \textup{SiLU}'(Bx) \big) \big) x^\top, ~\quad
\nabla_C f
=
\big( A^\top \odot \textup{SiLU}(Bx) \big) x^\top.
\end{align}
Let $h \colon \Theta_{\text{SwiGLU}} \to \mathbb{R}$ be a conservation law for $f$. We have, for all $(A,B,C) \in \Theta_{\text{SwiGLU}}$ and $x \in \mathbb{R}^d$,
\begin{align}
    0 = \langle\nabla f, \nabla h \rangle =  \langle \nabla_A f,\nabla_A h\rangle
+
\langle \nabla_B f,\nabla_B h\rangle+\langle \nabla_C f,\nabla_C h\rangle.
\end{align}
Substituting the gradients of $f$, we obtain
\begin{align}\label{appendix:ffn:eq_6}
0 &= \Big\langle \big( \textup{SiLU}(Bx)\odot (Cx) \big)^\top,\nabla_A h\Big\rangle \notag\\
&\hspace{78pt}+ 
\Big\langle \big( A^\top \odot \big( (Cx)\odot \textup{SiLU}'(Bx) \big) \big) x^\top,\nabla_B h\Big\rangle + \Big\langle \big( A^\top \odot \textup{SiLU}(Bx) \big) x^\top, \nabla_C h\Big\rangle.
\end{align}
For $i \in [d_1]$, let $b_i \in \mathbb{R}^d$ be the $i$-th row of $B$ and $c_i \in \mathbb{R}^d$ be the $i$-th row of $C$.
Then \cref{appendix:ffn:eq_6} is equivalent to
\begin{align} \label{appendix:ffn:eq_7}
    0
&=
\sum_{i=1}^{d_1}
\Big( \textup{SiLU}(b_ix)(c_ix) \Big) \cdot \partial_{A_i} h \notag \\
&\hspace{80pt}+
\sum_{i=1}^{d_1}\sum_{j=1}^{d}
\Big( A_i(c_ix)\textup{SiLU}'(b_ix)x_j \Big) \cdot \partial_{B_{ij}} h +
\sum_{i=1}^{d_1}\sum_{j=1}^{d}
\Big( A_i\textup{SiLU}(b_ix)x_j \Big) \cdot \partial_{C_{ij}} h.
\end{align}
For each $j\in [d]$, set $x=t e_j$ where $t \in \mathbb{R}$ and $e_j \in \mathbb{R}^{d\times 1}$ is the $j$-th basis vector.
Then $b_ix = B_{ij}t$ and $c_ix=C_{ij}t$, and \cref{appendix:ffn:eq_7} becomes
\begin{align}
    0
&=
\sum_{i=1}^{d_1}
\Big( \textup{SiLU}(B_{ij}t)(C_{ij}t) \Big) \cdot \partial_{A_i} h +
\sum_{i=1}^{d_1}
\Big( A_i(C_{ij}t)\textup{SiLU}'(B_{ij}t)t \Big) \cdot \partial_{B_{ij}} h+
\sum_{i=1}^{d_1}
\Big( A_i\textup{SiLU}(B_{ij}t)t \Big) \cdot \partial_{C_{ij}} h \notag \\
&=t\Big(\sum_{i=1}^{d_1} (C_{ij}\cdot \partial_{A_i} h + A_i\cdot\partial_{C_{ij}} h) \cdot \textup{SiLU}(B_{ij}t) + \sum_{i=1}^{d_1} A_iC_{ij}\cdot \partial_{B_{ij}} h \cdot t\textup{SiLU}'(B_{ij}t) \Big).
\end{align}
By continuity on $t \in \mathbb{R}$, we obtain
\begin{align}\label{eq:SwiGLU_MLP_1}
    0 = \sum_{i=1}^{d_1} (C_{ij}\cdot \partial_{A_i} h + A_i\cdot\partial_{C_{ij}} h) \cdot \textup{SiLU}(B_{ij}t) + \sum_{i=1}^{d_1} A_iC_{ij}\cdot \partial_{B_{ij}} h \cdot t\textup{SiLU}'(B_{ij}t)
\end{align}
for all $t \in \mathbb{R}$. Define the following set
\begin{align}
    \mathbf{S}_1 \coloneqq \{\theta = (A,B,C) ~ \colon ~  B_{ij}^2 \text{ is nonzero and pairwise distinct for all } i \in[d_1] \text{ and } j \in[d]\} \subset \Theta_\text{SwiGLU}. 
\end{align}
From \cref{eq:SwiGLU_MLP_1}, by applying Lemma~\ref{lemma:SiLU_MLP_1}, we have
\begin{align}\label{appendix:ffn:eq_8}
    C_{ij}\cdot \partial_{A_i} h + A_i\cdot\partial_{C_{ij}} h = A_iC_{ij}\cdot \partial_{B_{ij}} h = 0,
\end{align}
for all $i \in [d_1]$, $j \in [d]$, and $\theta \in \mathbf{S}_1$. Define the following set
\begin{align}
    \mathbf{S}_2 \coloneqq \{\theta = (A,B) ~ \colon ~  A_i \text{ and } C_{ij} \text{ are nonzero for all } i \in[d_1] \text{ and } j \in [d]\}\subset \Theta_\text{SwiGLU}. 
\end{align}
From \cref{appendix:ffn:eq_8}, we have \begin{align}\label{appendix:ffn:eq_9}
    C_{ij}\cdot \partial_{A_i} h + A_i\cdot\partial_{C_{ij}} h = \partial_{B_{ij}} h = 0,
\end{align}
for all $i \in [d_1]$, $j \in [d]$ and $\theta \in \mathbf{S_1} \cap \mathbf{S}_2$. Since $h$ is $\mathcal{C}^1$, and the set $\mathbf{S_1} \cap \mathbf{S}_2$ is dense in $\Theta_{\text{SwiGLU}}$, we conclude that \cref{appendix:ffn:eq_9} holds for all $\theta \in \Theta_{\textup{SwiGLU}}$.

Extending the argument to the vector-valued setting of $f$,
\cref{appendix:ffn:eq_9} yields the same constraints on $h$ as those stated in
Theorem~\ref{main:theorem_swiglu_mlps}: For all $i \in [d_1]$,
    \begin{align*}
        \nabla_{B} h = 0, \text{ and } ~ ~ \nabla_{A_{:,i}}h \cdot C_{i,\colon} + A_{:,i} \cdot \nabla_{C_{i,\colon}}h = 0.
    \end{align*}
     Solving this system of PDEs leads to the  invariants $\|A_{:,i}\|^2 - \|C_{i,\colon}\|^2, i \in [d_1]$. We conclude the proof.
\end{proof}

%% file: sections/appendix_theory_MHA.tex
\section{Theoretical Proofs of Conservation Laws for  Multihead Attention}

\subsection{Conservation Laws for Multihead Attention without Positional Encoding -- Theorem~\ref{main:theorem_MHA}}
\label{appendix:proof_MHA}

Recall the space of all sequences of $d$-dimensional tokens as $\mathcal{S} \coloneqq \sqcup_{L=1}^\infty \mathbb{R}^{L \times d}$. Given a fixed head dimension $d_h$ (usually, $d_h=d/n$), consider $Q_i, K_i, V_i, O_i \in \mathbb{R}^{d \times d_h}$ for each $i \in [n]$. A Multihead Attention is a map MHA $\colon \mathcal{S} \to \mathcal{S}$ of the form
\begin{align}
    \textnormal{MHA}\bigl(\mathbf{x} ; \{Q_i, K_i, V_i, O_i\}_{i=1}^n \bigr) = \sum_{i=1}^n \textup{softmax}\left((\mathbf{x}Q_i)\left(\mathbf{x}K_i\right)^\top \right) \cdot \left(\mathbf{x}V_i\right) O_i^\top.
\end{align}
By construction, we have $\text{MHA}(\mathbb{R}^{L \times d}) \subset \mathbb{R}^{L \times d}$.

\textbf{Parameter Space.} The map MHA is parameterized by 
\begin{align}
\theta = (Q_i, K_i, V_i, O_i)^n_{i=1} \in \Theta_{\text{MHA}} \coloneqq \left(\mathbb{R}^{d \times d_h}\right)^{4n}.
\end{align}
We now present the proof of the conservation laws for Multihead Attention without Positional Encoding.
\begin{proof}
For $i \in [h]$ and $k,p \in [L]$, define the similarity scores and the attention scores of $k$-th token to the $p$-th token at $i$-th head as follows:
\begin{align}
s^{(i)}_{kp}(\mathbf{x}; \theta) := x_k Q_i K_i^\top x_p^\top, ~\text{ and }\qquad 
    a^{(i)}_{kp}(\mathbf{x};\theta)
\coloneqq \text{softmax}_p \left[\left(s^{(i)}_{kq}(\mathbf{x};\theta)\right)_{q=1}^L\right]  = \dfrac{e^{s_{kp}(\mathbf{x};\theta)}}{\sum_{q=1}^L e^{s_{kq}(\mathbf{x};\theta)}}.
\end{align}
For $k \in [L]$ and $\kappa \in [d]$, the $k$-th output token and its $\kappa$-th feature are:
\begin{align}
\textnormal{MHA}_k(\mathbf{x} ; \theta ) = \sum_{i=1}^n   \left( \sum_{p=1}^L a^{(i)}_{kp} \cdot x_pV_iO_i^\top \right), ~\text{ and} \qquad \textnormal{MHA}_{k,\kappa}(\mathbf{x} ; \theta ) = \sum_{i=1}^n   \left( \sum_{p=1}^L a^{(i)}_{kp} \cdot x_p(V_iO_i^\top)_{:,\kappa} \right).
\end{align}
\textbf{Step 1.} The function $\textnormal{MHA}_{k,\kappa}$ is $\mathcal{C}^1$ with respect to $\theta$. We now derive its gradients.
    
\textit{(i) The gradients of $\textnormal{MHA}_{k,\kappa}$ with respect to $Q$ and $K$.} 

The row-softmax derivative gives
\begin{align}\allowdisplaybreaks
    \dfrac{\partial a^{(i)}_{kp}}{\partial s^{(i)}_{kq}} &= \dfrac{\delta_{pq}e^{s^{(i)}_{kp}}\left( \sum_{q=1}^L e^{s^{(i)}_{kq}} \right)-e^{s^{(i)}_{kp}}e^{s^{(i)}_{kq}}}{\left(\sum_{q=1}^L e^{s^{(i)}_{kq}}\right)^2} =\delta_{pq}\dfrac{e^{s^{(i)}_{kp}}}{\sum_{q=1}^L e^{s^{(i)}_{kq}}} - \dfrac{e^{s^{(i)}_{kp}}e^{s^{(i)}_{kq}}}{\left(\sum_{q=1}^L e^{s^{(i)}_{kq}}\right)^2} =  a^{(i)}_{kp}(\delta_{pq}-a^{(i)}_{kq}).
\end{align}
We have
\begin{align}
    \nabla_{Q_i} s^{(i)}_{kp} = x_k^\top x_p K_i, ~\text{ and} \qquad 
    \nabla_{K_i} s^{(i)}_{kp}= x_p^\top x_k Q_i.
\end{align}
Therefore,
\begin{align}\allowdisplaybreaks \label{appendix:mha:eq_1}
    \nabla_{Q_i} a^{(i)}_{kp}
&=
\sum_{q=1}^L \dfrac{\partial a^{(i)}_{kp}}{\partial s^{(i)}_{kq}} \cdot \nabla_{Q_i} s^{(i)}_{kq}
=
\sum_{q=1}^L a^{(i)}_{kp}(\delta_{pq}-a^{(i)}_{kq}) x_k^\top x_q K_i
=
a^{(i)}_{kp} x_k^\top
\Big(x_p - \sum_{q=1}^L a^{(i)}_{kq}x_q\Big) K_i,\\
\nabla_{K_i} a^{(i)}_{kp}
&= 
\sum_{q=1}^L \dfrac{\partial a^{(i)}_{kp}}{\partial s^{(i)}_{kq}} \cdot \nabla_{K_i} s^{(i)}_{kq}
=
\sum_{q=1}^L a^{(i)}_{kp}(\delta_{pq}-a^{(i)}_{kq}) x_q^\top x_k Q_i = a^{(i)}_{kp} 
\Big(x_p^\top - \sum_{q=1}^L a^{(i)}_{kq}x_q^\top\Big) x_k Q_i.
\end{align}
The gradients of $\textnormal{MHA}_{k,\kappa}$ with respect to $Q_i, K_i$ are given by
\begin{align}
    \nabla_{Q_i} \textnormal{MHA}_{k,\kappa}(\mathbf{x} ; \theta) &= \sum_{p=1}^L  \nabla_{Q_i}a^{(i)}_{kp} \cdot x_p(V_iO_i^\top)_{:,\kappa}
    = \sum_{p=1}^L  a^{(i)}_{kp} x_k^\top
\Big(x_p - \sum_{q=1}^L a^{(i)}_{kq}x_q\Big) K_i \cdot x_p(V_iO_i^\top)_{:,\kappa}, \\
    \nabla_{K_i} \textnormal{MHA}_{k,\kappa}(\mathbf{x} ; \theta) &
    =\sum_{p=1}^L  \nabla_{K_i} a^{(i)}_{kp} \cdot x_p(V_iO_i^\top)_{:,\kappa}
    = \sum_{p=1}^L a^{(i)}_{kp} 
\Big(x_p^\top - \sum_{q=1}^L a^{(i)}_{kq}x_q^\top\Big) x_k Q_i  \cdot x_p(V_iO_i^\top)_{:,\kappa}.
\end{align}
\textit{(ii) The gradients of $\textnormal{MHA}_{k,\kappa}$ with respect to $V$ and $O$.} 

For $\kappa \in [d]$, let $e_\kappa \in \mathbb{R}^d$ denote the $\kappa$-th column basis vector. Since
\begin{align}
    \nabla_{V_i} \left(x(V_iO_i^\top)_{:,\kappa}\right) = (e_\kappa x)^\top O_i, ~\text{ and } \qquad  \nabla_{O_i} \left(x(V_iO_i^\top)_{:,\kappa}\right) = (e_\kappa x) V_i,
\end{align}
the gradients of $\textnormal{MHA}_{k,\kappa}$ with respect to $V_i$, $O_i$ are given by
\begin{align}
    \nabla_{V_i} \textnormal{MHA}_{k,\kappa}(\mathbf{x} ; \theta) &= \sum_{p=1}^L a^{(i)}_{kp} \cdot \nabla_{V_i} \left(x_p(V_iO_i^\top)_{:,\kappa}\right)
    =
    \sum_{p=1}^L a^{(i)}_{kp} \cdot (e_\kappa x_p)^\top O_i,
    \\
    \nabla_{O_i} \textnormal{MHA}_{k,\kappa}(\mathbf{x} ; \theta) &= \sum_{p=1}^L a^{(i)}_{kp} \cdot \nabla_{O_i} \left(x_p(V_iO_i^\top)_{:,\kappa}\right)
    =
    \sum_{p=1}^L a^{(i)}_{kp} \cdot (e_\kappa x_p) V_i
    .
\end{align}
\textbf{Step 2.} Fix $k=1$ and $\kappa \in [d]$. Consider an input $\mathbf{x} \in \mathbb{R}^{L \times d}$ of the form $\mathbf{x} = (x, y, y, \ldots, y)^\top$, 
where the first row is $x$ and the remaining $L-1$ rows are all equal to $y$. We have
\begin{align}
    \textnormal{MHA}_{1,\kappa}(\mathbf{x} ; \theta) &= \sum_{p=1}^L a^{(i)}_{1p} \cdot x_p(V_iO_i^\top)_{:,\kappa} = a^{(i)}_{11} \cdot x(V_iO_i^\top)_{:,\kappa} + \left( \sum_{p=2}^L a^{(i)}_{1p} \right)\cdot y(V_iO_i^\top)_{:,\kappa} \notag \\
    &= a^{(i)}_{11} \cdot x(V_iO_i^\top)_{:,\kappa} + \left(1- a^{(i)}_{11} \right)\cdot y(V_iO_i^\top)_{:,\kappa} = y(V_iO_i^\top)_{:,\kappa} + a^{(i)}_{11} \cdot (x-y)(V_iO_i^\top)_{:,\kappa}.
\end{align}
From \cref{appendix:mha:eq_1}, we have
\begin{align}\allowdisplaybreaks
    \nabla_{Q_i} a^{(i)}_{11}
&=
a^{(i)}_{11} x^\top
\Big(x - \sum_{q=1}^L a^{(i)}_{1q}x_q\Big) K_i \notag \\
&= a^{(i)}_{11} x^\top
\Big(x - a^{(i)}_{11}x - \left(1-a^{(i)}_{11}\right)y \Big) K_i = a^{(i)}_{11}
\left(1-a^{(i)}_{11}\right)x^\top(x-y)  K_i.
\end{align}
Similarly, we have
\begin{align}
\nabla_{K_i} a^{(i)}_{11} = a^{(i)}_{11}
\left(1-a^{(i)}_{11}\right)(x-y)^\top x  Q_i.
\end{align}
Thus, the gradient of $\textnormal{MHA}_{1,\kappa}$ are given by
\begin{align}
    \nabla_{Q_i} \textnormal{MHA}_{1,\kappa}(\mathbf{x} ; \theta) &= a_{11}^{(i)}
 \left(1-a_{11}^{(i)}\right)(x-y)(V_iO_i^\top)_{:,\kappa} \cdot  x^\top(x-y) K_i, 
    \\
    \nabla_{K_i} \textnormal{MHA}_{1,\kappa}(\mathbf{x} ; \theta) &= a_{11}^{(i)}\left(1-a_{11}^{(i)}\right)(x-y)(V_iO_i^\top)_{:,\kappa}\cdot (x-y)^\top x Q_i,
    \\
    \nabla_{V_i} \textnormal{MHA}_{1,\kappa}(\mathbf{x} ; \theta) &= \Big(y+a_{11}^{(i)} \cdot (x-y)\Big)^\top e_\kappa^\top O_i,
    \\
    \nabla_{O_i} \textnormal{MHA}_{1,\kappa}(\mathbf{x} ; \theta) &= e_{\kappa}\Big(y+a_{11}^{(i)} \cdot (x-y)\Big) V_i.
\end{align}
Let $h \colon \Theta_{\text{MHA}} \to \mathbb{R}$ be a conservation law for MHA. We have, for all $\theta \in \Theta_{\text{MHA}}$, $L \in \mathbb{N}$, and $x,y \in \mathbb{R}^{1 \times d}$,
\allowdisplaybreaks
\begin{align} \label{appendix:mha:eq_2}
    0 &= \sum_{i=1}^n 
    \left( \left\langle\nabla_{Q_i}  \textnormal{MHA}_{1,\kappa},\nabla_{Q_i}h  \right\rangle
    +
    \left\langle\nabla_{K_i}  \textnormal{MHA}_{1,\kappa},\nabla_{K_i}h  \right\rangle
    +
    \left\langle\nabla_{V_i}  \textnormal{MHA}_{1,\kappa},\nabla_{Q_i}h  \right\rangle
    +
    \left\langle\nabla_{O_i}  \textnormal{MHA}_{1,\kappa},\nabla_{O_i}h  \right\rangle\right) \notag \\
    &= \sum_{i=1}^n \bigg( \left\langle  a_{11}^{(i)}
 \left(1-a_{11}^{(i)}\right)(x-y)(V_iO_i^\top)_{:,\kappa} \cdot  x^\top(x-y) K_i,\nabla_{Q_i}h  \right\rangle
 \notag \\ 
 &\hspace{80pt}+
 \left\langle a_{11}^{(i)}\left(1-a_{11}^{(i)}\right)(x-y)(V_iO_i^\top)_{:,\kappa}\cdot (x-y)^\top x Q_i,\nabla_{K_i}h  \right\rangle
 \notag \\ 
 &\hspace{150pt}+
 \left\langle \Big(y+a_{11}^{(i)} \cdot (x-y)\Big)^\top e_\kappa^\top O_i,\nabla_{V_i}h  \right\rangle+
 \left\langle e_{\kappa}\Big(y+a_{11}^{(i)} \cdot (x-y)\Big) V_i,\nabla_{O_i}h  \right\rangle
 \bigg) \notag \\
   &= \sum_{i=1}^h a^{(i)}_{11}\left(1-a^{(i)}_{11}\right) \cdot  \Big\langle (x-y)(V_iO_i^\top)_{:,\kappa}\cdot (x-y)^\top x , K_i (\nabla_{Q_i} h)^\top +  (\nabla_{K_i} h) Q_i^\top \Big\rangle \notag \\
  &
  \hspace{74pt}+
  \sum_{i=1}^n a_{11}^{(i)} \cdot \Big\langle e_\kappa (x-y) , O_i (\nabla_{V_i} h)^\top +  (\nabla_{O_i} h) V_i^\top \Big\rangle + \sum_{i=1}^h \Big\langle e_\kappa y , O_i (\nabla_{V_i} h)^\top +  (\nabla_{O_i} h) V_i^\top \Big\rangle.
\end{align}
\textbf{Step 3.} For $i \in [n]$, define
\begin{align}
    \mathfrak{a}_i &\coloneqq  \Big\langle (x-y)(V_iO_i^\top)_{:,\kappa}\cdot (x-y)^\top x , K_i (\nabla_{Q_i} h)^\top +  (\nabla_{K_i} h) Q_i^\top \Big\rangle , \\
    \mathfrak{b}_i &\coloneqq \Big\langle e_\kappa (x-y) , O_i (\nabla_{V_i} h)^\top +  (\nabla_{O_i} h) V_i^\top \Big\rangle, \\
    \mathfrak{c} &\coloneqq \sum_{i=1}^n \Big\langle e_\kappa y , O_i (\nabla_{V_i} h)^\top +  (\nabla_{O_i} h) V_i^\top \Big\rangle.
\end{align}
From \cref{appendix:mha:eq_2}, we have
\begin{align} \label{appendix:mha:eq_3}
    0 = \sum_{i=1}^n a^{(i)}_{11}\left(1-a^{(i)}_{11}\right) \cdot \mathfrak{a}_i + \sum_{i=1}^n a^{(i)}_{11}\cdot \mathfrak{b}_i + \mathfrak{c}.
\end{align}
Here, $\mathfrak{a}_i, \mathfrak{b}_i, \mathfrak{c}$ are functions of $x,y,\theta$, while $a_{11}^{(i)}$ are functions of $x,y,\theta$ and $L$; during our argument, their input arguments are omitted for simplicity. We set $L$ to be $L+1$, and express $a^{(i)}_{11}$ as follows:
\begin{align}
    a^{(i)}_{11} =a^{(i)}_{11}\big((x,y,\ldots,y);\theta \big) = \dfrac{e^{x Q_i K_i^\top x^\top}}{ e^{x Q_i K_i^\top x^\top} + L \cdot e^{x Q_i K_i^\top y^\top}} =\dfrac{1}{1+L \cdot e^{x Q_i K_i^\top (y-x)^\top}}.
\end{align}
From \cref{appendix:mha:eq_3}, we have
\begin{align}\label{appendix:mha:eq_4}
    0 = \sum_{i=1}^n  \dfrac{L \cdot e^{x Q_i K_i^\top (y-x)^\top}}{\left(1+L \cdot e^{x Q_i K_i^\top (y-x)^\top}\right)^2} \cdot \mathfrak{a}_i + \sum_{i=1}^n  \dfrac{1}{1+L \cdot e^{x Q_i K_i^\top (y-x)^\top}}\cdot \mathfrak{b}_i + \mathfrak{c}.
\end{align}
For fixed $x, y,\theta$, the RHS of \cref{appendix:mha:eq_4} defines a rational function in the variable $L$. As this function is zero for all positive integers $L$, it follows that it is identically zero. Therefore, for all $t \in \mathbb{R}$, we have
\begin{align}\label{appendix:mha:eq_5}
    0 = \sum_{i=1}^n  \dfrac{t \cdot e^{x Q_i K_i^\top (y-x)^\top}}{\left(1+t \cdot e^{x Q_i K_i^\top (y-x)^\top}\right)^2} \cdot \mathfrak{a}_i + \sum_{i=1}^n  \dfrac{1}{1+t \cdot e^{x Q_i K_i^\top (y-x)^\top}}\cdot \mathfrak{b}_i + \mathfrak{c}.
\end{align}
This motivates the following lemma.
\begin{lemma} \label{appendix:lemma_mha_1}
    Let $\alpha_1, \ldots, \alpha_n$ be $n$ pairwise distinct nonzero numbers. The $2n+1$ functions
    \begin{align}
        t \mapsto \dfrac{t\alpha_i}{(1+t\alpha_i)^2} , \qquad t \mapsto \dfrac{1}{1+t\alpha_i}, \qquad t \mapsto 1, \qquad \text{ for } i \in [n]
    \end{align}
    are linear independent.
\end{lemma}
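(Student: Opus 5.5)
Suppose real coefficients $u_i, v_i$ ($i\in[n]$) and $w$ satisfy
\begin{align*}
    \sum_{i=1}^n u_i \frac{t\alpha_i}{(1+t\alpha_i)^2} + \sum_{i=1}^n v_i \frac{1}{1+t\alpha_i} + w = 0
\end{align*}
for all real $t$ outside the finite set $\{-1/\alpha_i\}$. The plan is to show that all coefficients vanish by a pole-order comparison, as in the proof of Lemma~\ref{lemma:GELU_MLP_1}. The left-hand side is a rational function $R(t)$. It vanishes on a set with accumulation points, so it is identically zero as a meromorphic function on $\mathbb{C}$. Its principal part at every point must therefore vanish.

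Since the $\alpha_i$ are pairwise distinct and nonzero, the points $t_i = -1/\alpha_i$ are pairwise distinct finite poles. I will fix $i$ and expand $R$ in a Laurent series around $t_i$. Only the two terms with index $i$ can be singular there; every other term is holomorphic near $t_i$ because $t_j \neq t_i$ for $j\neq i$.

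\textbf{Order-two pole.} Write $1+t\alpha_i = \alpha_i(t-t_i)$. The first term equals
\begin{align*}
    \frac{u_i t\alpha_i}{\alpha_i^2(t-t_i)^2}.
\end{align*}
Its numerator $u_i t\alpha_i$ takes the value $-u_i$ at $t_i$. Hence the coefficient of $(t-t_i)^{-2}$ is $-u_i/\alpha_i^2$. The second term contributes only a simple pole, so this coefficient must vanish, giving $u_i = 0$.

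\textbf{Simple pole.} With $u_i=0$, the only remaining singular term is $v_i/(\alpha_i(t-t_i))$, whose residue is $v_i/\alpha_i$. This residue must vanish, so $v_i=0$.

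Doing this for every $i$ leaves $w=0$. An alternative for this last step is to evaluate at $t=0$: every term $\frac{t\alpha_i}{(1+t\alpha_i)^2}$ vanishes there and every $\frac{1}{1+t\alpha_i}$ equals $1$, so $w + \sum_i v_i = 0$, which again gives $w=0$.

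There is no real obstacle. The only points needing care are that the poles are distinct and finite, which uses the hypotheses that the $\alpha_i$ are nonzero and pairwise distinct, and that the double-pole coefficient is genuinely nonzero, i.e.\ $u_i t\alpha_i\neq 0$ at $t_i$. That holds because $t_i\alpha_i=-1$.
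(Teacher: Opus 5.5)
Your proof is correct and is the same argument the paper indicates. The paper only sketches it as "analyzing the multiplicities of the poles at $t=-1/\alpha_i$"; you fill in the details by comparing the $(t-t_i)^{-2}$ coefficient $-u_i/\alpha_i^2$ and then the residue $v_i/\alpha_i$ at each distinct pole, which leaves $w=0$.
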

Lemma~\ref{appendix:lemma_mha_1} can be proved by analyzing the multiplicities of the poles at $t = -1/\alpha_i$ for each $i$. It suggests that $\mathfrak{a}_i = \mathfrak{b}_i = \mathfrak{c} = 0$ whenever $xQ_iK_i^\top (y-x)^\top$, $i\in[n]$ are pairwise distinct. Define the following set
\begin{align}
\mathbf{S}_1 \coloneqq \{ \theta ~ \colon ~ Q_iK_i^\top \text{ are pairwise distinct for all } i \in [n]\}. 
\end{align}
For each $\theta \in \mathbf{S}_1$, define the following set
\begin{align}
    \mathbf{S}_{\theta} \coloneqq \{ (x,y) \in \mathbb{R}^d \times \mathbb{R}^d ~ \colon ~ xQ_iK_i^\top(y-x)^\top \text{ are pairwise distinct for all } i \in [n]\}. 
\end{align}
From \cref{appendix:mha:eq_5}, for $\theta \in \mathbf{S}_1$ and $(x,y) \in \mathbf{S}_\theta$, by applying Lemma~\ref{appendix:lemma_mha_1}, we have $\mathfrak{a}_i = \mathfrak{b}_i = \mathfrak{c} = 0$. Moreover, $\mathbf{S}_\theta$ is dense in $\mathbb{R}^d \times \mathbb{R}^d$, since $\mathbf{S}_\theta$ is the complement of the zero set of the  following polynomial
\begin{align}
    (x,y) \mapsto \prod_{1 \le i < j \le n} \left(xQ_iK_i^\top(y-x)^\top - xQ_jK_j^\top(y-x)^\top\right),
\end{align}
which is not identical to $0$ since $\theta \in \mathbf{S}_1$ (In other words, $\mathbf{S}_\theta$ is the complement of a proper real algebraic variety in $\mathbb{R}^d \times \mathbb{R}^d$, which is always dense). Moreover, $\mathbf{S}_1$ is also dense in $\Theta_{\text{MHA}}$. We conclude that $\mathfrak{a}_i = \mathfrak{b}_i = \mathfrak{c} = 0$ for all $x,y$ and $\theta$.

\textbf{Step 4.} Setting $z=y-x$ into the identity $\mathfrak{a}_i = \mathfrak{b}_i$ = 0, we have
\begin{align}\label{appendix:mha:eq_6}
    0 &=  \Big\langle z(V_iO_i^\top)_{:,\kappa}\cdot z^\top x , K_i (\nabla_{Q_i} h)^\top +  (\nabla_{K_i} h) Q_i^\top \Big\rangle , \\
    0 &= \Big\langle e_\kappa z , O_i (\nabla_{V_i} h)^\top +  (\nabla_{O_i} h) V_i^\top \Big\rangle.
\end{align}
Since the linear span of $\{e_\kappa z ~ \colon ~ \kappa \in [d], z \in \mathbb{R}^d\}$ is the whole space $\mathbb{R}^{d \times d}$, it implies that $0 = O_i (\nabla_{V_i} h)^\top +  (\nabla_{O_i} h) V_i^\top$ for all $i \in [h]$. For the constraints on $Q_i,K_i$, define the following set
\begin{align}
    \mathbf{S}_2 \coloneqq \{\theta ~ \colon ~ 
    \text{all entries of } V_iO_i^\top \text{ are nonzero for all } i \in [n] \}.
\end{align}
From \cref{appendix:mha:eq_6}, we have: For $z \in \mathbb{R}^d$ such that $z(V_iO_i^\top)_{:,\kappa} \neq 0$, then 
\begin{align}\label{appendix:mha:eq_7}
    0 =  \Big\langle z^\top x , K_i (\nabla_{Q_i} h)^\top +  (\nabla_{K_i} h) Q_i^\top \Big\rangle.
\end{align}
However, for $\theta \in \mathbf{S}_2$, the set $\{z \in \mathbb{R}^d ~ \colon  z(V_iO_i^\top)_{:,\kappa} \neq 0\}$ is dense in $\mathbb{R}^d$ (since  $(V_iO_i^\top)_{:,\kappa}$ is nonzero), by continuity, \cref{appendix:mha:eq_7} holds for all $z \in \mathbb{R}^d$. Moreover, the linear span of $\{z^\top x ~ \colon ~ x,z \in \mathbb{R}^d\}$ is the whole space $\mathbb{R}^{d \times d}$, it implies that $0 = K_i (\nabla_{Q_i} h)^\top +  (\nabla_{K_i} h) Q_i^\top$. This holds for all $\theta \in \mathbf{S}_2$, and $\mathbf{S}_2$ is dense in $\Theta_{\text{MHA}}$. We conclude that $0 = K_i (\nabla_{Q_i} h)^\top +  (\nabla_{K_i} h) Q_i^\top$ for all $i \in [n]$.

In summary, we derive the same set of constraints on $h$ as stated in Theorem~\ref{main:theorem_MHA}: For all $i \in [n]$,
\begin{align*}
    0 = K_i \cdot (\nabla_{Q_i} h)^\top +  (\nabla_{K_i} h)\cdot  Q_i^\top = O_i \cdot (\nabla_{V_i} h)^\top +  (\nabla_{O_i} h)\cdot  V_i^\top.
\end{align*}
Solving these constraints leads to the invariants $Q_i^\top Q_i - K_i^\top K_i$ and $V_i^\top V_i - O_i^\top O_i$ for $i \in [n]$.
\end{proof}

\subsection{Conservation Laws for Multihead Attention with Rotary Positional Encoding -- Theorem~\ref{main:theorem_MHA_RoPE}}
\label{appendix:proof_MHA_RoPE}

Recall the $d_h \times d_h$ block-diagonal rotation matrix for a token at position $n$, 
\begin{align}
    R_n = \begin{bmatrix}
        \cos(n \varphi_1) & -\sin(n \varphi_1) & 0 & \cdots & 0 &0\\
        \sin(n \varphi_1) & \cos(n \varphi_1) & 0 & \cdots & 0 &0\\
        0 & 0 & \cos(n \varphi_2) & \cdots & 0 & 0\\
        \vdots & \vdots & \vdots & \ddots & \vdots & \vdots\\
        0 & 0 & 0 & \cdots & \cos(n \varphi_{d_h/2}) & -\sin(n \varphi_{d_h/2}) \\
        0 & 0 & 0 & \cdots & \sin(n \varphi_{d_h/2}) & \cos(n \varphi_{d_h/2})
    \end{bmatrix},
\end{align}
where $\varphi_i = 10000^{-2(i-1)/d_h}$ for $i \in [d_h/2]$. A Multihead Attention with Rotary Positional Encoding (RoPE) is a map $\text{MHA}^{\text{RoPE}} ~\colon~ \mathcal{S} \to \mathcal{S}$ of the form
\begin{align}
    &\textnormal{MHA}^{\textup{RoPE}}(\mathbf{x} ;  \theta) = \sum_{i=1}^n\textup{softmax}\Big[x_p(Q_iR_{p-q}K_i^\top) x_q^\top\Big]_{p,q \in [L]} \cdot \mathbf{x}(V_iO_i^\top).
\end{align}
We now present the proof of the conservation laws for Multihead Attention with Rotary Psotional Encoding, which is Theorem~\ref{main:theorem_MHA_RoPE}.
\begin{proof}
For $i \in [h]$ and $k,p \in [L]$, define the similarity scores and the attention scores of $k$-th token to the $p$-th token at $i$-th head with positional encoding as follows:
\begin{align}
s^{(i)}_{kp}(\mathbf{x}; \theta) := x_k Q_iR_{k-p} K_i^\top x_p^\top, ~\text{ and }\qquad 
    a^{(i)}_{kp}(\mathbf{x};\theta)
\coloneqq \text{softmax}_p \left[\left(s^{(i)}_{kq}(\mathbf{x};\theta)\right)_{q=1}^L\right]  = \dfrac{e^{s_{kp}(\mathbf{x};\theta)}}{\sum_{q=1}^L e^{s_{kq}(\mathbf{x};\theta)}}.
\end{align}
For $k \in [L]$ and $\kappa \in [d]$, the $k$-th output token and its $\kappa$-th feature are:
\begin{align}
\textnormal{MHA}^{\text{RoPE}}_k(\mathbf{x} ; \theta ) = \sum_{i=1}^n   \left( \sum_{p=1}^L a^{(i)}_{kp} \cdot x_pV_iO_i^\top \right), ~\text{ and} \qquad \textnormal{MHA}^{\text{RoPE}}_{k,\kappa}(\mathbf{x} ; \theta ) = \sum_{i=1}^n   \left( \sum_{p=1}^L a^{(i)}_{kp} \cdot x_p(V_iO_i^\top)_{:,\kappa} \right).
\end{align}
Fix $k=1$ and $\kappa \in [d]$. We now consider the function $\textnormal{MHA}^{\text{RoPE}}_{1,\kappa}(\mathbf{x} ; \theta)$, is $\mathcal{C}^1$ with respect to $\theta$.

\textbf{Step 1.}  Consider an input $\mathbf{x} \in \mathbb{R}^{L \times d}$ of the form $\mathbf{x} = (x, 0, \ldots, 0)^\top$, 
where the first row is $x$ and the remaining $L-1$ rows are all equal to $0$. We have
\begin{align}
    \textnormal{MHA}^\textup{RoPE}_{1,\kappa}\big((x,0,\ldots,0); \theta\big) = \sum_{p=1}^L a^{(i)}_{1p} \cdot x_p(V_iO_i^\top)_{:,\kappa} = a^{(i)}_{11} \cdot x(V_iO_i^\top)_{:,\kappa}.
\end{align}
We now derive the gradients of $\textnormal{MHA}^\textup{RoPE}_{1,\kappa}$ at $\mathbf{x} = (x,0,\ldots,0)^\top$. 

\textit{(i) The gradients of $\textnormal{MHA}^\textup{RoPE}_{1,\kappa}$ with respect to $Q$ and $K$ at $\mathbf{x} = (x,0,\ldots,0)^\top$.}

We have
\begin{align}\allowdisplaybreaks
    \nabla_{Q_i} a^{(i)}_{11}
&
=
\sum_{q=1}^L a^{(i)}_{11}(\delta_{1q}-a^{(i)}_{1q}) x_1^\top x_q K_iR_{q-1}
=
a^{(i)}_{11}\left(1-a^{(i)}_{11}\right) x^\top x K_i,
\\
\nabla_{K_i} a^{(i)}_{11}
&
=
\sum_{q=1}^L a^{(i)}_{11}(\delta_{1q}-a^{(i)}_{1q}) x_q^\top x_1 Q_iR_{1-q}
=
a^{(i)}_{11}\left(1-a^{(i)}_{11}\right) x^\top x Q_i.
\end{align}
Therefore,
\begin{align}
    \nabla_{Q_i} \textnormal{MHA}^\text{RoPE}_{1,\kappa}(\mathbf{x} ; \theta)
    &= a^{(i)}_{11}\left(1-a^{(i)}_{11}\right) x^\top x K_i\cdot x(V_iO_i^\top)_{:,\kappa},
    \notag \\
    \nabla_{K_i} \textnormal{MHA}^\text{RoPE}_{1,\kappa}(\mathbf{x} ; \theta)
    &=   a^{(i)}_{11}\left(1-a^{(i)}_{11}\right) x^\top x Q_i   \cdot x(V_iO_i^\top)_{:,\kappa} .
\end{align}
\textit{(ii) The gradients of $\textnormal{MHA}^\textup{RoPE}_{1,\kappa}$ with respect to $V$ and $O$ at $\mathbf{x} = (x,0,\ldots,0)^\top$.} 

Similar to the previous section, we have
\begin{align}
        \nabla_{V_i} \textnormal{MHA}^\text{RoPE}_{1,\kappa}(\mathbf{x} ; \theta) = a_{11}^{(i)} \cdot x^\top e_\kappa^\top O_i,~\text{ and } \quad~~
    \nabla_{O_i} \textnormal{MHA}^\text{RoPE}_{1,\kappa}(\mathbf{x} ; \theta) = a_{11}^{(i)} \cdot xe_{\kappa} V_i.
\end{align}
Let $h \colon \Theta_{\text{MHA}} \to \mathbb{R}$ be a conservation law for $\textnormal{MHA}^\text{RoPE}$. We have, for all $\theta \in \Theta_{\text{MHA}}$, $L \in \mathbb{N}$, and $x,y \in \mathbb{R}^{1 \times d}$,
\allowdisplaybreaks
\begin{align}\label{appendix:mha:eq_8}
    0 
    &=
    \sum_{i=1}^n 
    \left( \Big\langle\nabla_{Q_i}  \textnormal{MHA}^\text{RoPE}_{1,\kappa},\nabla_{Q_i}h  \Big\rangle
    +
    \Big\langle\nabla_{K_i}  \textnormal{MHA}^\text{RoPE}_{1,\kappa},\nabla_{K_i}h  \Big\rangle
    +
    \Big\langle\nabla_{V_i}  \textnormal{MHA}^\text{RoPE}_{1,\kappa},\nabla_{V_i}h  \Big\rangle
    +
    \Big\langle\nabla_{O_i}  \textnormal{MHA}^\text{RoPE}_{1,\kappa},\nabla_{O_i}h  \Big\rangle\right) 
    \notag \\
    &=
    \sum_{i=1}^n 
     \Big\langle a^{(i)}_{11}\left(1-a^{(i)}_{11}\right) x^\top x K_i\cdot x(V_iO_i^\top)_{:,\kappa},\nabla_{Q_i}h  \Big\rangle
    +
    \sum_{i=1}^n \Big\langle a^{(i)}_{11}\left(1-a^{(i)}_{11}\right) x^\top x Q_i\cdot x(V_iO_i^\top)_{:,\kappa},\nabla_{K_i}h  \Big\rangle
    \notag\\
    &\hspace{75pt}+
    \sum_{i=1}^n \Big\langle a_{11}^{(i)} \cdot x^\top e_\kappa^\top O_i,\nabla_{V_i}h  \Big\rangle
    +
    \sum_{i=1}^n \Big\langle a_{11}^{(i)} \cdot xe_{\kappa} V_i,\nabla_{O_i}h  \Big\rangle
    \notag \\
    & = \sum_{i=1}^n 
     a^{(i)}_{11}\left(1-a^{(i)}_{11}\right)\cdot x(V_iO_i^\top)_{:,\kappa} \cdot \left[ \Big\langle x^\top x K_i,\nabla_{Q_i}h  \right\rangle  + \left\langle x^\top x Q_i,\nabla_{K_i}h  \Big\rangle \right]
     \notag\\
     &\hspace{190pt}+
     \sum_{i=1}^n 
     a^{(i)}_{11} \cdot \Big[ \left\langle  x^\top e_\kappa^\top O_i,\nabla_{V_i}h  \Big\rangle
    +
     \Big\langle  xe_{\kappa} V_i,\nabla_{O_i}h  \Big\rangle \right].
\end{align}
For $i \in [n]$, define
\begin{align}
    \mathfrak{a}_{i} &\coloneqq  x(V_iO_i^\top)_{:,\kappa} \cdot \left[ \Big\langle x^\top x K_i,\nabla_{Q_i}h  \Big\rangle  + \Big\langle x^\top x Q_i,\nabla_{K_i}h  \Big\rangle \right], \\
    \mathfrak{b}_i &\coloneqq \left[ \Big\langle  x^\top e_\kappa^\top O_i,\nabla_{V_i}h  \Big\rangle
    +
     \Big\langle  xe_{\kappa} V_i,\nabla_{O_i}h  \Big\rangle \right].
\end{align}
From \cref{appendix:mha:eq_8}, we have
\begin{align}\label{appendix:mha:eq_9}
    0 = \sum_{i=1}^n  a^{(i)}_{11} \cdot \left(1-a^{(i)}_{11} \right) \cdot \mathfrak{a}_{i} + \sum_{i=1}^n a^{(i)}_{11}\cdot \mathfrak{b}_i.
\end{align}
Note that, $a^{(i)}_{11}$ is a function of $L,x$ and $\theta$, while $\mathfrak{a}_{i}, \mathfrak{b}_i$ are functions of $x$ and $\theta$; during our argument, their input arguments are omitted for simplicity. We set $L$ to be $L+1$, and express $a^{(i)}_{11}$ as follows:
\begin{align}
    a^{(i)}_{11} =a^{(i)}_{11}\big((x,0,\ldots,0);\theta \big)= \dfrac{e^{x Q_i K_i^\top x^\top}}{ e^{x Q_i K_i^\top x^\top} + L} =\dfrac{1}{1+L \cdot e^{-x Q_i K_i^\top x^\top}}.
\end{align}
From \cref{appendix:mha:eq_9}, we have
\begin{align}
    0 = \sum_{i=1}^n  \dfrac{L \cdot e^{-x Q_i K_i^\top x^\top}}{\left(1+L \cdot e^{-x Q_i K_i^\top x^\top}\right)^2} \cdot \mathfrak{a}_i + \sum_{i=1}^n  \dfrac{1}{1+L \cdot e^{-x Q_i K_i^\top x^\top}}\cdot \mathfrak{b}_i.
\end{align}
By the same argument as in the proof of the classical MHA (which is related to Lemma~\ref{appendix:lemma_mha_1}), we obtain that $\mathfrak{a}_i = \mathfrak{b}_i = 0$ whenever the $n$ scalars $x Q_i K_i^\top x^\top$, $i \in [n]$, are pairwise distinct. Define the following set
\begin{align}
\mathbf{S}_1 \coloneqq \{ \theta ~ \colon ~ Q_iK_i^\top+K_iQ_i^\top \text{ are pairwise distinct for all } i \in [n]\}. 
\end{align}
Observe that each symmetric matrix
$A \in \mathbb{R}^{n \times n}$ uniquely determines the polynomial $x A x^\top$. Moreover, by the same follow-up density argument, the identity $\mathfrak{a}_i = \mathfrak{b}_i = 0$ extends to all $x$ and $\theta$. We then have, for all $i \in [n]$, $\mathfrak{b}_i=0$ implies $0 = O_i (\nabla_{V_i} h)^\top +  (\nabla_{O_i} h) V_i^\top$, and $\mathfrak{a}_i = 0$ implies
\begin{align} \label{appendix:mha:eq_10}
    0 = \Big\langle x^\top x K_i,\nabla_{Q_i}h  \Big\rangle  + \Big\langle x^\top x Q_i,\nabla_{K_i}h  \Big\rangle.
\end{align}
However, while the constraints on $h$ associated with $V$ and $O$ are obtained as expected, the constraints on $h$ associated with $Q$ and $K$ in \cref{appendix:mha:eq_10} are not sufficient to guarantee that $h$ is a conservation law of $\textnormal{MHA}^{\text{RoPE}}_{1,\kappa}$. Before proceeding to the next step, we make an observation: since $0 = O_i (\nabla_{V_i} h)^\top + (\nabla_{O_i} h) V_i^\top$,
the following identity always holds:
\begin{align}
    0 
    &=
    \sum_{i=1}^n 
    \left(
    \Big\langle\nabla_{V_i}  \textnormal{MHA}^\text{RoPE}_{1,\kappa},\nabla_{V_i}h  \Big\rangle
    +
    \Big\langle\nabla_{O_i}  \textnormal{MHA}^\text{RoPE}_{1,\kappa},\nabla_{O_i}h  \Big\rangle\right).
\end{align}
\textbf{Step 3.}  Let $N$ be a positive integer. Consider an input $\mathbf{x} \in \mathbb{R}^{L \times d}$ of the form $\mathbf{x} = (x, 0, \ldots, 0, y, 0, \ldots, 0)^\top$, 
 where there are $L-2$ token zeros
 between $x$ and $y$, and an additional $N$
 trailing zeros, We have
\begin{align}
    \textnormal{MHA}_{1,\kappa}\big((x,0,\ldots,0,y) ; \theta\big) = \sum_{p=1}^L a^{(i)}_{1p} \cdot x_p(V_iO_i^\top)_{:,\kappa} =  a^{(i)}_{11} \cdot x(V_iO_i^\top)_{:,\kappa} +a^{(i)}_{1L} \cdot y(V_iO_i^\top)_{:,\kappa}.
\end{align}
We now derive the gradients of $\textnormal{MHA}^\textup{RoPE}_{1,\kappa}$ with respect to $Q$ and $K$ at $\mathbf{x} = (x,0,\ldots,0,y)^\top$. 
\begin{align}\allowdisplaybreaks
    \nabla_{Q_i} a^{(i)}_{11}
&=
\sum_{q=1}^L a^{(i)}_{11}(\delta_{1q}-a^{(i)}_{1q}) x_1^\top x_q K_iR_{q-1}
=
a^{(i)}_{11}\left(1-a^{(i)}_{11}\right) x^\top x K_i -  a^{(i)}_{11}a^{(i)}_{1L} x^\top y K_iR_{L-1},
\\
    \nabla_{Q_i} a^{(i)}_{1L}
&=
\sum_{q=1}^L a^{(i)}_{1L}(\delta_{Lq}-a^{(i)}_{1q}) x_1^\top x_q K_iR_{q-1} = -a^{(i)}_{1L}a^{(i)}_{11}x^\top x K_i+a^{(i)}_{1L}\left(1-a^{(i)}_{1L}\right)x^\top y K_iR_{L-1}.
\end{align}
Similarly,
\begin{align}\allowdisplaybreaks
    \nabla_{K_i} a^{(i)}_{11}
&=
a^{(i)}_{11}\left(1-a^{(i)}_{11}\right) x^\top x Q_i -  a^{(i)}_{11}a^{(i)}_{1L} y^\top x Q_iR_{1-L},
\\
    \nabla_{K_i} a^{(i)}_{1L}
&=
 -a^{(i)}_{1L}a^{(i)}_{11}x^\top x Q_i+a^{(i)}_{1L}\left(1-a^{(i)}_{1L}\right)y^\top x Q_iR_{1-L}.
\end{align}
Therefore,
\begin{align}
    \nabla_{Q_i} \textnormal{MHA}^\text{RoPE}_{1,\kappa} 
    &=
    \left[a^{(i)}_{11}\left(1-a^{(i)}_{11}\right) x^\top x K_i -  a^{(i)}_{11}a^{(i)}_{1L} x^\top y K_iR_{L-1}\right] \cdot  x(V_iO_i^\top)_{:,\kappa} 
    \notag\\
    &\hspace{90pt}+ \left[-a^{(i)}_{1L}a^{(i)}_{11}x^\top x K_i+a^{(i)}_{1L}\left(1-a^{(i)}_{1L}\right)x^\top y K_iR_{L-1}  \right] \cdot y(V_iO_i^\top)_{:,\kappa},
    \\
    \nabla_{K_i} \textnormal{MHA}^\text{RoPE}_{1,\kappa} 
    &=
    \left[  a^{(i)}_{11}\left(1-a^{(i)}_{11}\right) x^\top x Q_i -  a^{(i)}_{11}a^{(i)}_{1L} y^\top x Q_iR_{1-L}     \right] \cdot  x(V_iO_i^\top)_{:,\kappa} 
    \notag\\
    &\hspace{90pt}+ \left[  -a^{(i)}_{1L}a^{(i)}_{11}x^\top x Q_i+a^{(i)}_{1L}\left(1-a^{(i)}_{1L}\right)y^\top x Q_iR_{1-L}  \right] \cdot y(V_iO_i^\top)_{:,\kappa}.
\end{align}
The constraints on $h$ associated with $Q$ and $K$ now becomes
\begin{align}
    0 
    &=
    \hspace{20pt}\sum_{i=1}^n 
    \left( \Big\langle\nabla_{Q_i}  \textnormal{MHA}^\text{RoPE}_{1,\kappa},\nabla_{Q_i}h  \Big\rangle
    +
    \Big\langle\nabla_{K_i}  \textnormal{MHA}^\text{RoPE}_{1,\kappa},\nabla_{K_i}h  \Big\rangle
    \right)
    \notag \\
    &= 
    \hspace{20pt}\sum_{i=1}^n \left\langle\left[a^{(i)}_{11}\left(1-a^{(i)}_{11}\right) x^\top x K_i -  a^{(i)}_{11}a^{(i)}_{1L} x^\top y K_iR_{L-1}\right] \cdot  x(V_iO_i^\top)_{:,\kappa},\nabla_{Q_i}h  \right\rangle
    \notag \\
    &\hspace{80pt}+
    \sum_{i=1}^n \left\langle\left[-a^{(i)}_{1L}a^{(i)}_{11}x^\top x K_i+a^{(i)}_{1L}\left(1-a^{(i)}_{1L}\right)x^\top y K_iR_{L-1}  \right] \cdot y(V_iO_i^\top)_{:,\kappa},\nabla_{Q_i}h  \right\rangle
    \notag \\
    &\hspace{20pt}+
    \sum_{i=1}^n\left\langle\left[  a^{(i)}_{11}\left(1-a^{(i)}_{11}\right) x^\top x Q_i -  a^{(i)}_{11}a^{(i)}_{1L} y^\top x Q_iR_{1-L}     \right] \cdot  x(V_iO_i^\top)_{:,\kappa} ,\nabla_{K_i}h  \right\rangle
    \notag \\
    &\hspace{80pt}+
    \sum_{i=1}^n\left\langle\left[  -a^{(i)}_{1L}a^{(i)}_{11}x^\top x Q_i+a^{(i)}_{1L}\left(1-a^{(i)}_{1L}\right)y^\top x Q_iR_{1-L}  \right] \cdot y(V_iO_i^\top)_{:,\kappa},\nabla_{K_i}h  \right\rangle
    \notag \\
    &=
    \hspace{20pt}\sum_{i=1}^n a^{(i)}_{11}\left(1-a^{(i)}_{11}\right) \cdot x(V_iO_i^\top)_{:,\kappa} \cdot \left[ \Big\langle x^\top x K_i , \nabla_{Q_i} h\Big\rangle + \Big\langle x^\top x Q_i , \nabla_{K_i} h\Big\rangle \right]
    \notag \\
    &\hspace{80pt}- \sum_{i=1}^n a^{(i)}_{1L}a^{(i)}_{11}\cdot y(V_iO_i^\top)_{:,\kappa} \cdot \left[ \Big\langle x^\top x K_i , \nabla_{Q_i} h\right\rangle + \left\langle x^\top x Q_i , \nabla_{K_i} h\Big\rangle \right]
    \notag \\
    &\hspace{20pt}- \sum_{i=1}^n a^{(i)}_{1L}a^{(i)}_{11} \cdot x(V_iO_i^\top)_{:,\kappa} \cdot \left[ \Big \langle x^\top y K_i R_{L-1} , \nabla_{Q_i} h\right\rangle + \left \langle y^\top x Q_i R_{1-L} , \nabla_{K_i} h\Big\rangle\right]
    \notag \\
    &\hspace{80pt}+\sum_{i=1}^n a^{(i)}_{1L}\left(1-a^{(i)}_{1L}\right) \cdot y(V_iO_i^\top)_{:,\kappa} \cdot \left[ \Big \langle x^\top y K_i R_{L-1} , \nabla_{Q_i} h\Big\rangle + \Big \langle y^\top x Q_i R_{1-L} , \nabla_{K_i} h\Big\rangle\right].
\end{align}
In this expression, the sum of the four terms is equal to $0$. However, from \cref{appendix:mha:eq_10} in \textbf{Step~2}, the first and second terms always vanish. We therefore obtain
\begin{align} \label{appendix:mha:eq_11}
    0
    &=- \sum_{i=1}^n a^{(i)}_{1L}a^{(i)}_{11} \cdot x(V_iO_i^\top)_{:,\kappa} \cdot \left[ \Big \langle x^\top y K_i R_{L-1} , \nabla_{Q_i} h\Big\rangle + \Big \langle y^\top x Q_i R_{1-L} , \nabla_{K_i} h\Big\rangle\right]
    \notag \\
    &\hspace{80pt}+\sum_{i=1}^n a^{(i)}_{1L}\left(1-a^{(i)}_{1L}\right) \cdot y(V_iO_i^\top)_{:,\kappa} \cdot \left[ \Big \langle x^\top y K_i R_{L-1} , \nabla_{Q_i} h\Big\rangle + \Big \langle y^\top x Q_i R_{1-L} , \nabla_{K_i} h\Big\rangle\right].
\end{align}
For $i \in [h]$, define
\begin{align}
    \mathfrak{c}_i &\coloneqq x(V_iO_i^\top)_{:,\kappa} \cdot \left[ \Big \langle x^\top y K_i R_{L-1} , \nabla_{Q_i} h\Big\rangle + \Big \langle y^\top x Q_i R_{1-L} , \nabla_{K_i} h\Big\rangle\right],
    \\
    \mathfrak{d}_i &\coloneqq y(V_iO_i^\top)_{:,\kappa} \cdot \left[ \Big \langle x^\top y K_i R_{L-1} , \nabla_{Q_i} h\Big\rangle + \Big \langle y^\top x Q_i R_{1-L} , \nabla_{K_i} h\Big\rangle\right].
\end{align}
From \cref{appendix:mha:eq_11}, we have
\begin{align}\label{appendix:mha:eq_12}
    0 = - \sum_{i=1}^n a^{(i)}_{1L}a^{(i)}_{11} \cdot \mathfrak{c}_i +\sum_{i=1}^n a^{(i)}_{1L}\left(1-a^{(i)}_{1L}\right) \cdot \mathfrak{d}_i.
\end{align}
Note that, $a^{(i)}_{11},a^{(i)}_{1L}$ are functions of $L,N,x,y$ and $\theta$, while $\mathfrak{c}_{i}, \mathfrak{d}_i$ are functions of $x$ and $\theta$; during our argument, their input arguments are omitted for simplicity. We express $a^{(i)}_{11}$ and $a^{(i)}_{1L}$ as follows:
\begin{align}
    a^{(i)}_{11} &=a^{(i)}_{11}\big((x,0,\ldots,0,y, 0, \ldots, 0);\theta \big)= \dfrac{e^{x Q_i K_i^\top x^\top}}{ e^{x Q_i K_i^\top x^\top} + e^{x Q_i R_{L-1}K_i^\top y^\top} + (L-2)+N},
    \\
    a^{(i)}_{1L} &=a^{(i)}_{1L}\big((x,0,\ldots,0,y,0,\ldots,0);\theta \big)= \dfrac{e^{x Q_i R_{L-1}K_i^\top y^\top}}{ e^{x Q_i K_i^\top x^\top} + e^{x Q_i R_{L-1}K_i^\top y^\top} + (L-2)+N}.
\end{align}
From \cref{appendix:mha:eq_12}, we have
\begin{align}\label{appendix:mha:eq_13}
    0 = \sum_{i=1}^n  \dfrac{e^{x Q_i K_i^\top x^\top} \cdot e^{x Q_i R_{L-1}K_i^\top y^\top}}{\left( e^{x Q_i K_i^\top x^\top} + e^{x Q_i R_{L-1}K_i^\top y^\top} + (L-2)+N\right)^2} \cdot \mathfrak{c}_i 
    +
    \sum_{i=1}^n \dfrac{e^{x Q_i R_{L-1}K_i^\top y^\top} \cdot \left( e^{x Q_i K_i^\top x^\top} + (L-2)+N\right)}{\left( e^{x Q_i K_i^\top x^\top} + e^{x Q_i R_{L-1}K_i^\top y^\top} + (L-2)+N\right)^2} \cdot \mathfrak{d}_i.
\end{align}
For fixed $x, y,\theta, L$, the RHS of \cref{appendix:mha:eq_13} defines a rational function in the variable $N$. As this function is zero for all integers $N \ge 0$, it follows that it is identically zero. Therefore, for all $t \in \mathbb{R}$, we have
\begin{align}\label{appendix:mha:eq_14}
    0 = \sum_{i=1}^n  \dfrac{e^{x Q_i K_i^\top x^\top} \cdot e^{x Q_i R_{L-1}K_i^\top y^\top}}{\left( e^{x Q_i K_i^\top x^\top} + e^{x Q_i R_{L-1}K_i^\top y^\top} +(L-2)+ t\right)^2} \cdot \mathfrak{c}_i 
    +
    \sum_{i=1}^n \dfrac{e^{x Q_i R_{L-1}K_i^\top y^\top} \cdot \left( e^{x Q_i K_i^\top x^\top}  + (L-2) + t\right)}{\left( e^{x Q_i K_i^\top x^\top} + e^{x Q_i R_{L-1}K_i^\top y^\top} + (L-2) + t\right)^2} \cdot \mathfrak{d}_i.
\end{align}
This motivates the following lemma.
\begin{lemma}\label{appendix:lemma_mha_2}
    Let $\alpha_1, \ldots, \alpha_n$ and $\beta_1, \ldots, \beta_n$ be $2n$ real positive numbers. Assume that $a_i + b_i, i \in [n]$ are pairwise distinct. The $2n$ functions
    \begin{align}
        t\mapsto \dfrac{\alpha_i\beta_i}{(\alpha_i+\beta_i+t)^2}, \qquad  t\mapsto \dfrac{\beta_i(t+\alpha_i)}{(\alpha_i+\beta_i+t)^2}, \qquad \text{ for } i \in [n]
    \end{align}
    are linear independent.
\end{lemma}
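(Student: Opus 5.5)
Note first that the hypothesis as printed refers to $a_i+b_i$; I read it as $\alpha_i+\beta_i$, i.e.\ the shifts $\gamma_i\coloneqq\alpha_i+\beta_i>0$ are pairwise distinct. With that reading, the plan is to treat the $2n$ functions as rational functions of $t$, extend them to meromorphic functions on $\mathbb{C}$, and compare principal parts at the poles, as in Lemma~\ref{appendix:lemma_mha_1}.

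Suppose real coefficients $u_i,v_i$ satisfy
\begin{align*}
\sum_{i=1}^n u_i\,\frac{\alpha_i\beta_i}{(\gamma_i+t)^2}+\sum_{i=1}^n v_i\,\frac{\beta_i(t+\alpha_i)}{(\gamma_i+t)^2}=0
\end{align*}
for all real $t$ away from the poles. A rational function vanishing on an infinite set vanishes identically, so this is an identity of rational functions. Rewrite the second family using $t+\alpha_i=(t+\gamma_i)-\beta_i$:
\begin{align*}
\frac{\beta_i(t+\alpha_i)}{(\gamma_i+t)^2}=\frac{\beta_i}{t+\gamma_i}-\frac{\beta_i^2}{(t+\gamma_i)^2}.
\end{align*}
The combination then becomes
\begin{align*}
\sum_{i=1}^n\left[\frac{v_i\beta_i}{t+\gamma_i}+\frac{u_i\alpha_i\beta_i-v_i\beta_i^2}{(t+\gamma_i)^2}\right]=0.
\end{align*}

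Because the $\gamma_i$ are pairwise distinct, the point $t=-\gamma_i$ is a pole of exactly one summand. Hence the principal part of the whole expression at $-\gamma_i$ is exactly the $i$-th bracket. An identically zero function has no principal parts, so both Laurent coefficients vanish:
\begin{align*}
v_i\beta_i=0\quad\text{and}\quad u_i\alpha_i\beta_i-v_i\beta_i^2=0.
\end{align*}
Since $\beta_i>0$, the first gives $v_i=0$. Substituting into the second gives $u_i\alpha_i\beta_i=0$, and since $\alpha_i,\beta_i>0$ we get $u_i=0$. This holds for every $i\in[n]$, which proves linear independence.

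There is no real obstacle here. The only points needing care are reading the typo in the hypothesis correctly and using positivity of $\alpha_i,\beta_i$ to ensure the leading coefficients $\beta_i$ and $\alpha_i\beta_i$ are nonzero. Positivity also places every pole at $-\gamma_i<0$, which keeps the application to \cref{appendix:mha:eq_14} clean: with $\alpha_i=e^{xQ_iK_i^\top x^\top}+L-2$ and $\beta_i=e^{xQ_iR_{L-1}K_i^\top y^\top}$, we need $L\ge2$ so that $\alpha_i>0$.
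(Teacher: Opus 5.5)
Your proof is correct and is the same pole analysis at $t=-(\alpha_i+\beta_i)$ that the paper indicates. Your partial-fraction rewriting is a useful sharpening: both functions in the $i$-th pair have a double pole at the same point, so pole orders alone do not separate them, and one genuinely needs both Laurent coefficients, as you use. One minor aside on your closing remark: with your choice $\alpha_i=e^{xQ_iK_i^\top x^\top}+L-2$, the first family in \cref{appendix:mha:eq_14} has numerator $e^{xQ_iK_i^\top x^\top}\beta_i$ rather than $\alpha_i\beta_i$. It is cleaner to take $\alpha_i=e^{xQ_iK_i^\top x^\top}$ and $\beta_i=e^{xQ_iR_{L-1}K_i^\top y^\top}$ and shift $t$ by $L-2$, though either way only nonzero constant factors change and nothing breaks.
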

Lemma~\ref{appendix:lemma_mha_2} can be proved by analyzing the multiplicities of the poles at $t=-(\alpha_i+\beta_i)$ for each $i$. It suggests that $\mathfrak{c}_i = \mathfrak{d}_i=0$ whenever $e^{x Q_i K_i^\top x^\top} + e^{x Q_i R_{L-1}K_i^\top y^\top}, ~ i \in [n]$, are pairwise distinct. Recall the set $\mathbf{S}_1$ in \textbf{Step 2}, and for each $\theta \in \mathbf{S}_1$, define the following set
\begin{align}
    \mathbf{S}_{\theta} \coloneqq \{ (x,y) \in \mathbb{R}^d \times \mathbb{R}^d ~ \colon ~ e^{x Q_i K_i^\top x^\top} + e^{x Q_i R_{L-1}K_i^\top y^\top} \text{ are pairwise distinct for all } i \in [n]\}. 
\end{align}
From \cref{appendix:mha:eq_14}, for $\theta \in \mathbf{S}_1$ and $(x,y) \in \mathbf{S}_\theta$, by applying Lemma~\ref{appendix:lemma_mha_2}, we have $\mathfrak{c}_i = \mathfrak{d}_i = 0$. Moreover, $\mathbf{S}_\theta$ is dense in $\mathbb{R}^d \times \mathbb{R}^d$, since $\mathbf{S}_\theta$ is the complement of the real zero set of the following holomorphic function
\begin{align}
    (x,y) \mapsto \prod_{1 \le i < j \le n} \bigg[\left( e^{x Q_i K_i^\top x^\top} + e^{x Q_i R_{L-1}K_i^\top y^\top}\right)-\left( e^{x Q_j K_j^\top x^\top} + e^{x Q_j R_{L-1}K_j^\top y^\top}\right) \bigg],
\end{align}
which is not identical to $0$ since $\theta \in \mathbf{S}_1$ (In other words, $\mathbf{S}_\theta$ is the complement of a proper real analytic variety in $\mathbb{R}^d \times \mathbb{R}^d$, which is always dense). Moreover, $\mathbf{S}_1$ is also dense in $\Theta_{\text{MHA}}$. We conclude that $\mathfrak{c}_i = \mathfrak{d}_i = 0$ for all $x,y$ and $\theta$. By the same argument as in the proof of the classical MHA (\textbf{Step 4}), we conclude that
\begin{align}\label{appendix:mha:eq_15}
     0 =\Big \langle x^\top y K_i R_{L-1} , \nabla_{Q_i} h\Big\rangle + \Big \langle y^\top x Q_i R_{1-L} , \nabla_{K_i} h\Big\rangle.
\end{align}
\textbf{Step 4.} \cref{appendix:mha:eq_15} implies
\begin{align}
    0 =  \Big\langle y^\top x , K_i R_{L-1} (\nabla_{Q_i} h)^\top +  (\nabla_{K_i} h)R_{L-1} Q_i^\top \Big\rangle,
\end{align}
for all $x,y$, integer $L \ge 2$ and $\theta$. Since the linear span of $\{y^\top x ~ \colon ~ x,y \in \mathbb{R}^d\}$ is the whole space $\mathbb{R}^{d \times d}$, it implies that  
\begin{align}\label{appendix:ffn:eq_15}
    0 = K_i R_{L-1} (\nabla_{Q_i} h)^\top +  (\nabla_{K_i} h)R_{L-1} Q_i^\top.
\end{align}
Note that, although this identity is independent of $x$ and $y$, it still depends on the sequence length $L$. In particular, it still depends on the input $\mathbf{x}$ through its length. We further analyze this identity as follows. \cref{appendix:ffn:eq_15} implies, for a matrix $A$ that belongs to the linear span of $\{R_{L-1} ~ \colon ~ L \ge 2\}$ in $\mathbb{R}^{d_h \times d_h}$, we have
\begin{align}
    0 = K_i A (\nabla_{Q_i} h)^\top +  (\nabla_{K_i} h)A Q_i^\top.
\end{align}
This leads us to analyze the linear space linear span of $\{R_{L} ~ \colon ~ L \in \mathbb{N}\}$ in $\mathbb{R}^{d_h \times d_h}$. Let $d_h = 2m$.
\begin{lemma} \label{appendix:lemma_mha_rope_1}
    The linear span of $\{R_L \colon L \in \mathbb{N}\}$ is $V \coloneqq
\{
\textup{diag}(a_1 I+b_1 J,\dots,a_m I+b_m J)~ \colon ~ a_i,b_i\in\mathbb{R}
\}$, where
\begin{align}
I =
\begin{pmatrix}
1 & 0\\
0 & 1
\end{pmatrix}, \qquad J =
\begin{pmatrix}
0 & -1\\
1 & 0
\end{pmatrix}.
\end{align}
\end{lemma}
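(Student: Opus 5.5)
The plan is to show two inclusions: the span of the $R_L$ lies in $V$, and this span has full dimension $2m$. For the first inclusion, note that $R_L = R^L$ is block diagonal. Its $j$-th $2\times 2$ block is the rotation by angle $L\varphi_j$, which can be written as
\begin{align*}
\begin{pmatrix} \cos(L\varphi_j) & -\sin(L\varphi_j)\\ \sin(L\varphi_j) & \cos(L\varphi_j)\end{pmatrix} = \cos(L\varphi_j)\, I + \sin(L\varphi_j)\, J .
\end{align*}
Hence every $R_L$ lies in $V$. Since $V$ is a linear subspace, the whole span of $\{R_L\}$ lies in $V$. Moreover, $V$ has dimension exactly $2m$, with coordinates $(a_1,b_1,\dots,a_m,b_m)$.

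For the reverse inclusion I argue by duality: I show that any linear functional on $V$ vanishing on all $R_L$ must be zero. A general functional is $\ell(\mathrm{diag}(a_jI+b_jJ)) = \sum_j (u_j a_j + v_j b_j)$. The condition $\ell(R_L)=0$ for all $L\in\mathbb{N}$ reads
\begin{align*}
\sum_{j=1}^m \big(u_j\cos(L\varphi_j) + v_j \sin(L\varphi_j)\big) = 0 .
\end{align*}
Passing to complex exponentials with $c_j = (u_j - i v_j)/2$, this becomes
\begin{align*}
\sum_{j=1}^m \big(c_j \lambda_j^L + \bar c_j \bar\lambda_j^{\,L}\big) = 0, \qquad \lambda_j = e^{i\varphi_j}.
\end{align*}
Restricting to the $2m$ values $L=1,\dots,2m$ gives a linear system whose matrix is $\big(\mu^L\big)$, with $\mu$ ranging over $\{\lambda_j,\bar\lambda_j\}_{j\in[m]}$. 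This matrix is a Vandermonde matrix times $\mathrm{diag}(\mu)$. If the $2m$ numbers $\mu$ are nonzero and pairwise distinct, it is invertible, which forces $c_j=0$ and hence $u_j=v_j=0$ for all $j$.

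The main step to verify is that the $2m$ numbers $e^{\pm i\varphi_j}$ are pairwise distinct. This uses the specific frequencies $\varphi_j = 10000^{-(j-1)/m}\in(0,1]$, which are strictly decreasing, so pairwise distinct. First, $e^{i\varphi_j}=e^{i\varphi_k}$ would require $\varphi_j-\varphi_k\in 2\pi\mathbb{Z}$. Since $|\varphi_j-\varphi_k|<1<2\pi$, this forces $j=k$. Second, $e^{i\varphi_j}=e^{-i\varphi_k}$ would require $\varphi_j+\varphi_k\in 2\pi\mathbb{Z}$. But $0<\varphi_j+\varphi_k\le 2<2\pi$, so this cannot happen; the case $j=k$ is included here and rules out $\varphi_j\in\pi\mathbb{Z}$.

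With distinctness established, only the zero functional annihilates the span, so the span equals $V$. This holds whether $\mathbb{N}$ is taken to include $0$ or not, since only $L\in\{1,\dots,2m\}$ was used.
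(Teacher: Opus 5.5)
Your proof is correct and follows essentially the same route as the paper. Both reduce the claim to the invertibility of the Vandermonde-type matrix $(\mu^L)$ with $L=1,\dots,2m$ and $\mu\in\{e^{\pm i\varphi_j}\}$: the paper shows the coordinate vectors $\Psi(R_1),\dots,\Psi(R_{2m})$ are independent, while you use the equivalent dual statement that only the zero functional annihilates them. Your explicit check that the $2m$ numbers $e^{\pm i\varphi_j}$ are pairwise distinct (using $\varphi_j\in(0,1]$) supplies a step the paper merely asserts ``by design.''
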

\begin{proof}
    Clearly $R_L\in V$ for all $L$. Define a real-linear homomorphism $\Psi \colon V \to \mathbb{R}^{2m}$ by
\begin{align}
    \text{diag}(a_1 I_2+b_1 J,\dots,a_m I_2+b_m J) \mapsto
(a_1, b_1,\dots,a_m, b_m).
\end{align}
This defines an isomorphism of real vector spaces. We have
\begin{align}
    \Psi(R_L)
&=
(\cos(L\varphi_1),\sin(L\varphi_1),\ldots,\cos(L\varphi_m),\sin(L\varphi_m)) \notag \\
&= 
\left(\dfrac{e^{iL\varphi_1}+e^{-iL\varphi_1}}{2},\dfrac{e^{iL\varphi_1}-e^{-iL\varphi_1}}{2i}, \ldots,\dfrac{e^{iL\varphi_m}+e^{-iL\varphi_m}}{2},\dfrac{e^{iL\varphi_m}-e^{-iL\varphi_m}}{2i} \right).
\end{align}
Let $\lambda_i \coloneqq e^{i\varphi_i}$. Consider the $2m \times 2m$ matrix
\begin{align}
    M= \left[\Psi(R_1), \Psi(R_2), \ldots, \Psi(R_{2m}) \right]^\top =\begin{bmatrix}
\dfrac{\lambda_1^{1}+\lambda_1^{-1}}{2} & \dfrac{\lambda_1^{1}-\lambda_1^{-1}}{2i} & \cdots & \dfrac{\lambda_m^{1}+\lambda_m^{-1}}{2} & \dfrac{\lambda_m^{1}-\lambda_m^{-1}}{2i} \\
\dfrac{\lambda_1^{2}+\lambda_1^{-2}}{2} & \dfrac{\lambda_1^{2}-\lambda_1^{-2}}{2i} & \cdots & \dfrac{\lambda_m^{2}+\lambda_m^{-2}}{2} & \dfrac{\lambda_m^{2}-\lambda_m^{-2}}{2i} \\
\vdots & \vdots & \ddots & \vdots & \vdots\\
\dfrac{\lambda_1^{2m}+\lambda_1^{-2m}}{2} & \dfrac{\lambda_1^{2m}-\lambda_1^{-2m}}{2i} & \cdots & \dfrac{\lambda_m^{2m}+\lambda_m^{-2m}}{2} & \dfrac{\lambda_m^{2m}-\lambda_m^{-2m}}{2i} \\
\end{bmatrix}
\end{align}
Note that, $\lambda_i, i \in [2m]$ are nonzero and pairwise distinct by the design of $R_L$. The determinant of $M$ can be computed by transforming it to a nonzero scalar multiply with a Vandermonde matrix of $\lambda_i$. It follows that $M$ is invertible. Thus the $2m$ vectors $\Psi(R_i), i \in[2m]$ are linear independent in $\mathbb{R}^{2m}$. Since $\Psi$ is an isomorphism, we conclude that $\text{span}_{\mathbb{R}}\{R_L\colon L\in [2m]\} = V$, which means $\text{span}_{\mathbb{R}}\{R_L\colon L\in \mathbb{N}\} = V$.
\end{proof}
Back to the problem. From Lemma~\ref{appendix:lemma_mha_rope_1}, it follows that
\begin{align}
    0 = K_i A (\nabla_{Q_i} h)^\top +  (\nabla_{K_i} h)A Q_i^\top, ~~ \text{ for all $A \in V$}.
\end{align}
Write $Q_i, K_i$ as blocks of $d \times 2$ matrices as follows
\begin{align} \label{appendix:mharope_eq_1}
    Q_i = \left[ Q_i^{(1)}, \ldots, Q_i^{(m)} \right]^\top ~\text{ and } ~K_i = \left[ K_i^{(1)}, \ldots, K_i^{(m)} \right]^\top ~ ~\text{ for  $Q_i^{(j)},K_i^{(j)}$ are $d \times 2$ matrices for $j \in [m]$.}
\end{align}
Therefore \cref{appendix:mharope_eq_1} is equivalent to
\begin{align}
    0 = K_i^{(j)} (\nabla_{Q_i^{(j)}} h)^\top +  (\nabla_{K_i^{(j)}} h) (Q_i^{(j)})^\top = K_i^{(j)} J (\nabla_{Q_i^{(j)}} h)^\top +  (\nabla_{K_i^{(j)}} h) J (Q_i^{(j)})^\top , ~~ \text{ for all $j \in [m]$}.
\end{align}
This is exactly the set of constraints stated in Theorem~\ref{main:theorem_MHA_RoPE}. Solving these constraints leads to the invariants $\|Q_i^{(j)}\|_F^2 - \|K_i^{(j)}\|_F^2$ for $i \in [n], j\in[m]$. The same conclusion for the matrices $V_i,O_i$. This concludes the proof.
\end{proof}

%% file: sections/appendix_theory_MoE.tex
\section{Theoretical Proofs of Conservation Laws for Mixture-of-Experts}

\subsection{Results on the Poles of SiLU Functions}
\label{appendix:subsecsion_lemma_silu}

Recall the SiLU function
\begin{align}
    \textnormal{SiLU}(z) = z \cdot \sigma(z),  \text{ where } \sigma \text{ is the sigmoid function}~~~ \sigma(z) = \dfrac{1}{1+e^{-z}}.
\end{align}
SiLU can be considered as a meromorphic function, whose poles are exactly the zeros of $1+e^{-z}$ with corresponding multiplicity. For $a$ in $\mathbb{R}$, we now analyze the poles of $\text{SiLU}(az)$. If $a = 0$, then $\text{SiLU}(az) = 0$ for all $z$. If $a \neq 0$, then the set of poles of $\text{SiLU}(az) = 0$ is $\{i(2k+1)\pi/a ~ \colon ~ k \in \mathbb{Z}\}$. Each pole of $\text{SiLU}(az)$ is a simple pole.

\begin{lemma}\label{lemma:SiLU_1}
Let $S$ be a countable subset of $\mathbb{C}$. For $a \in \mathbb{R}$, define $f \colon \mathbb{C} \to \mathbb{C}$ such that
\begin{align}
    f_{a}(z) = 1 + e^{az}, \quad \text{ for all } z \in \mathbb{C}.
\end{align}
Consider the subset $A \subset \mathbb{R}$ consisting of all $a \in \mathbb{R}$ such that $f_{a}$ has no zeros in $S$, i.e.
\begin{align}
    A \coloneqq \left\{a \in \mathbb{R} ~ \colon ~ f_{a}(z) \neq 0 \text{ for all }  z \in S \right\} \subset \mathbb{R}.
\end{align}
Then $A$ is cocountable in $\mathbb{R}$.
\end{lemma}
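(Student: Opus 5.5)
The plan is to write $A$ as the complement of a countable union of small sets, one for each point of $S$. Since
\begin{align*}
    \mathbb{R}\setminus A=\bigcup_{z\in S}\{a\in\mathbb{R} ~\colon~ 1+e^{az}=0\},
\end{align*}
and a countable union of countable sets is countable, it suffices to show that for each fixed $z\in S$ the set $Z_z\coloneqq\{a\in\mathbb{R}\colon e^{az}=-1\}$ is countable.

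Fix $z\in S$. The equation $e^{w}=-1$ holds exactly when $w\in\{i(2k+1)\pi \colon k\in\mathbb{Z}\}$, which is the description of the poles of SiLU recalled just before the lemma. So $a\in Z_z$ if and only if $az=i(2k+1)\pi$ for some $k\in\mathbb{Z}$. We then split into two cases.

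\emph{Case $z=0$.} Here $f_a(0)=2\neq 0$ for every $a$, so $Z_z=\emptyset$.

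\emph{Case $z\neq0$.} For each $k$, the equation $az=i(2k+1)\pi$ has at most one real solution, namely $a=i(2k+1)\pi/z$, and only when this number is real. Hence $Z_z$ injects into $\mathbb{Z}$ and is countable. In fact, $Z_z$ is empty unless $z$ is purely imaginary, but we do not need this.

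Combining the two cases, $\mathbb{R}\setminus A$ is a countable union of countable sets, hence countable, so $A$ is cocountable. There is no real obstacle here. The only point needing care is the explicit zero set of $1+e^{w}$, including the degenerate case $z=0$. We also note that the argument implies $A$ is dense in $\mathbb{R}$, which is presumably how the lemma will be used in later density arguments.
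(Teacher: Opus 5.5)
Your proof is correct and follows the paper's argument: you write $\mathbb{R}\setminus A$ as the union over $z\in S$ of the sets $\{a\in\mathbb{R} : az \in i(2\mathbb{Z}+1)\pi\}$, each of which is empty or countable. Your separate treatment of $z=0$ and the injection into $\mathbb{Z}$ for $z\neq 0$ simply spell out what the paper compresses into ``either empty or countable.''
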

\begin{proof}
We have
\begin{align}\label{eq:lemma_SiLU_1}
    \mathbb{R}\setminus A
= \{a\in\mathbb{R}~ \colon ~ f_a(z)=0 \text{ for some } z \in S\} = \bigcup_{z \in S} \{a \in \mathbb{R} ~ \colon ~ f_a(z) = 0\}.
\end{align}
Note that, for any $z \in \mathbb{C}$, $f_a(z) = 0$ is equivalent to $az=i(2k+1)\pi$ for some  $k\in\mathbb{Z}$. Hence, the set $\{a \in \mathbb{R} ~ \colon ~ f_a(z) = 0\}$ is either empty or countable. From \cref{eq:lemma_SiLU_1}, since $S$ is countable, we have $\mathbb{R} \setminus A$ is countable.
\end{proof}

% \begin{lemma}
% With the setting of Lemma~\ref{lemma:SiLU_1}, consider $A \subset \mathbb{R}^n$ consisting of all $(a_1, \ldots, a_n) \in \mathbb{R}^n$ such that for each $i \in [n]$, $f_{a_i}$ has no zeros in $S$, i.e.
% \begin{align}
%     A \coloneqq \left\{(a_1, \ldots, a_n) \in \mathbb{R}^n~ \colon ~ f_{a_i}(z) \neq 0 \text{ for all } i \in [n] \text{ and } z \in S \right\} \subset \mathbb{R}^n.
% \end{align}
% Then $A$ is dense $G_\delta$ in $\mathbb{R}^n$.
% \end{lemma}
% \begin{proof}
% For $i \in [n]$, define
% \begin{align}
%     A_i \coloneqq \left\{(a_1, \ldots, a_n) \in \mathbb{R}^n~ \colon ~ f_{a_i}(z) \neq 0 \text{ for all } z \in S \right\} \subset \mathbb{R}^n.
% \end{align}
% Lemma~\ref{lemma:SiLU_1} implies that $A_i$ is dense $G_\delta$ in $\mathbb{R}^n$ for all $i \in [n]$. Moreover, by definition, we have
% \begin{align}
%     A = A_1 \cap A_2 \cap \ldots \cap A_n.
% \end{align}
% Thus, $A$ is dense $G_\delta$ in $\mathbb{R}^n$.
% \end{proof}

\subsection{Conservation Laws for Dense Mixture-of-Experts with Softmax Gating -- Theorem~\ref{theorem:MoE_CL}}
\label{appendix:proof_moe_softmax}

The
\emph{Mixture-of-Experts with dense gating} is defined as the
function $\text{MoE} \colon \mathbb{R}^d \to \mathbb{R}^d$ given by \begin{align}  &\textup{MoE}\big(x;W, \{\theta_i\}_{i=1}^n\big) = \sum_{i=1}^{n}g_i(x;W) \cdot \text{E}(x;\theta_i), \end{align}
where the gating weight $g_i(x;W)$ is defined by
\begin{align*}
    g_i(x;W) \coloneqq \text{softmax}_i(Wx) = e^{W^{(i)}x} \big/ \textstyle\sum_{p=1}^n e^{W^{(p)}x},
\end{align*}
with $W^{(i)} \coloneqq W_{i,:}$ denoting the $i$-th row of $W$. The score
$g_i(x;W)$ specifies the relative contribution of the $i$-th expert to the final
output. The parameter space of MoE is given by
\begin{align*}
    \Theta_\text{MoE} = \mathbb{R}^{n \times d} \times(\Theta_{\text{SwiGLU}})^n.
\end{align*}
We now present the proof of the conservation laws for Mixture-of-Experts with softmax gating. Note that, compared to the notation in the main text, we write $W^{(i)}$ in place
of $W_i$ in order to index the entries (rows) of the gating matrix $W$. Following
the observation in Section~\ref{sec:charac_pre}, we restrict attention to the
scalar-valued setting, so that the MoE map, and hence each expert network, is
real-valued.

\begin{proof}
\textbf{Step 1.} The MoE with softmax dense gating is $\mathcal{C}^1$ on its parameter space. Its gradients are given as follows. For the parameters of Experts, we have 
\begin{align}
\nabla_{A^{(i)}} \text{MoE} &= g_i \cdot \nabla_A \text{E}(x;A^{(i)},B^{(i)},C^{(i)}) = g_i \cdot \big( \textup{SiLU}(B^{(i)}x)\odot (C^{(i)}x) \big)^\top\\
\nabla_{B^{(i)}} \text{MoE} &= g_i \cdot \nabla_B \text{E}(x;A^{(i)},B^{(i)},C^{(i)}) = g_i \cdot \big( (A^{(i)})^\top \odot \big( (C^{(i)}x)\odot \textup{SiLU}'(B^{(i)}x) \big) \big) x^\top\\
\nabla_{C^{(i)}} \text{MoE} &= g_i \cdot \nabla_C \text{E}(x;A^{(i)},B^{(i)},C^{(i)}) = g_i \cdot \big( (A^{(i)})^\top \odot \textup{SiLU}(B^{(i)}x) \big) x^\top
\end{align}
For the parameters of the gating, we have $\nabla_W g_i = g_i (e_i-g)x^\top$, which implies $\nabla_{W^{(i)}} g_j = g_j (\delta_{ij}-g_i)x^\top$. Thus,
\allowdisplaybreaks\begin{align}
    \nabla_{W^{(i)}}\text{MoE}(x) &= \sum_{j=1}^n \nabla_{W^{(i)}}g_j \cdot \text{E}_j =\sum_{j=1}^n g_j (\delta_{ij}-g_i)x^\top \cdot \text{E}_j = \left( \sum_{j=1}^n g_j (\delta_{ij}-g_i)\cdot \text{E}_j\right) x^\top \notag \\
    &= \left( \sum_{j=1}^n g_j \delta_{ij}\cdot \text{E}_j-\sum_{j=1}^n g_j g_i\cdot \text{E}_j\right) x^\top = \left(  g_i \cdot \text{E}_i-g_i \cdot \sum_{j=1}^n g_j \cdot \text{E}_j\right) x^\top = g_i  \cdot (\text{E}_i - \text{MoE})x^\top.
\end{align}
Let $h \colon \Theta_{\text{MoE}} \to \mathbb{R}$ be a conservation law for MoE. We have, for all $\theta \in \Theta_{\text{MoE}}$ and $x \in \mathbb{R}^d$,
\allowdisplaybreaks
\begin{align}\label{appendix:ffn:eq_10}
    0 &= \sum_{i=1}^n \Bigg(\Big\langle \nabla_{W^{(i)}}\text{MoE} , \nabla_{W^{(i)}}h\Big\rangle + \Big\langle \nabla_{A^{(i)}}\text{MoE} , \nabla_{A^{(i)}}h\Big\rangle  + \Big\langle \nabla_{B^{(i)}}\text{MoE} , \nabla_{B^{(i)}}h\Big\rangle + \Big\langle \nabla_{C^{(i)}}\text{MoE} , \nabla_{C^{(i)}}h\Big\rangle \Bigg) \notag \\
    &= \sum_{i=1}^n \Bigg(\Big\langle g_i\cdot  (\text{E}_i - \text{MoE})x^\top , \nabla_{W^{(i)}}h\Big\rangle + \Big\langle g_i \cdot \big( \textup{SiLU}(B^{(i)}x)\odot (C^{(i)}x) \big)^\top , \nabla_{A^{(i)}}h\Big\rangle  \notag\\
    &\hspace{28pt}+ \Big\langle g_i \cdot \big( (A^{(i)})^\top \odot \big( (C^{(i)}x)\odot \textup{SiLU}'(B^{(i)}x) \big) \big) x^\top , \nabla_{B^{(i)}}h\Big\rangle + \Big\langle g_i \cdot \big( (A^{(i)})^\top \odot \textup{SiLU}(B^{(i)}x) \big) x^\top , \nabla_{C^{(i)}}h\Big\rangle \Bigg) \notag\\
    &= \sum_{i=1}^n g_i  \Bigg(\Big\langle (\text{E}_i - \text{MoE})x^\top , \nabla_{W^{(i)}}h\Big\rangle + \Big\langle \big( \textup{SiLU}(B^{(i)}x)\odot (C^{(i)}x) \big)^\top , \nabla_{A^{(i)}}h\Big\rangle \notag \\
    &\hspace{28pt}+ \Big\langle  \big( (A^{(i)})^\top \odot \big( (C^{(i)}x)\odot \textup{SiLU}'(B^{(i)}x) \big) \big) x^\top , \nabla_{B^{(i)}}h\Big\rangle + \Big\langle  \big( (A^{(i)})^\top \odot \textup{SiLU}(B^{(i)}x) \big) x^\top , \nabla_{C^{(i)}}h\Big\rangle \Bigg) .
\end{align}
Fixed $j\in [d]$. Set $x=t e_j$ where $t \in \mathbb{R}$ and $e_j \in \mathbb{R}^{d\times 1}$ is the $j$-th basis vector. Then \cref{appendix:ffn:eq_10} becomes
\allowdisplaybreaks
\begin{align}
        0 
    &= \sum_{i=1}^n g_i  \Bigg(  (\text{E}_i - \text{MoE})t \cdot \partial_{W^{(i)}_j}h+\sum_{k=1}^{d_1}
\Big( \textup{SiLU}(B^{(i)}_{kj}t)(C^{(i)}_{kj}t) \Big) \cdot \partial_{A^{(i)}_k}h \notag\\
    &\hspace{100pt}+ \sum_{k=1}^{d_1}
\Big( A^{(i)}_k(C^{(i)}_{kj}t)\textup{SiLU}'(B^{(i)}_{kj}t)t \Big) \cdot  \partial_{B^{(i)}_{kj}}h + \sum_{k=1}^{d_1}
\Big( A^{(i)}_k\textup{SiLU}(B^{(i)}_{kj}t)t \Big) \cdot \partial_{C^{(i)}_{kj}}h\Bigg) \notag\\
&= \sum_{i=1}^n g_i t \Bigg(  (\text{E}_i - \text{MoE}) \cdot \partial_{W^{(i)}_j}h+\sum_{k=1}^{d_1}
\Big( \textup{SiLU}(B^{(i)}_{kj}t)C^{(i)}_{kj} \Big) \cdot \partial_{A^{(i)}_k}h \notag\\
    &\hspace{100pt}+ \sum_{k=1}^{d_1}
\Big( A^{(i)}_k(C^{(i)}_{kj}t)\textup{SiLU}'(B^{(i)}_{kj}t) \Big) \cdot  \partial_{B^{(i)}_{kj}}h + \sum_{k=1}^{d_1}
\Big( A^{(i)}_k\textup{SiLU}(B^{(i)}_{kj}t) \Big) \cdot \partial_{C^{(i)}_{kj}}h\Bigg) \notag\\
&= \sum_{i=1}^n g_i t \Bigg(  (\text{E}_i - \text{MoE}) \cdot \partial_{W^{(i)}_j}h + \sum_{k=1}^{d_1}
\Big( A^{(i)}_kC^{(i)}_{kj}\cdot  \partial_{B^{(i)}_{kj}}h \cdot t\textup{SiLU}'(B^{(i)}_{kj}t) \Big) \notag\\ 
    &\hspace{200pt}+\sum_{k=1}^{d_1}
\Big( C^{(i)}_{kj} \cdot \partial_{A^{(i)}_k}h + A^{(i)}_k \cdot \partial_{C^{(i)}_{kj}}h\Big) \cdot \textup{SiLU}(B^{(i)}_{kj}t)\Bigg).
\end{align} 
By continuity of $t \in \mathbb{R}$, we have
\begin{align} \label{appendix:ffn:eq_11}
0 &= \sum_{i=1}^n g_i \Bigg(  (\text{E}_i - \text{MoE}) \cdot \partial_{W^{(i)}_j}h + \sum_{k=1}^{d_1}
\Big( A^{(i)}_kC^{(i)}_{kj}\cdot  \partial_{B^{(i)}_{kj}}h \cdot t\textup{SiLU}'(B^{(i)}_{kj}t) \Big) \notag\\ 
    &\hspace{200pt}+\sum_{k=1}^{d_1}
\Big( C^{(i)}_{kj} \cdot \partial_{A^{(i)}_k}h + A^{(i)}_k \cdot \partial_{C^{(i)}_{kj}}h\Big) \cdot \textup{SiLU}(B^{(i)}_{kj}t)\Bigg).
\end{align}
for all $t \in \mathbb{R}$. Note that, in \cref{appendix:ffn:eq_11}, $g_i, \text{E}_i, \text{MoE}_i$ are functions of $x = te_j$; during our argument, their input arguments are omitted for simplicity.

\textbf{Step 2.} 
For $i \in [d]$ and $k \in [d_1]$, define the following functions:
\begin{align}
    \mathfrak{a}^{(i)}(t) &\coloneqq g_i \cdot (\text{E}_i - \text{MoE}) \cdot \partial_{W^{(i)}_j}h, \\
    \mathfrak{b}^{(i)}_k(t) &\coloneqq g_i \cdot  A^{(i)}_kC^{(i)}_{kj}\cdot  \partial_{B^{(i)}_{kj}}h \cdot t\textup{SiLU}'(B^{(i)}_{kj}t) , \\
    \mathfrak{c}^{(i)}_k(t) &\coloneqq g_i \cdot
\Big( C^{(i)}_{kj} \cdot \partial_{A^{(i)}_k}h + A^{(i)}_k \cdot \partial_{C^{(i)}_{kj}}h\Big) \cdot \textup{SiLU}(B^{(i)}_{kj}t).
\end{align}
From \cref{appendix:ffn:eq_11}, the sum of these functions is zero, that is, 
\begin{align}\label{appendix:ffn:eq_12}
    0 = \sum_{i=1}^n \mathfrak{a}^{(i)}(t) +\sum_{i=1}^n\sum_{k=1}^{d_1}\mathfrak{b}_k^{(i)}(t) +\sum_{i=1}^n\sum_{k=1}^{d_1}\mathfrak{c}_k^{(i)}(t).
\end{align}
We now consider $\mathfrak{a}^{(i)}, \mathfrak{b}^{(i)}_k, \mathfrak{c}^{(i)}_k$ as complex functions $\mathfrak{a}_i(z), \mathfrak{b}_i(z), \mathfrak{c}_i(z)$ for $z \in \mathbb{C}$. By design, they are meromorphic functions on $\mathbb{C}$, since they are given as quotients of holomorphic functions whose denominators are nonzero for all $\theta$. The poles of these functions belongs to the poles of $g_i(ze_j)$ and $\textup{SiLU}(B^{(i)}_{kj}z)$ for all $i \in [n]$ and $k \in [d_1]$. Note that, the poles of $\textup{SiLU}'(B^{(i)}_{kj}z)$ are exactly the poles of $\textup{SiLU}(B^{(i)}_{kj}z)$. Define the following sets:
\begin{align}
    \mathbf{S}_1 &\coloneqq \{\theta ~\colon~ \text{for each } i \in [n] \text{ and } k \in [d_1], \textup{SiLU}(B^{(i)}_{kj}z) \text{ has at least one pole} \} , \\
    \mathbf{S}_2 &\coloneqq \{\theta ~\colon~ \textup{SiLU}(B^{(i)}_{kj}z) \text{ and } \textup{SiLU}(B^{(i')}_{k'j}z) \text{ do not have common poles for all } (i,k) \neq (i',k')\} , \\
    \mathbf{S}_3 &\coloneqq \{\theta ~\colon~ \textup{SiLU}(B^{(i)}_{kj}z) \text{ and } g_{i'}(ze_j)  \text{ do not have common poles for all } i, i' \in[n] \text{ and } k \in [d_1]\}.
\end{align}
Let $\theta \in \mathbf{S}_1 \cap \mathbf{S}_2 \cap \mathbf{S}_3$. In \cref{appendix:ffn:eq_12}, for each $i\in[n],k\in[d_1]$, consider a pole $z_0$ of $\textup{SiLU}(B^{(i)}_{kj}z)$ (which exists by $\mathbf{S}_1$). Note that, $g_i$ does not have zero, and $z_0$ is nonzero (as in Appendix~\ref{appendix:subsecsion_lemma_silu}). Thus, $z_0$ is a double pole of $\mathfrak{b}^{(i)}_k$ (if $\mathfrak{b}^{(i)}_k$ is not identical to $0$), while it is at most a simple pole, or not a pole at all, of any other function among $\mathfrak{a}, \mathfrak{b}, \mathfrak{c}$ (by $\mathbf{S}_2, \mathbf{S}_3$). We conclude that $\mathfrak{b}^{(i)}_k \equiv 0$. Define the following set
    \begin{align}
            \mathbf{S}_4 = \{\theta ~\colon~ A^{(i)}_{k} \text{ and } C^{(i)}_{jk} \text{ are nonzero for all } i \in [n] \text{ and } k \in [d_1] \}.
    \end{align}
Then $\mathfrak{b}^{(i)}_k \equiv 0$ implies $\partial_{B^{(i)}_{kj}}h = 0$ for all $\theta \in \mathbf{S}_1 \cap \mathbf{S}_2 \cap \mathbf{S}_3 \cap \mathbf{S}_4$.

We have the following observations on the topology of $\mathbf{S}_1, \mathbf{S}_2, \mathbf{S}_3, \mathbf{S}_4$:
\begin{itemize}
    \item From Appendix~\ref{appendix:subsecsion_lemma_silu}, the set $\mathbf{S}_1$ and $\mathbf{S}_2$ are simply
    \begin{align}
            \mathbf{S}_1 &= \{\theta ~\colon~ B^{(i)}_{kj} \text{ are nonzero for all } i \in [n] \text{ and } k \in [d_1] \}, \\
            \mathbf{S}_2&=\{\theta ~\colon~ (B^{(i)}_{kj})^2 \text{ are pairwise distinct for all } i \in [n] \text{ and } k \in [d_1] \}.
    \end{align}
    It implies $\mathbf{S}_1$, $\mathbf{S}_2$, and also $\mathbf{S}_4$, are open and dense in $\Theta_{\text{MoE}}$.
    \item Fix $W \in \mathbb{R}^{n \times d}$. Since the map
$z \mapsto \prod_{i=1}^n g_i(ze_j;W)$ is meromorphic, its set of poles is
countable. The set $\mathbf{S}_3$ is defined to consist of all parameters
$\theta$ such that none of the functions $\textup{SiLU}(B^{(i)}_{kj}z)$ has a pole
in this countable set. By Lemma~\ref{lemma:SiLU_1}, it follows that
$\mathbf{S}_3$ is dense in $\Theta_{\text{MoE}}$.

\end{itemize}
The intersection $\mathbf{S}_1 \cap \mathbf{S}_2 \cap \mathbf{S}_3 \cap \mathbf{S}_4$
is dense, since it is the intersection of three open dense sets and one dense
set. Moreover, because $h$ is $\mathcal{C}^1$ and
$\partial_{B^{(i)}_{kj}} h = 0$ for all
$\theta \in \mathbf{S}_1 \cap \mathbf{S}_2 \cap \mathbf{S}_3 \cap \mathbf{S}_4$,
this identity extends to all of $\Theta_{\text{MoE}}$. As this holds for
arbitrary $i,j,k$,  we conclude that $\nabla_{B^{(i)}}h = 0$ for all $i \in [n]$.

\textbf{Step 3.} It follows that \cref{appendix:ffn:eq_11} reduces to
\begin{align}\label{eq:MoE_1}
0 &= \sum_{i=1}^n g_i \Bigg(  (\text{E}_i - \text{MoE}) \cdot \partial_{W^{(i)}_j}h +\sum_{k=1}^{d_1}
\Big( C^{(i)}_{kj} \cdot \partial_{A^{(i)}_k}h + A^{(i)}_k \cdot \partial_{C^{(i)}_{kj}}h\Big) \cdot \textup{SiLU}(B^{(i)}_{kj}t)\Bigg).
\end{align}
Evaluating the value of $\text{E}_i$ at $x = te_j$. We have
\begin{align}
\text{E}_i(t e_j)
= \sum_{k=1}^{d_1} A^{(i)}_k
      \text{SiLU}\big(B^{(i)}_{kj}t\big)
      \big(C^{(i)}_{kj}t\big).  
\end{align}
% Thus, for the MoE, we have
% \begin{align}
% \text{MoE}(t e_j)
% = \sum_{i=1}^n \left (\dfrac{e^{t W^{(i)}_j}}{\sum_{p=1}^n e^{t W^{(p)}_j}} \sum_{k=1}^{d_1} A^{(i)}_k
%       \text{SiLU}\big(B^{(i)}_{kj}t\big)
%       \big(C^{(i)}_{kj}t\big) \right).
% \end{align}
Substituting this expression in \cref{eq:MoE_1}, we have
\begin{align} \label{appendix:ffn:eq_13}
    0 &= \sum_{i=1}^n g_i \left[  \left(\sum_{k=1}^{d_1} A^{(i)}_k
      \text{SiLU}\big(B^{(i)}_{kj}t\big)
      \big(C^{(i)}_{kj}t\big) - \sum_{i'=1}^n \left (g_{i'}  \sum_{k=1}^{d_1} A^{(i')}_k
      \text{SiLU}\big(B^{(i')}_{kj}t\big)
      \big(C^{(i')}_{kj}t\big) \right)\right) \cdot \partial_{W^{(i)}_j}h \right. \notag \\ & \left.   \hspace{225pt}+\sum_{k=1}^{d_1}
\Big( C^{(i)}_{kj} \cdot \partial_{A^{(i)}_k}h + A^{(i)}_k \cdot \partial_{C^{(i)}_{kj}}h\Big) \cdot \textup{SiLU}(B^{(i)}_{kj}t)\right] \notag \\
&=\sum_{i=1}^n \sum_{k=1}^{d_1} \textup{SiLU}(B^{(i)}_{kj}t) \cdot \notag \\
&\hspace{50pt} \left[ g_i\Big( C^{(i)}_{kj} \cdot \partial_{A^{(i)}_k}h + A^{(i)}_k \cdot \partial_{C^{(i)}_{kj}}h\Big) +g_iA^{(i)}_kC^{(i)}_{kj}t \cdot \partial_{W^{(i)}_j}h - 
g_iA^{(i)}_kC^{(i)}_{kj}t \cdot \left(\sum_{i'=1}^n g_{i'} \partial_{W^{(i')}_j}h   \right)
\right] \notag \\
&= \sum_{i=1}^n \sum_{k=1}^{d_1} \textup{SiLU}(B^{(i)}_{kj}t) \cdot \mathfrak{d}^{(i)}_{k}(t),
\end{align}
where
\begin{align}
    \mathfrak{d}^{(i)}_{k}(t)\coloneqq g_i\Big( C^{(i)}_{kj} \cdot \partial_{A^{(i)}_k}h + A^{(i)}_k \cdot \partial_{C^{(i)}_{kj}}h\Big) +g_iA^{(i)}_kC^{(i)}_{kj}t \cdot \partial_{W^{(i)}_j}h - 
g_iA^{(i)}_kC^{(i)}_{kj}t \cdot \left(\sum_{i'=1}^n g_{i'} \partial_{W^{(i')}_j}h   \right).
\end{align}
Note that, these $\mathfrak{d}$ functions do not depend on $B^{(i)}$ for any $i$, since $\nabla_{B^{(i)}} h = 0$ everywhere. We now consider $\textup{SiLU}(B^{(i)}_{kj}t)$ and $\mathfrak{d}^{(i)}_k$ as complex functions $\textup{SiLU}(B^{(i)}_{kj}z)$ and $\mathfrak{d}^{(i)}_k(z)$. By design, they are meromorphic functions on $\mathbb{C}$, since they are given as quotients of holomorphic functions whose denominators are nonzero for all $\theta$. The poles of these functions belongs to the poles of $g_i(ze_j)$ and $\textup{SiLU}(B^{(i)}_{kj}t)$ for all $i \in [n]$ and $k \in [d_1]$. Note that, the poles of $\textup{SiLU}'(B^{(i)}_{kj}t)$ are exactly the poles of $\textup{SiLU}(B^{(i)}_{kj}t)$. 
Recall the sets $\mathbf{S}_1, \mathbf{S}_2, \mathbf{S}_3$ from \textbf{Step 2}, and define the following set
\begin{align}
    &\mathbf{S}_5 \coloneqq \{\theta ~\colon~ \text{for each } i \in[n] \text{ and } k \in [d_1], \text{ we have either }  \mathfrak{d}^{(i)}_k \equiv 0, \text{ or} \notag \\ &\hspace{140pt}\text{the zeros of } \mathfrak{d}^{(i)}_k(z) \text{ does not contain any pole of } \textup{SiLU}(B^{(i)}_{kj}z) \}.
\end{align}
Let $\theta \in \mathbf{S}_1 \cap \mathbf{S}_2 \cap \mathbf{S}_3 \cap \mathbf{S}_5$ and fix a pair $i\in[n],k\in[d_1]$. If $\mathfrak{d}^{(i)}_k \not\equiv 0$, in \cref{appendix:ffn:eq_13}, consider a pole $z_0$ of $\textup{SiLU}(B^{(i)}_{kj}z)$ (which exists by $\mathbf{S}_1$). Thus, $z_0$ is a single pole of $\textup{SiLU}(B^{(i)}_{kj}z) \cdot \mathfrak{d}^{(i)}_{k}(z)$ (by $\mathbf{S}_5$), while it is not a pole of any $\textup{SiLU}(B^{(i')}_{k'j}z) \cdot \mathfrak{d}^{(i')}_{k'}(z)$ for $(i',k') \neq (i,k)$ (by $\mathbf{S}_2, \mathbf{S}_3$). This is a contradiction since the sum of those functions is $0$. We conclude that $\mathfrak{d}^{(i)}_k \equiv 0$. Thus, for all $i \in [n]$, $k \in [d_1]$, and $\theta \in \mathbf{S}_1 \cap \mathbf{S}_2 \cap \mathbf{S}_3 \cap \mathbf{S}_5$, we have
\begin{align}\label{appendix:ffn:eq_14}
    0 = g_i\Big( C^{(i)}_{kj} \cdot \partial_{A^{(i)}_k}h + A^{(i)}_k \cdot \partial_{C^{(i)}_{kj}}h\Big) +g_iA^{(i)}_kC^{(i)}_{kj}t \cdot \partial_{W^{(i)}_j}h - 
g_iA^{(i)}_kC^{(i)}_{kj}t \cdot \left(\sum_{i'=1}^n g_{i'} \partial_{W^{(i')}_j}h   \right).
\end{align}
Recall the set $\mathbf{S}_4$ and consider $\theta \in \mathbf{S}_1 \cap \mathbf{S}_2 \cap \mathbf{S}_3 \cap \mathbf{S}_4 \cap \mathbf{S}_5$. By substituting $t=0$ and dividing for non-zero terms in \cref{appendix:ffn:eq_14}, we obtain
\begin{align}
    0 = C^{(i)}_{kj} \cdot \partial_{A^{(i)}_k}h + A^{(i)}_k \cdot \partial_{C^{(i)}_{kj}}h = \partial_{W^{(i)}_j}h - \sum_{i'=1}^n g_{i'} \partial_{W^{(i')}_j}h.
\end{align}
We now make several observations regarding the topology of these intersections.
By the preceding arguments, the sets $\mathbf{S}_1$, $\mathbf{S}_2$, and
$\mathbf{S}_4$ are open and dense. In contrast, the openness of $\mathbf{S}_3$
and $\mathbf{S}_5$ is not guaranteed. Fix $W$ and matrices
$A^{(i)},C^{(i)}$ for all $i\in[n]$. Then:

\begin{itemize}
    \item In order for $\theta$ to belong to $\mathbf{S}_3$, each function
    $\textup{SiLU}(B^{(i)}_{kj}z)$ must avoid having poles in a certain countable
    set, namely the set of zeros of the meromorphic function
    $z \mapsto \prod_{i=1}^n g_i(ze_j;W)$.

    \item Similarly, for $\theta$ to belong to $\mathbf{S}_5$, for each
    $i\in[n]$ and $k\in[d_1]$ with $\mathfrak{d}_k^{(i)} \not\equiv 0$, the function
    $\textup{SiLU}(B^{(i)}_{kj}z)$ must avoid poles in the countable set of zeros
    of $z \mapsto \mathfrak{d}_k^{(i)}$.

    \item Consequently, membership in $\mathbf{S}_3 \cap \mathbf{S}_5$ requires
    $\textup{SiLU}(B^{(i)}_{kj}z)$ to avoid poles in a countable set. By
    Lemma~\ref{lemma:SiLU_1}, it follows that $\mathbf{S}_3 \cap \mathbf{S}_5$
    is dense in $\Theta_{\text{MoE}}$.
\end{itemize}
The intersection
$\mathbf{S}_1 \cap \mathbf{S}_2 \cap \mathbf{S}_4 \cap (\mathbf{S}_3 \cap \mathbf{S}_5)$
is the intersection of three open dense sets and one dense set, and is therefore
dense. We may thus conclude that, for all $\theta \in \Theta_{\text{MoE}}$,
\begin{align}
    0 = C^{(i)}_{kj} \cdot \partial_{A^{(i)}_k}h + A^{(i)}_k \cdot \partial_{C^{(i)}_{kj}}h = \partial_{W^{(i)}_j}h - \sum_{i'=1}^n g_{i'} \partial_{W^{(i')}_j}h.
\end{align}
The first implies $0 = C^{(i)}_{k,:} \cdot \partial_{A^{(i)}_k}h + A^{(i)}_k \cdot \nabla_{C^{(i)}_{k,:}}h$ for all $k \in [d_1]$ and $i \in [n]$, while the latter implies $\partial_{W^{(1)}_j}h = \ldots = \partial_{W^{(n)}_j}h$ for all $j \in [d]$, which is equivalent to $\nabla_{W^{(1)}}h = \ldots = \nabla_{W^{(n)}}h$. Solving the constraints for $h$ with respect to the expert parameters is exactly the same as in Theorem~\ref{main:theorem_swiglu_mlps}, while solving the constraints with respect to the gating parameter leads to the invariant $\sum_{i=1}^n W^{(i)}$.
\end{proof}

\subsection{Conservation Laws for Sparse Mixture-of-Experts with Softmax Gating -- Theorem~\ref{theorem:SMoE_CL}}
\label{appendix:proof_SMoE_softmax}

We now present the proof of the conservation laws for Sparse Mixture-of-Experts with softmax gating. The notations are kept the same as in the case of MoE.
\begin{proof}
Let $h \colon \Theta_{\text{MoE}} \to \mathbb{R}$ be a conservation law for SMoE. 

\textbf{Step 1.}
Fix a subset $T \subset [n]$ of $k$ elements, and define the vector $e_T \coloneqq \sum_{i \in T} e_i \in \mathbb{R}^n$. In other words,  the $i$-th entry of $e_T$ is $1$ if $i \in T$ and $0$ if $i \notin T$. Define following the real open $n$-box of $\mathbb{R}^n$, centered at $e_T$ 
\begin{align}
    \Omega(T) \coloneqq \{ e_T + v ~ \colon ~ v\in \mathbb{R}^n ~ \text{such that} ~ |v_i| < 1/3 ~\text{for all } i \in [n]\} \subset \mathbb{R}^n.
\end{align}
It is clear that Top-$k(s) = T$ for all $s \in \Omega_T$. Now, define the linear map $\mathcal{F} \colon \mathbb{R}^d \times \Theta_\text{MoE}\to \mathbb{R}^n$ such that $(x,\theta) \mapsto Wx$, and denote the inverse image of $\Omega(T)$ via $\mathcal{F}$ as $\Lambda(T)$. By the construction, it follows that for all $(x,\theta) \in \Lambda(T)$, we have Top-$k(Wx) = T$, and
\begin{align} \label{appendix:SMoE_eq_1}
    \textup{SMoE}(x;\theta) = \sum_{i\in T} \text{softmax}_i(W^{(i)}x ~ \colon ~ i \in T) \cdot \text{E}_i(x).
\end{align}
We have the following observations on the set $ \Lambda(T)$. 

\textit{(i) } $\Lambda(T)$ is nonempty and open in $\mathbb{R}^d \times \Theta_\text{MoE}$. Moreover, the set 
\begin{align}
    \Theta_\text{MoE}(T) \coloneqq \{\theta \in \Theta_\text{MoE} ~ \colon ~ \text{there exists } x \in \mathbb{R}^d \text{ such that } (x,\theta) \in \Lambda(T)\} 
\end{align}
is dense in $\Theta_\text{MoE}$.

Indeed, since $\mathcal{F}$ is continuous and $\Omega(T)$ is open in $\mathbb{R}^n$, $\Lambda(T)$ open in $\mathbb{R}^d \times \Theta_\text{MoE}$. Define the following set
    \begin{align}
    \mathbf{S}_1 &\coloneqq \big\{\theta \in \Theta_\text{MoE} ~ \colon ~ \text{ the set } \{W^{(i)} \colon i \in[n]\} \text{ is linear independent in } \mathbb{R}^d \big\}.
\end{align}
By the assumption $d > n$, we have $\mathbf{S}_1$ is dense in $\Theta_\text{MoE}$. Moreover, the linear dependency implies that for any $\theta \in \mathbf{S}_1$, there exists $x \in \mathbb{R}^d$ such that $e_T = Wx$. In particular, $\theta \in \Theta_\text{MoE}(T)$. Therefore, $\mathbf{S}_1 \subset \Theta_\text{MoE}(T)$. Since $\mathbf{S}_1$ is dense, $\Theta_\text{MoE}(T)$ is dense.

\textit{(ii)} For all $(x,\theta) \in \Lambda(T)$, SMoE$(x;\cdot)$ is $\mathcal{C}^2$-differentiable in a neighborhood of $\theta$.

For $(x,\theta) \in \Lambda(T)$, since $\Lambda(T)$ is open, the Top-$k$ output remains $T$ in a neighborhood of $(x,\theta)$. As in \cref{appendix:SMoE_eq_1}, SMoE$(x,\cdot)$ is $\mathcal{C}^2$-differentiable in a neighborhood of $\theta$.

Define
\begin{align}
    f_T \colon \mathbb{R}^d \times \Theta_\text{MoE}\to \mathbb{R}^d, \qquad f_T(x;\theta) \coloneqq \sum_{i\in T} \text{softmax}_i(W^{(i)}x ~ \colon ~ i \in T) \cdot \text{E}_i(x).
\end{align}
Note that $f_T$ is a Mixture-of-Experts of $k$ experts, and is $\mathcal{C}^1$. We have $f_T \equiv \text{SMoE}$ on $\Lambda(T)$. Let $h \colon \Theta_{\text{MoE}} \to \mathbb{R}$ be a conservation law for SMoE. For all $(x,\theta) \in \Lambda(T)$, we have
\begin{align}
    0 = \left \langle \nabla_\theta \text{SMoE}, \nabla_\theta h \right \rangle,~ \text{ which means } ~0 = \left \langle \nabla_\theta f_T, \nabla_\theta h \right \rangle.
\end{align}
Note that this holds only on $\Lambda(T)$. Observe that for each $\theta \in \Theta_\text{MoE}(T)$, since $\Lambda(T)$ is open, there exists an open set $U$ of $\mathbb{R}^d$ such that $U \times \{\theta\} \subset \Lambda(T)$. In particular, $\left \langle \nabla_\theta f_T(z;\theta), \nabla_\theta h(\theta) \right  \rangle = 0$ for all $z \in U$. Moreover, $z \mapsto \left \langle \nabla_\theta f_T(z;\theta), \nabla_\theta h(\theta) \right \rangle$ defines a meromorphic function. It follows that this meromorphic function is identical to $0$. We conclude that,
\begin{align}\label{appendix:SMoE_eq_2}
    0 = \left \langle \nabla_\theta f_T(x,\theta), \nabla_\theta h(\theta) \right \rangle, \text{ for all } \theta \in \Theta_\text{MoE}(T) \text{  and } x \in \mathbb{R}^d.
\end{align}
The RHS of \cref{appendix:SMoE_eq_2} defines a continuous function since $h$ is $\mathcal{C}^1$. Since $\Theta_\text{MoE}(T)$ is dense in $\Theta_\text{MoE}$, \cref{appendix:SMoE_eq_2} holds for all $\theta \in \Theta_\text{MoE}$ and $x \in \mathbb{R}^d$. In other words, by ignoring the gating weight and expert weights of the $i$-th experts for all $i \notin T$, $h$ can be considered as a conservation laws for $f_T$. By the result on conservation laws for MoE, we have
\begin{align}
    0 = C^{(i)}_{k,:} \cdot \partial_{A^{(i)}_k}h + A^{(i)}_k \cdot \nabla_{C^{(i)}_{k,:}}h \text{ for all } k \in [d_1] \text{ and } i \in T, \text{ and } \nabla_{W^{(i)}}h = \nabla_{W^{(j)}}h \text{ for all } i,j \in T.
\end{align}
This holds for any choice of $T \subset [n]$ of $k$ elements. Since $k \ge 2$, varying $T$ leads to
\begin{align}
    0 = C^{(i)}_{k,:} \cdot \partial_{A^{(i)}_k}h + A^{(i)}_k \cdot \nabla_{C^{(i)}_{k,:}}h \text{ for all } k \in [d_1] \text{ and } i \in [n], \text{ and } \nabla_{W^{(1)}}h = \ldots = \nabla_{W^{(j)}}h.
\end{align}
This concludes the proof.
\end{proof}

\subsection{Conservation Laws for Mixture-of-Experts with Sigmoid  Gating -- Theorem~\ref{main:theorem_MoE_sigmoid}}
\label{appendix:proof_moe_sigmoid}

The proof for MoE with sigmoid gating follows the same overall structure as in
the softmax-gated case. Although the gating weights take a different form, the
resulting identities can be treated in an analogous manner. In particular, the
core relation used in the proof of the softmax MoE case,
given in \cref{appendix:ffn:eq_11}, now becomes
\begin{align} 
0 &= \sum_{i=1}^n g_i \Bigg( \left(1-\sigma\left(tW_j^{(i)}\right)\right) (\text{E}_i - \text{MoE}) \cdot \partial_{W^{(i)}_j}h + \sum_{k=1}^{d_1}
\Big( A^{(i)}_kC^{(i)}_{kj}\cdot  \partial_{B^{(i)}_{kj}}h \cdot t\textup{SiLU}'(B^{(i)}_{kj}t) \Big) \notag\\ 
    &\hspace{200pt}+\sum_{k=1}^{d_1}
\Big( C^{(i)}_{kj} \cdot \partial_{A^{(i)}_k}h + A^{(i)}_k \cdot \partial_{C^{(i)}_{kj}}h\Big) \cdot \textup{SiLU}(B^{(i)}_{kj}t)\Bigg).
\end{align}
Using this identity of meromorphic functions, the remainder of the proof proceeds exactly as in the softmax-gated MoE case in Appendix~\ref{appendix:proof_moe_softmax}. The sparse-gating case can be reduced to the dense-gating case via the density argument in Appendix~\ref{appendix:proof_SMoE_softmax}.

%% file: sections/appendix_hyperparams.tex
\section{Experimental Configuration}
\label{appendix:hyperparams}

\textbf{CIFAR-10.}
For CIFAR-10, we employ a Vision Transformer composed of 6 Transformer layers with a hidden size of 256 and an intermediate (MLP) dimension of 1024. The model uses 4 attention heads and a patch size of 4. Training is performed using stochastic gradient descent (SGD) with a batch size of 5000, gradient accumulation steps of 10, and no weight decay. We train for 300 epochs with learning rates of $\{1 \times 10^{-4}, 4 \times 10^{-4}, 7 \times 10^{-4}\}$ to examine the effect of step size on conservation law adherence. A linear learning rate scheduler is applied throughout training.

\textbf{ImageNet-1K.}
For ImageNet-1k, we use a Vision Transformer with 12 Transformer layers, a hidden size of 192, and an intermediate dimension of 768. The model uses 4 attention heads, and images are tokenized using a patch size of 16. The model is trained with SGD using a batch size of 1024 and zero weight decay. We train for 5 epochs with learning rates of $\{1 \times 10^{-3}, 2 \times 10^{-3}, 3 \times 10^{-3}\}$ to examine the effect of step size on conservation law adherence. A linear learning rate scheduler is applied throughout training.

\textbf{Penn Treebank.}
For the Penn Treebank language modeling task, we adopt a Transformer architecture with 12 layers, a hidden size of 192, and an intermediate size of 768. The model employs 3 attention heads and a mixture-of-experts (MoE) configuration with 4 experts. The maximum sequence length is set to 256 tokens. Training is conducted using SGD with a batch size of 192, gradient accumulation steps of 18, and zero weight decay. We train for 300 epochs with learning rates of $5 \times 10^{-6}$, $1 \times 10^{-5}$, and $5 \times 10^{-5}$ to examine the effect of step size on conservation law adherence. A linear learning rate scheduler is applied throughout training.

\textbf{WikiText103.}
For WikiText103, we use a Transformer-based language model with 12 layers, hidden size 192, and intermediate size 768. The attention module consists of 3 heads, and the model is configured with 4 experts in the MoE layers. The maximum sequence length is 256 tokens. Optimization is performed using SGD with a batch size of 48 and no weight decay. We train for 15,000 steps with learning rates of $\{2 \times 10^{-6}, 5 \times 10^{-6}, 1 \times 10^{-5}\}$ to examine the effect of step size on conservation law adherence. A linear learning rate scheduler is applied throughout training.

\textbf{Runtime Environment.}
All experiments are implemented in PyTorch 2.9.1 with CUDA 12.8 and conducted on a single NVIDIA H100 GPU equipped with 80GB of high-bandwidth memory. We utilize 12 parallel data loading workers to ensure efficient data pipeline throughput and minimize I/O bottlenecks during training. Across all experimental configurations, peak GPU memory consumption remains below 70GB, leaving sufficient headroom for system stability. Training time varies by dataset and model size but does not exceed 4 hours per individual configuration, enabling rapid iteration and extensive hyperparameter exploration. 

%% file: sections/appendix_experimental_results.tex
\section{Experimental Results}\label{appendix:experimental_results}
\begin{figure}[t]
    \centering
    \begin{subfigure}{0.262\textwidth}
        \centering
        \includegraphics[width=1\textwidth]{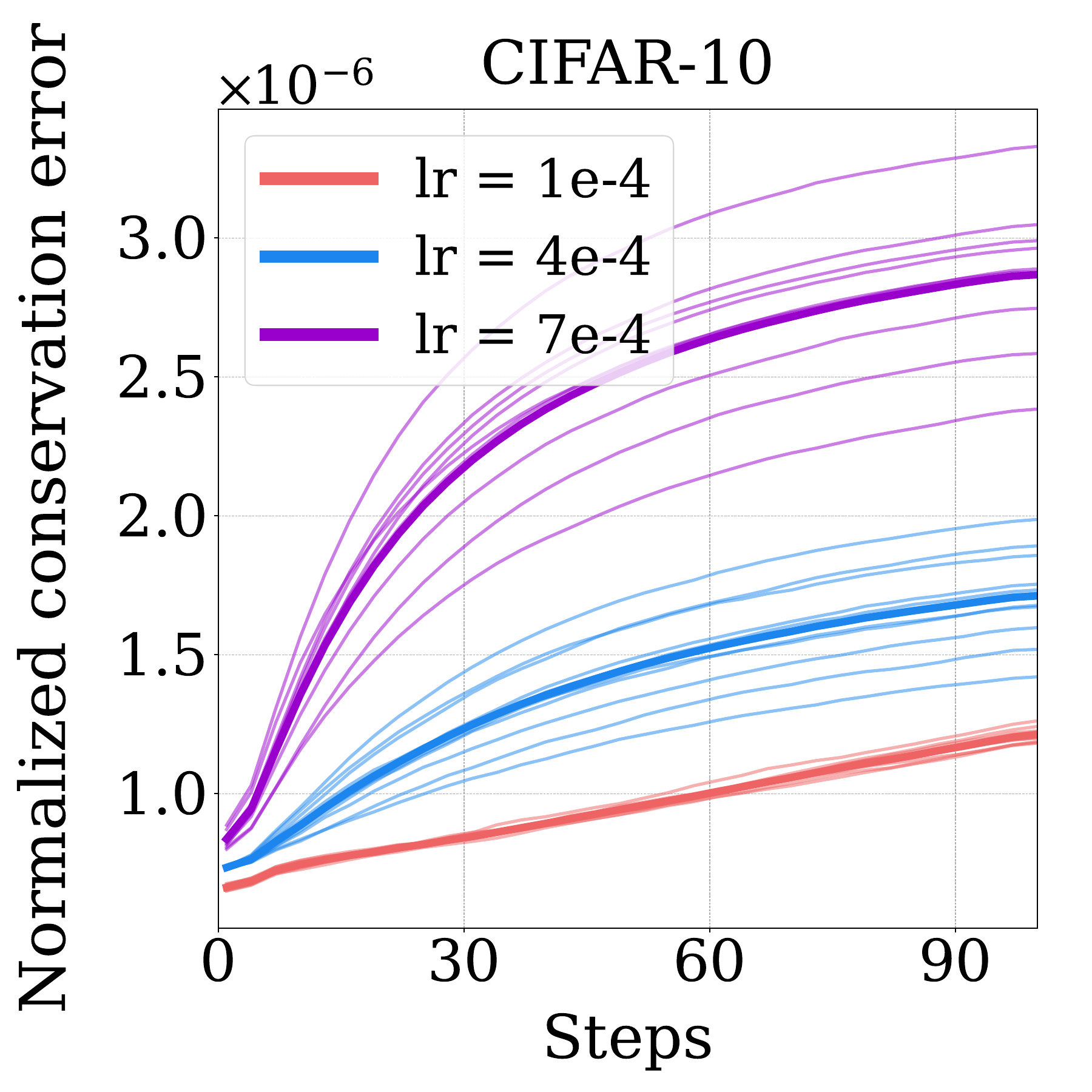} 
        \caption{Multi-head Attention}
        \label{fig:exp_cifar_full_mha}
    \end{subfigure}
    \begin{subfigure}{0.24\textwidth}
        \centering
        \includegraphics[width=1\textwidth]{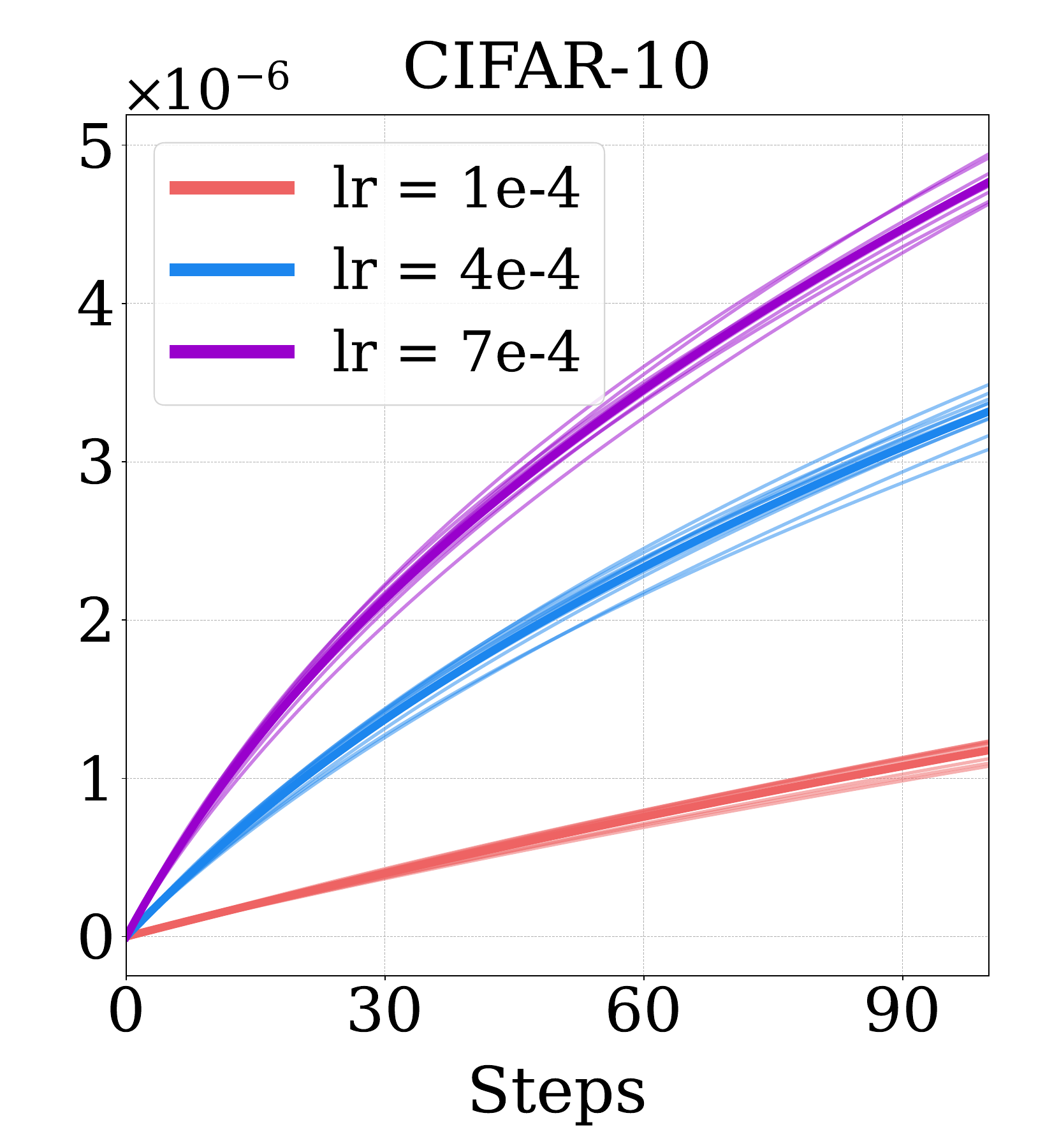} 
        \caption{SwiGLU FFNs}
        \label{fig:exp_cifar_full_ffn}
    \end{subfigure}
    \begin{subfigure}{0.24\textwidth}
        \centering
        \includegraphics[width=1\textwidth]{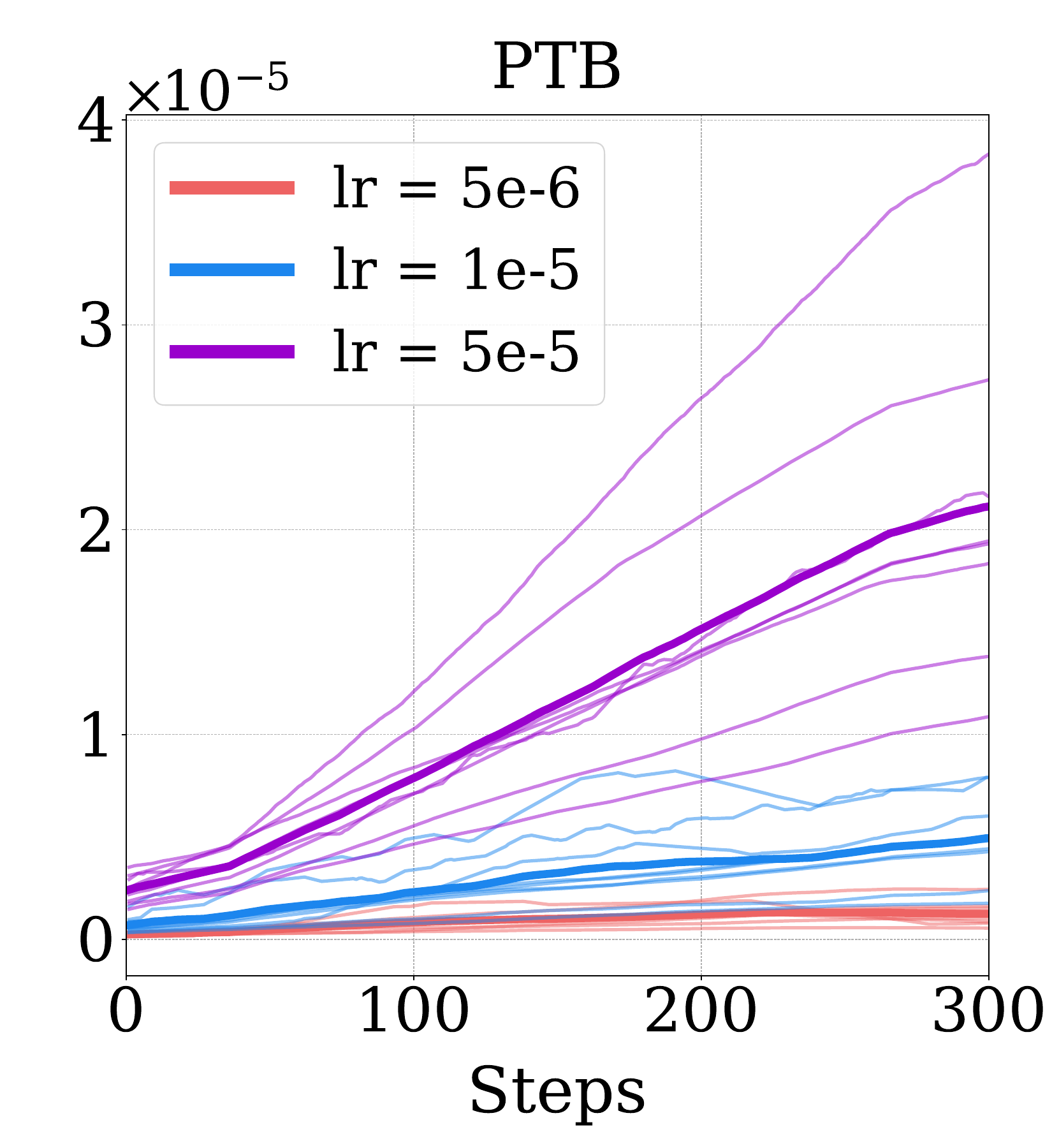} 
        \caption{RoPE}
        \label{fig:exp_ptb_full_mha}
    \end{subfigure}
    \begin{subfigure}{0.24\textwidth}
        \centering
        \includegraphics[width=1\textwidth]{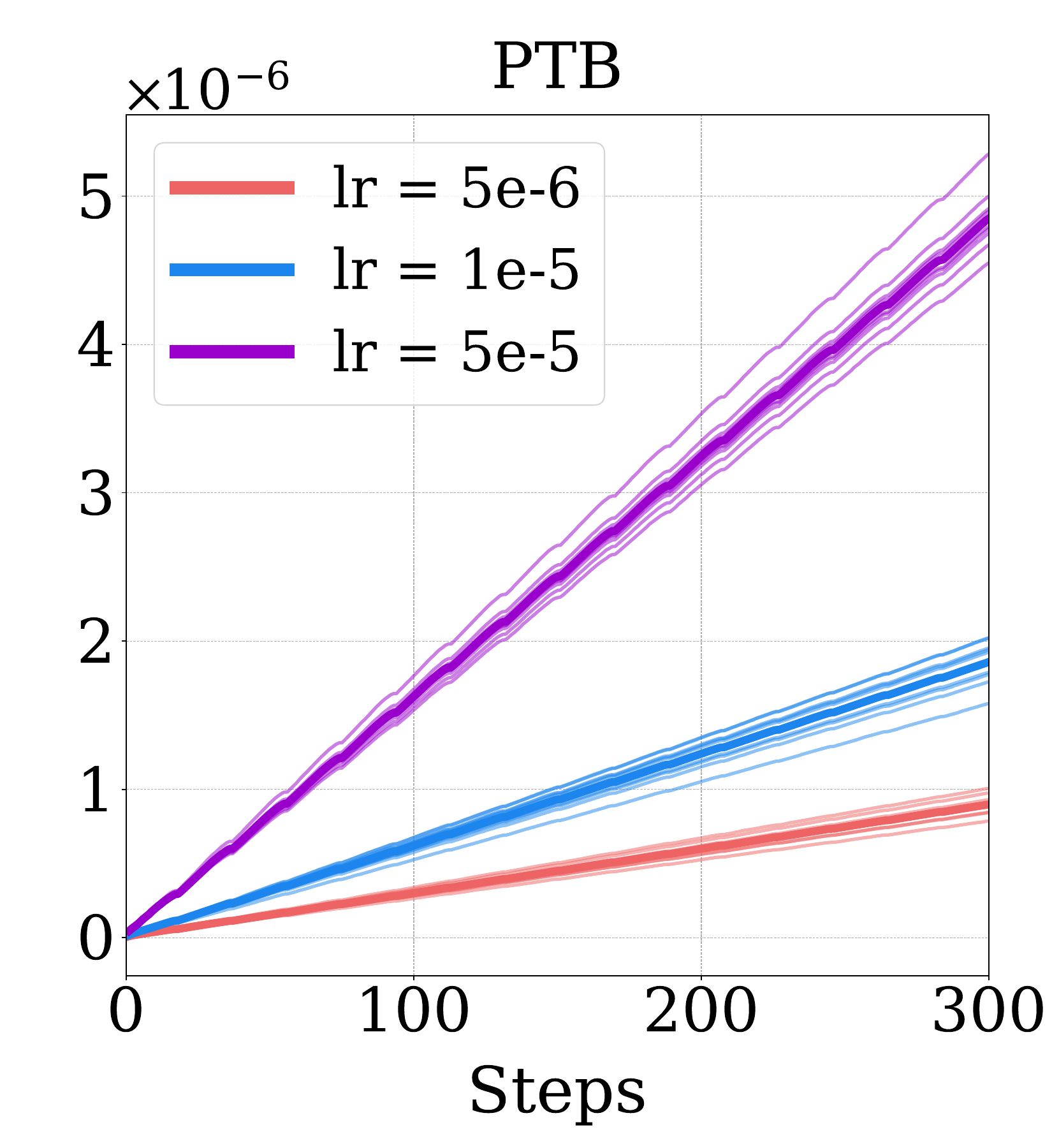} 
        \caption{Softmax Gating (MoE)}
        \label{fig:exp_ptb_full_moe}
    \end{subfigure}
\caption{Conservation error tracking during training. (a-b) Per-step conservation metrics for multi-head attention and SwiGLU FFNs on CIFAR. (c-d) Average conservation errors for RoPE attention blocks and MoE softmax gating on PTB, computed as mean relative L2 deviations from initialization across all tracked quantities.}    \label{fig:exp_full_batch}
\end{figure}

\subsection{Normalized Sigmoid Gating}

We extend our analysis to MoE architectures with normalized sigmoid gating, defined as $\text{sigmoid}(Wx) / \|\text{sigmoid}(Wx)\|_1$, to validate conservation law generality beyond softmax mechanisms. We evaluate on Penn Treebank (PTB) with full-batch gradient descent and WikiText-103 with mini-batch SGD using learning rates $\{0.01, 0.02, 0.05\}$.

Figure~\ref{fig:exp_sigmoid} confirms that normalized sigmoid gating exhibits the same bounded linear behavior as softmax gating. Conservation errors maintain $O(\tau^2 k)$ scaling on both datasets, with error magnitude increasing proportionally with learning rate. This consistency across gating mechanisms validates that conservation laws arise from optimization geometry rather than specific activation functions.

\begin{figure}[t]
    \centering
    \begin{subfigure}{0.50\textwidth}
        \includegraphics[width=1\textwidth]{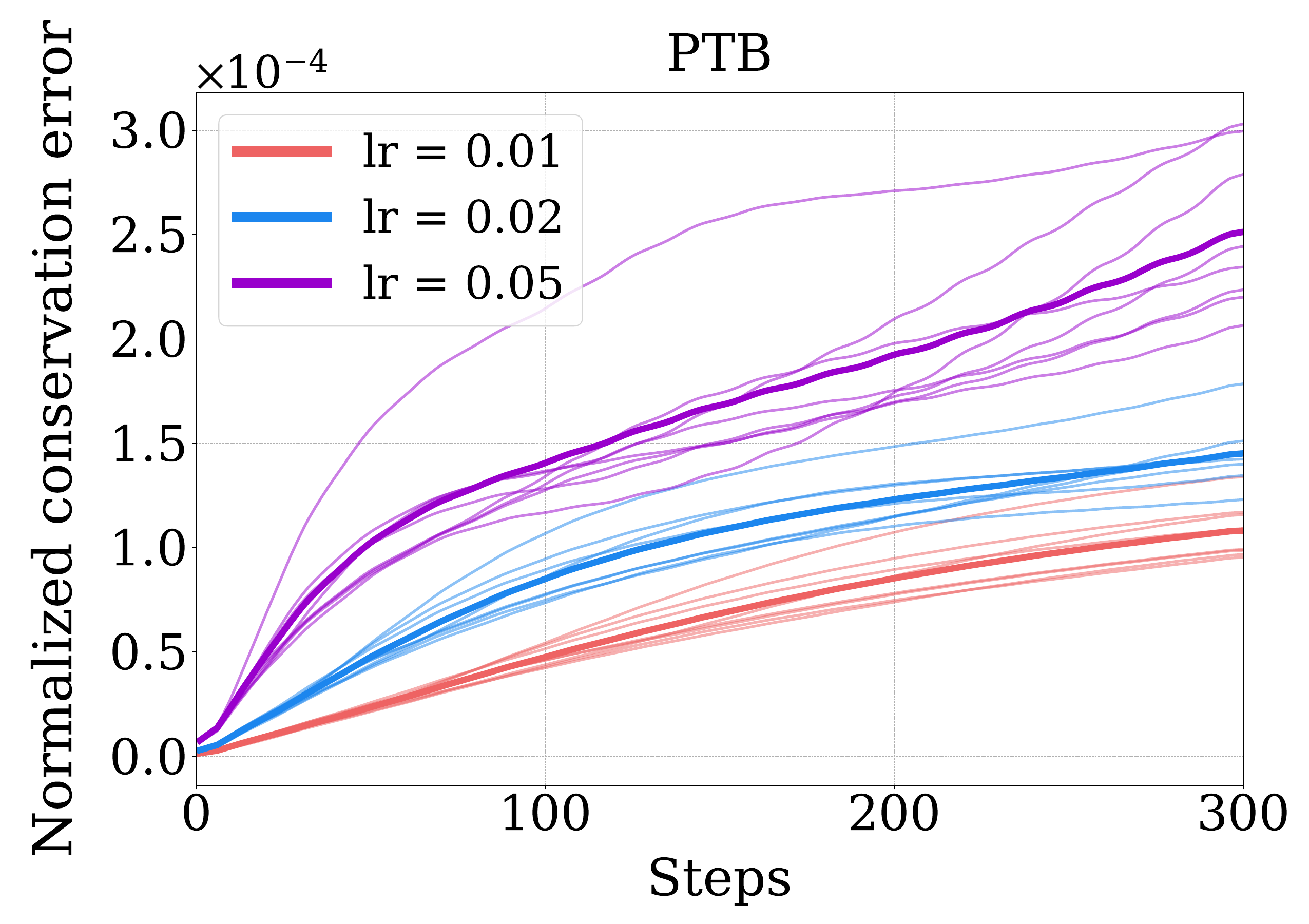} 
        \label{fig:exp_ptb_sigmoid}
    \end{subfigure}
    \begin{subfigure}{0.47\textwidth}
        \includegraphics[width=1\textwidth]{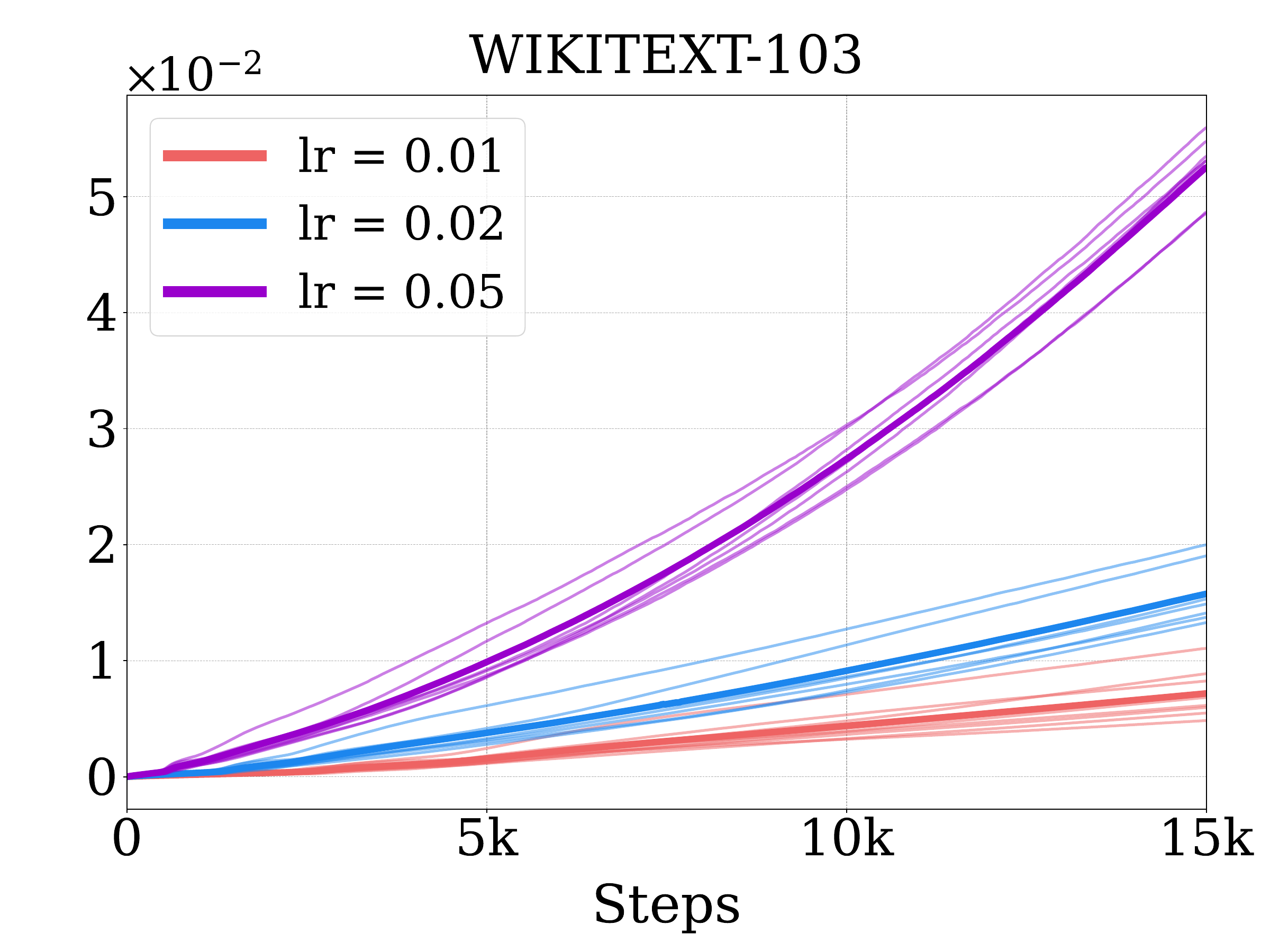} 
        \label{fig:exp_wt103_sigmoid}
    \end{subfigure}
    \caption{Normalized sigmoid gating conservation errors on Penn Treebank (left, full-batch) and WikiText-103 (right, mini-batch SGD). Conservation errors exhibit linear $O(\tau^2 k)$ scaling with learning rate-dependent bounds. Thin lines: individual layers; thick lines: averages.}
    \label{fig:exp_sigmoid}
\end{figure}
\subsection{Non-conservation Quantities}

To provide qualitative context for our conservation results, we also track representative non-conserved quantities within each architectural component. Unlike conservation laws, which exhibit bounded $O(\tau^2 k)$ evolution, non-conserved quantities can experience arbitrarily large deviations during training--small parameter changes may induce unbounded growth without theoretical guarantees. Direct quantitative comparison between conserved and non-conserved quantities is inherently problematic: non-conserved quantities lack theoretical bounds on their evolution, and any bounded linear combination of conservation laws would itself be conserved. Nevertheless, tracking these quantities alongside conservation laws illustrates the fundamental distinction between the two classes and demonstrates the special structure of conserved quantities.

For each architectural component, we monitor the following non-conserved quantities:
\begin{itemize}
    \item \textbf{Multi-head Attention.} For each head $i \in [h]$, we track the sums
$Q_i^\top Q_i + K_i^\top K_i, \quad V_i^\top V_i + O_i^\top O_i$,
which differ from the conserved differences $Q_i^\top Q_i - K_i^\top K_i$ and $V_i^\top V_i - O_i^\top O_i$.
    \item \textbf{Multi-head Attention with RoPE.} For each head $i \in [h]$ and block $j \in [m]$, where $Q_i^{(j)}, K_i^{(j)} \in \mathbb{R}^{d \times 2}$ denote the $(j)$-th blocks of projection matrices, we track
$\|{Q_i^{(j)}}_{:,0}\|_2^2 + \|{K_i^{(j)}}_{:,0}\|_2^2 - \|{Q_i^{(j)}}_{:,1}\|_2^2 - \|{K_i^{(j)}}_{:,1}\|_2^2$,
where ${Q_i^{(j)}}_{:,\ell}$ denotes the $\ell$-th column. This contrasts with the conserved block-wise Frobenius norm differences.
    \item \textbf{SwiGLU FFN.} For each pair of indices $(i,j)$, we track element-wise differences
$A_{ij}^2 - C_{ij}^2$,
which lack the column-wise or row-wise structure that produces conservation in $\|A_{:,i}\|^2 - \|C_{i,:}\|^2$.
    \item \textbf{MoE Gating.} We track the first row $W_{0,:}$ (gating of first experts) in contrast to the conserved row sums $\sum_{j=1}^d W_{:,j}$ (sum over dimensions for each expert).

\end{itemize}
\begin{figure}[H]
    \centering
    \begin{subfigure}{0.25\textwidth}
        \centering
        \includegraphics[width=1\textwidth]{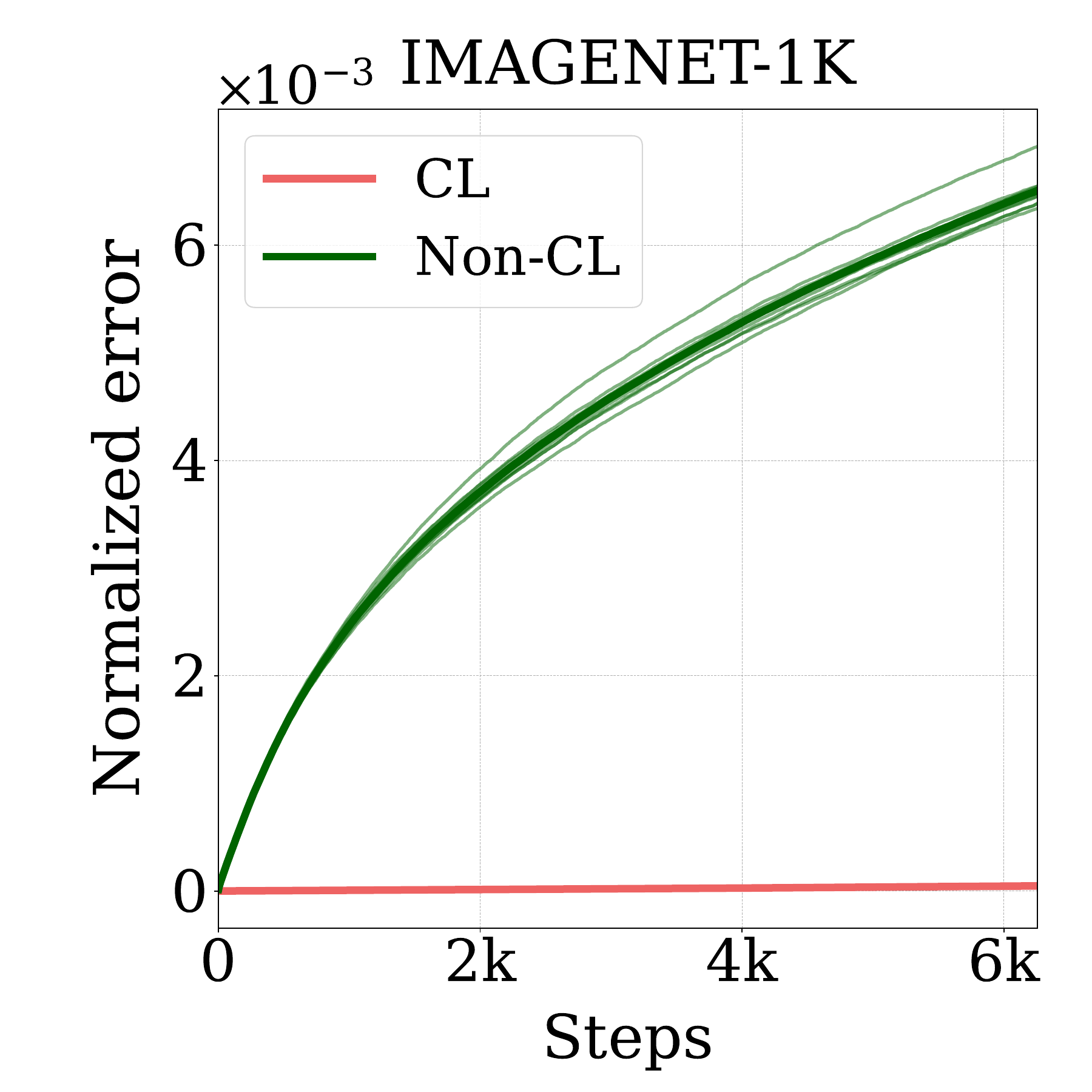} 
        \caption{Multi-head Attention}
        \label{fig:exp_non_conserved_imnet_mha}
    \end{subfigure}
    \begin{subfigure}{0.23\textwidth}
        \centering
        \includegraphics[width=1\textwidth]{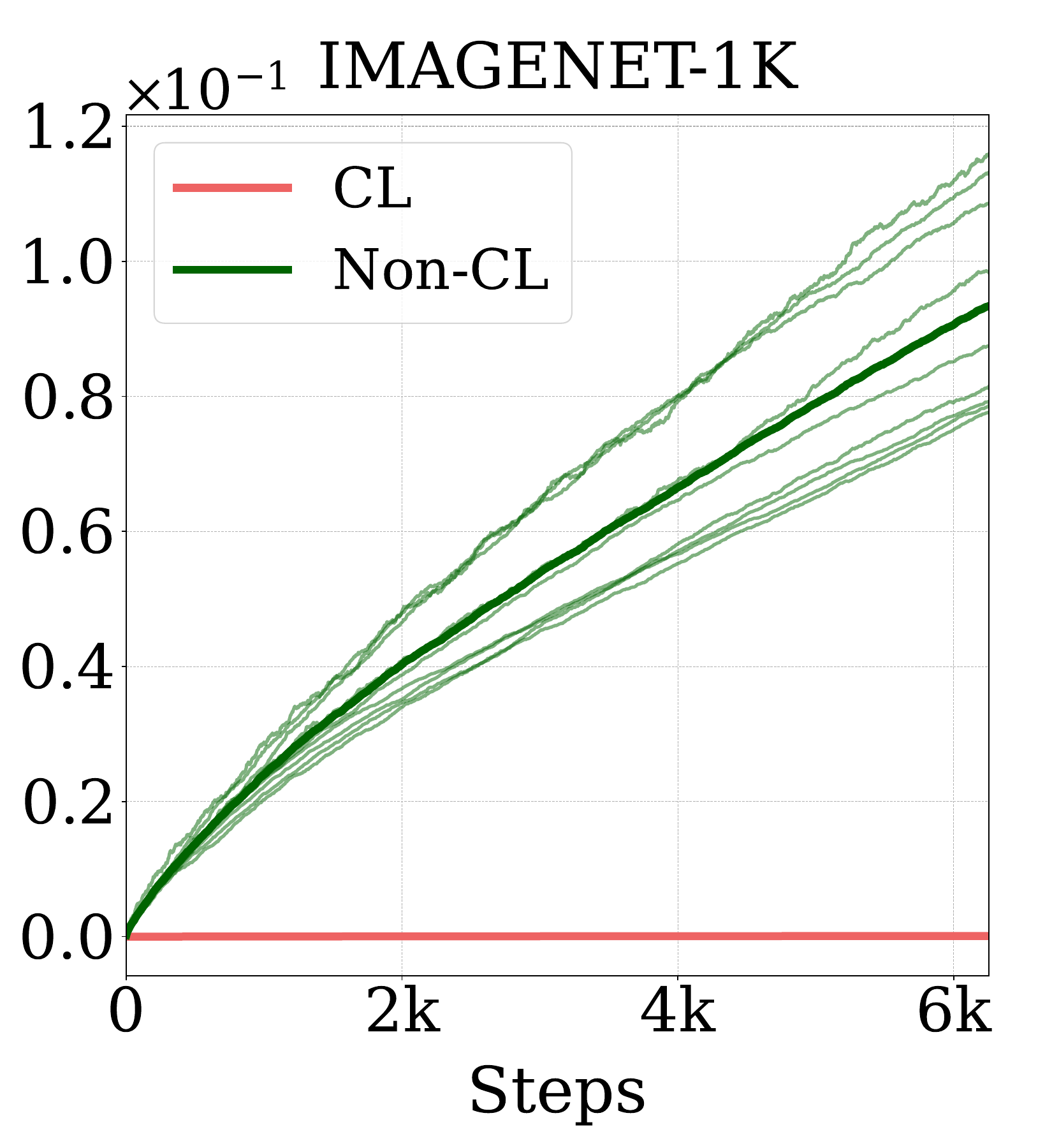} 
        \caption{SwiGLU FFN}
        \label{fig:exp_non_conserved_imnet_ffn}
    \end{subfigure}
    \begin{subfigure}{0.23\textwidth}
        \centering
        \includegraphics[width=1\textwidth]{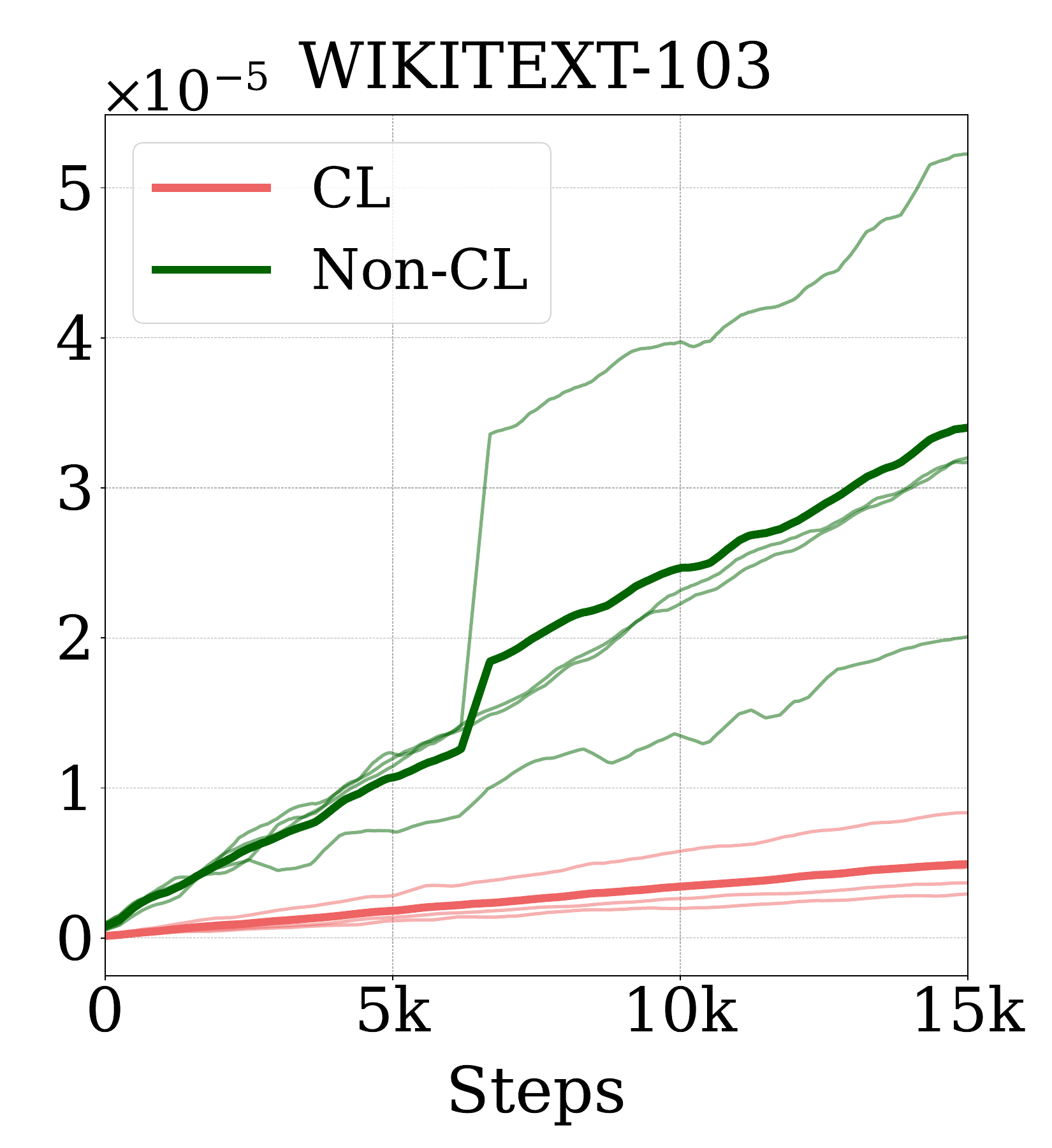} 
        \caption{MHA with RoPE}
        \label{fig:exp_non_conserved_wt103_mha}
    \end{subfigure}
    \begin{subfigure}{0.25\textwidth}
        \centering
        \includegraphics[width=1\textwidth]{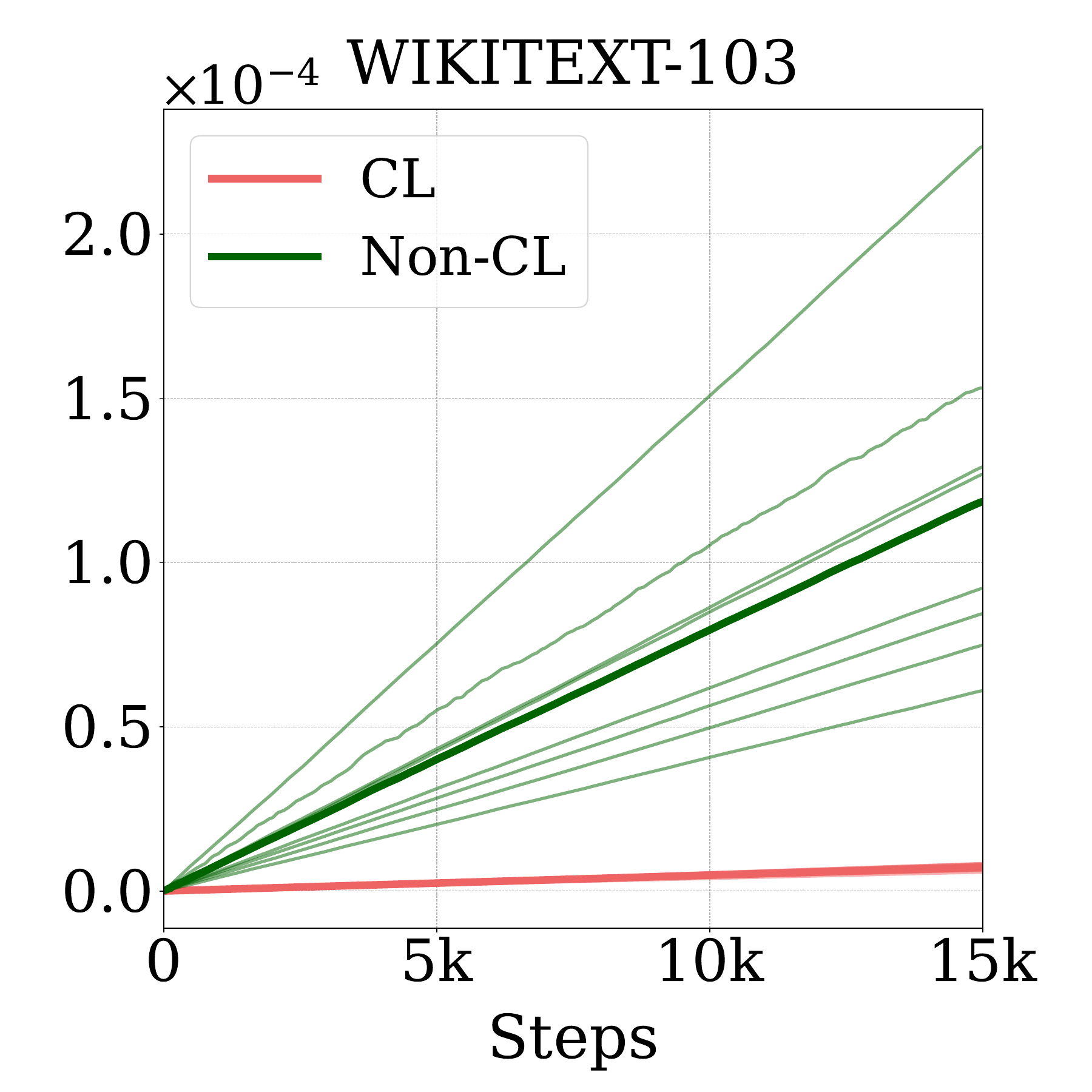} 
        \caption{MoE Gating}
        \label{fig:exp_non_conserved_wt103_moe}
    \end{subfigure}
    \caption{Normalized errors of conserved (CL) and non-conserved quantities (Non-CL) on ImageNet-1K and Wikitext-103.}
    \label{fig:exp_non_conserved}
\end{figure}

\textbf{Results.} Figure~\ref{fig:exp_non_conserved} reveals a stark contrast between conserved and non-conserved quantities. Conservation laws (red lines) remain tightly bounded near zero throughout training, while non-conserved quantities (green lines) exhibit larger deviations. This substantial disparity demonstrates that conservation is a fundamental structural property arising from the optimization geometry, not merely slow parameter drift.